\documentclass[11pt]{article}

\usepackage[preprint]{acl}

\usepackage{times}
\usepackage{latexsym}

\usepackage[T1]{fontenc}

\usepackage[utf8]{inputenc}

\usepackage{microtype}

\usepackage{inconsolata}

\usepackage{graphicx}
\usepackage{amsmath}
\allowdisplaybreaks

\title{\textsc{UAVgent}: A Language-Guided Distributed Control Framework}

\author{Ziyi Zhang$^*$ \and Xiyu Deng$^*$ \and Guannan Qu \and Yorie Nakahira \\
  Department of Electrical and Computer Engineering\\
       Carnegie Mellon University\\
  5000 Forbes Avenue, Pittsburgh, PA 15213 \\
  \texttt{ziyizhan,xiyud,gqu,ynakahira@andrew.cmu.edu}}

\usepackage{etoolbox, amsmath,amsfonts,amsbsy,amsgen,amscd,mathrsfs,amssymb,amsthm,bm, mathtools}
\usepackage{titlesec}
\usepackage{dsfont}
\usepackage{algorithm}
\usepackage{algorithmic}
\usepackage[most]{tcolorbox}
\tcbuselibrary{breakable}
\usepackage{enumitem}
\usepackage{caption,wrapfig}
\usepackage{subcaption}
\hypersetup{citecolor=[RGB]{0,0,255}}
\usepackage{booktabs} 
\usepackage{cite}
\usepackage{footnotehyper}
\makesavenoteenv{table}
\makesavenoteenv{tabular}

\usepackage{natbib}
\usepackage{algorithm}
\usepackage{algorithmic}

\usepackage{cleveref}

\allowdisplaybreaks

\newcommand{\cmd}{\mathrm{cmd}}

\let\epsilon\varepsilon

\newtheorem{theorem}{Theorem}

\newtheorem{corollary}{Corollary}
\newtheorem{remark}{Remark}
\newtheorem{assumption}{Assumption}

\newboolean{showcomment}
\setboolean{showcomment}{true}
\newcommand{\todo}[1]{\ifthenelse{\boolean{showcomment}}{{\textcolor{red}{\bf Todo:  #1}}}{}}
\newcommand{\guannan}[1]{\ifthenelse{\boolean{showcomment}}{{\textcolor{red}{\bf [Guannan:  #1]}}}{}}

\usepackage{tikz}
\usetikzlibrary{matrix,decorations.pathreplacing,calc}

\usepackage{natbib}

\usepackage{multirow}
\usepackage{algorithm,algorithmic}

\pgfkeys{tikz/mymatrixenv/.style={decoration=brace,every left delimiter/.style={xshift=3pt},every right delimiter/.style={xshift=-3pt}}}
\pgfkeys{tikz/mymatrix/.style={matrix of math nodes,left delimiter=[,right delimiter={]},inner sep=2pt,column sep=1em,row sep=0.5em,nodes={inner sep=0pt}}}
\pgfkeys{tikz/mymatrixbrace/.style={decorate,thick}}

\begin{document}
\maketitle
\def\thefootnote{*}\footnotetext{These authors contributed equally to this work.}
\begin{abstract}
We study language-in-the-loop control for multi-drone systems that execute evolving, high-level missions while retaining formal robustness guarantees at the physical layer. We propose a three-layer architecture in which (i) a human operator issues natural-language instructions, (ii) an LLM-based supervisor periodically interprets, verifies, and corrects the commanded task in the context of the latest state and target estimates, and (iii) a distributed inner-loop controller tracks the resulting reference using only local relative information. We derive a theoretical guarantee that characterizes tracking performance under bounded disturbances and piecewise-smooth
references with discrete jumps induced by LLM updates. Overall, our results illustrate how centralized language-based task reasoning can be combined with distributed feedback control to achieve complex behaviors with provable robustness and stability.
\end{abstract}

\section{Introduction}
Multi‑robot systems are increasingly deployed for surveillance, inspection, environmental monitoring, and emergency response~\citep{Queralta20}. Yet specifying and adjusting multi‑agent behavior in dynamic environments can be both challenging and tedious, even for an expert user, as it requires not only relevant domain knowledge to determine the optimal action but also rapid adaptation to an evolving environment. The above problem magnifies when the number of drones increases, as designing control policies for each drone and having them collaborate in a time-varying environment requires designing and modifying many parameters that may be prohibitive for any single individual. 

The development of natural language interfaces offers an approach to lower this barrier by allowing non‑experts to express goals and directly map those objectives into low-level actions, such as movement and formation control~\citep{Liu24,FlockGPT24,venkatesh25}. However, this mapping remains brittle, especially in time-varying environments, because a one-time command might not account for future system changes; and multi‑agent coordination amplifies small specification errors into large collective failures~\citep{YoungScardoviLeonard2010,ShiJohansson2013}.

Large language models (LLMs) have recently improved the fidelity with which free‑form text can be grounded in structured plans and codes~\citep{pmlr-v162-huang22a,pmlr-v205-ichter23a}. However, most language‑to‑robot pipelines are compile‑once: a prompt is translated into a plan or policy and then executed until task completion or failure~\citep{Liang2022CodeAsPolicies}. This open‑loop usage is problematic for operation in a time-varying environment. In such settings, what matters is not only whether the initial plan is reasonable, but also whether the system can continuously incorporate new information to plan for the next action. A human-in-the-loop system offers a natural approach to solving this problem, as a human would be issuing new instructions amid changing objectives or environments. However, in a large multi-agent control system, it may be difficult for a human to pay attention to the entire system all the time. To solve this problem, we propose a mid-layer LLM supervisor as an "assistant" to the human user, which verifies that current execution matches the human’s intent and repair deviations without sacrificing the stability guarantees provided by traditional control. Therefore, the human user would only need to issue a command when a major change occurs, leaving minor adjustments and adaptation to environmental changes to the LLM supervisor.

\paragraph{Contribution} We introduce a hierarchical, human‑in‑the‑loop control framework in which an LLM-agent supervises execution, translating natural language goals to parameters for each agent, while low-level multi-agent control ensures stability and robustness. Our framework designs a centralized LLM-agent that supervises tasks proposed by a human operator and adapts execution to a changing environment, providing smooth, continuous execution without human intervention. By using an LLM supervisor, we ensure that the commands given to the drones are always aligned with the user's intention, and by using conventional multi-agent control, we ensure that the drones faithfully adapt in a dynamic environment with minimal human intervention. In addition, we offer a theoretical guarantee that ensures convergence of the control algorithm and provides an upper bound on the controller error between LLM verifications. 

\section{Related Works}
\label{sec:related_work}
\paragraph{Classical Multi-agent Control.} Multi-agent control has a long and well-established literature~\citep{Luo10,Olfati-Saber06}. A common paradigm is to employ a
centralized planner that generates intermediate goal representations or control inputs, which are then executed by the team~\citep{Hsieh06,Cheah09}.
While central planning can be effective for coordinating complex objectives, it typically depends on a command-and-update
structure and frequent communication to disseminate policies to individual agents. This reliance can become a bottleneck in
real-time settings where the environment and mission objectives evolve over time. On the other hand, some studies have explored decentralized planning in which each agent's policy relies solely on its local observations, thereby mitigating the need for real-time communication and providing robustness guarantees~\citep{Zhu19,Zhao16}. However, those studies often overlook the importance of centralized planning and struggle to achieve complex objectives. Compared to those works, we use an LLM for central planning and periodic auto-correction, but rely on distributed multi-agent control to achieve local robust guarantees. 
\paragraph{LLM-guided Swarm Control.}
Recent work has begun to use LLMs as high-level interfaces for specifying and adapting collective behaviors in robot swarms.
A first line of research treats language as a convenient task specification layer and learns a low-level multi-robot policy that realizes the requested pattern or motion.
For instance, \citet{Liu24} studies language-guided pattern formation using multi-agent reinforcement learning, where natural language prompts describe target spatial arrangements that are then achieved by decentralized robot policies.
Moreover, \citet{FlockGPT24} explores linguistic orchestration of UAV flocking, using an LLM to translate descriptive commands (e.g., formation type and motion style) into flock-level control objectives.
Going beyond a single centralized LLM, \citet{strobel24} propose \emph{LLM2Swarm}, in which LLM components support responsive reasoning, planning, and collaboration for swarms, highlighting the potential of multi-agent LLM architectures to mediate coordination.
Zero-shot approaches have also emerged: \citet{venkatesh25} demonstrates context-aware pattern formation from high-level user intent without task-specific fine-tuning, including setups where formations may be specified via richer modalities.
Despite rapid progress, most existing language-to-swarm pipelines remain largely ''compile-once'' at the high level: a language module produces a formation/plan, which is then executed with limited formal linkage between language-level edits and closed-loop stability.
This gap is particularly acute in dynamic multi-target scenarios, where ambiguous instructions, drifting execution, or transient disturbances can cause cascading coordination errors.
Our work is positioned to address this gap by coupling language-in-the-loop supervision (periodic intent verification and constrained goal edits) with a robust distributed controller that provides explicit stability/robustness guarantees between LLM updates.

\section{Algorithm Design}\label{sec:algorithm}
We propose the framework \textsc{UAVgent}, a three-layer hierarchy that couples
(i) natural-language commands,
(ii) LLM-based runtime verification/correction, and
(iii) a distributed inner-loop controller with robustness guarantees.
The layers operate on separated time scales: the user issues sporadic commands (outer layer),
the LLM performs $K$ periodic supervision at times $\{t_k\}_{k\in \{0,\dots,K-1\}}$ with $t_{k}-t_{k-1} =: \Delta_t$ for all $k \in \{1,\dots,K-1\}$ (middle layer),
and the multi-agent controller runs continuously (inner layer).

In the environment, a human user commands $N_a$ drones in $\mathbb R^d$ to achieve objectives that may involve
$N_b$ independent moving targets (e.g., ``Follow the red car'').

\paragraph{Notation.}
Let $p_a(t)\in\mathbb R^{dN_a}$ denote the stacked \emph{drone} positions and
$p_b(t)\in\mathbb R^{dN_b}$ denote the stacked \emph{target} positions.
Let
\[
p(t):=\begin{bmatrix}p_a(t)\\p_b(t)\end{bmatrix}\in\mathbb R^{dN},\qquad N:=N_a+N_b.
\]
We use $t_k^-$ and $t_k^+$ to denote the instants immediately before and after a supervision time $t_k$.

\paragraph{Radius-induced interaction graph.}
The neighbor graph is induced purely by geometry:
a drone considers another drone/target to be a neighbor if and only if it lies within the observation radius $r$.
Formally, define the (time-varying) edge set
\begin{equation*}
    \begin{split}
        \mathcal E(t):=&\Big\{\{i,j\}:\ i\in\mathcal V_a,\ j\in(\mathcal V_a\cup\mathcal V_b),
        \\
        &\ j\neq i,\ \|p_i(t)-p_j(t)\|\le r\Big\},
    \end{split}
\end{equation*}
i.e., edges are exactly those pairs within range of some drone sensor. Edges can disappear when nodes separate
and can reappear when nodes move back within range. The LLM never outputs edges and never enforces connectivity.
\begin{figure}
    \centering
    \includegraphics[width=\linewidth]{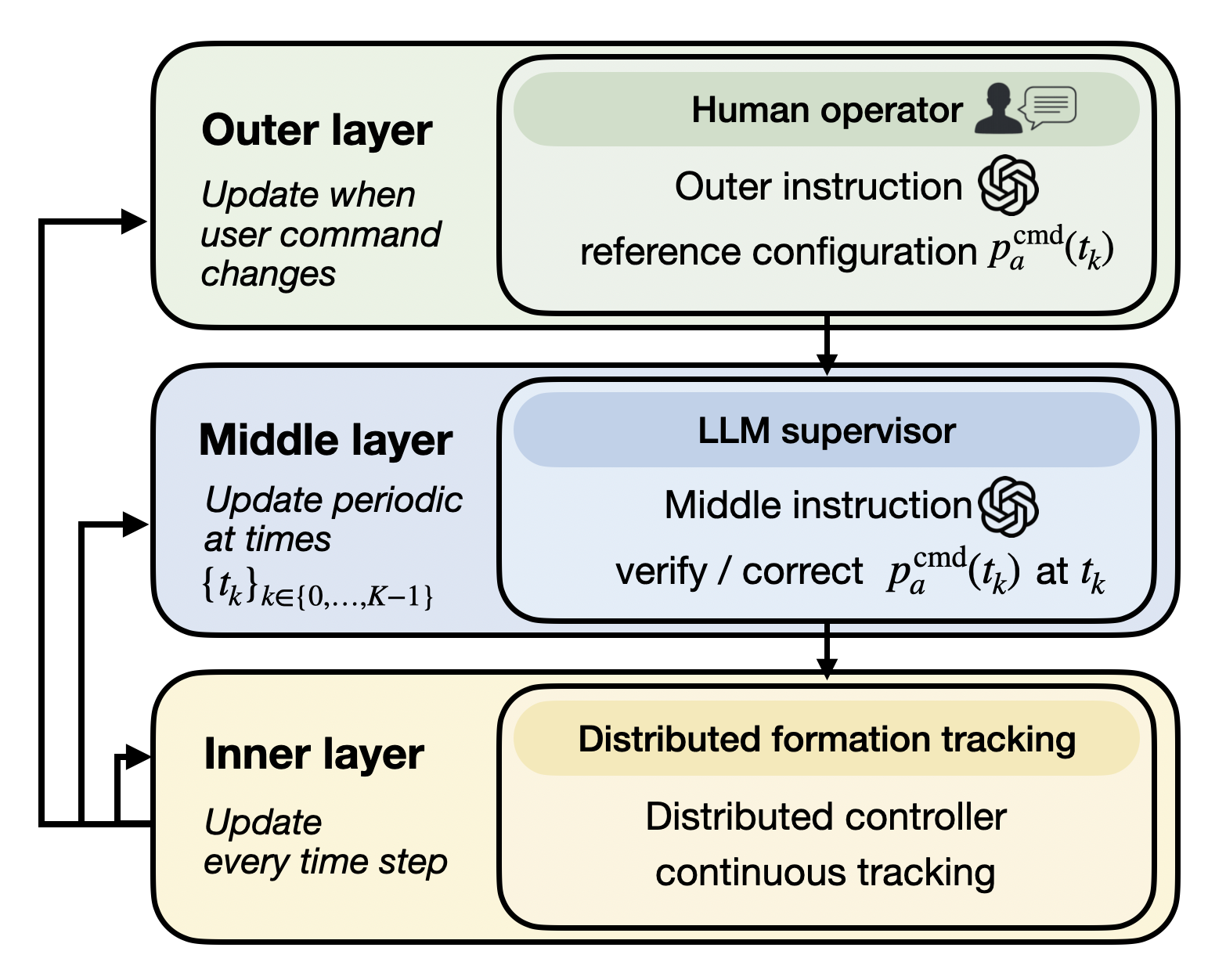}
    \caption{\textsc{UAVgent} hierarchical structure}
    \label{fig:flowchart}
\end{figure}

\subsection{Outer layer: user instruction via LLM}\label{sec:outer_layer_revised2}
At any supervision instants (a subset of $\{t_k\}$), the user provides a natural-language instruction
(e.g., formation type/scale, anchor choice, which target(s) to track).
The LLM translates the instruction into:
\begin{enumerate}
    \item a \emph{nominal} drone reference configuration
    \begin{equation}
        p_a^{\mathrm{cmd}}(t_k)\in\mathbb R^{dN_a}, \label{eq:p_cmd_revised2}
    \end{equation}
    \item an intent summary $\cmd$ stored in memory, including a \emph{reference type}
    \[
        \rho\in\{\textsf{stationary},\ \textsf{track},\ \textsf{search}\}.
    \]
\end{enumerate}
In \textsf{stationary} reference type, the intent is specified in the world frame (fixed anchors/coordinates).
In \textsf{track} reference type, the intent is specified relative to one or more targets.
In \textsf{search} reference type, the intent is at first specified in the world frame within a certain range defined by the user. When any drone finds the target, they are within the observation radius of each other, the search is completed, and all drones form a concentrated formation around the target. 

\subsection{Middle layer: LLM supervision}\label{sec:middle_layer_revised2}
At \emph{every} time $t_k$, the LLM supervisor receives (i) the latest intent $\cmd$ with reference type $\rho$,
and (ii) the state $p(t_k^-)$.
It performs auto-verification:
\begin{enumerate}
    \item If execution is consistent with $\cmd$, it produces no additional input.
    \item If inconsistencies are detected, it outputs an updated \emph{enforced} drone reference
    \begin{equation}
        p_a^r(t_k^+)=V\!\left(\cmd,p(t_k^-)\right), \label{eq:verify_map}
    \end{equation}
    where $V(\cdot)$ is the LLM verification/correction operator.
\end{enumerate}

\paragraph{Reference interpolation.}
On each interval $[t_k,t_{k+1})$, $p_a^r(t)$ is held constant or smoothly interpolated.
We assume $p_a^r$ is piecewise $C^1$ on each such interval and may jump at $t_k$.

\paragraph{Tracking re-grounding for moving targets.}
If $\rho=\textsf{track}$, the outer layer stores a target-referenced intent rather than a fixed world-frame coordinate. In target-referenced intent, if the intent can no longer be fulfilled due to target movement (e.g. tracking a group of target starting to diverge), the system re-grounds the target-referenced intent using the current target estimate
(e.g., via identity association and nearest-target heuristics with cooldown). 
At each check time $t_k$, the LLM verification \eqref{eq:verify_map}
further updates $p_a^r(t_k^+)$ if needed in accordance with the user intent. This prevents stale coordinates from being reissued when targets move between checks.

\paragraph{Range-limited target search.}
When $\rho=\textsf{search}$, the outer layer maintains a description of the sought target together with a bounded region in
which the target is expected to lie within. The LLM supervisor then assigns each drone a search waypoint within this region. After a
drone arrives at its assigned waypoint, it performs a local scan within its sensing (observation) radius; if no detection is
reported, the supervisor reassigns a new waypoint and the process repeats. If a detection occurs, the supervisor broadcasts
the estimated target location, directs all drones to rendezvous at the target, and transitions the swarm to the
user-specified formation.

\subsection{Inner layer: distributed formation tracking on a radius-induced graph}\label{sec:inner_layer_revised2}
Drones obey disturbed single-integrator dynamics
\begin{equation}
\dot p_a(t)=u_a(t)+d_a(t), \label{eq:dynamics_drones_revised2}
\end{equation}
while targets evolve exogenously
\begin{equation}
\dot p_b(t)=v_b(t), \label{eq:dynamics_targets_revised2}
\end{equation}
where $d_a(t)$ is an additive disturbance and $v_b(t)$ is an unknown target velocity.

At time $t$, each drone $i$ uses the instantaneous neighbor set
\[
\mathcal N_i(t):=\{j:\ \{i,j\}\in\mathcal E(t)\}.
\]
The inner loop applies the distributed edge-error feedback
\begin{equation}
u_i(t)=\sum_{j\in\mathcal N_i(t)}\Big((p_j(t)-p_i(t))-(p_j^r(t)-p_i^r(t))\Big), \label{eq:controller_revised2}
\end{equation}
where $p_i^r(t)$ is the enforced drone reference for drone $i$, and for any target $j\in\mathcal V_b$ we set
$p_j^r(t):=p_j(t)$ as targets are not commanded. Thus, target neighbors act as moving anchors, while drone neighbors enforce
relative formation constraints.

\section{Preliminaries}\label{sec:prelim}
This section fixes graph/coordinate notation and derives the closed-loop edge-error dynamics used in
\Cref{sec:guarantee}. The key modeling changes are:
(i) the neighbor graph is induced by the observation radius $r$, and
(ii) targets are not actuated.

\subsection{Radius-induced graph and incidence matrices}
\label{sec:radius_graph}
Let $\mathcal V=\mathcal V_a\cup\mathcal V_b$ be the node set (drones and targets), $|\mathcal V_a|=N_a$, $|\mathcal V_b|=N_b$.
At time $t$, define the undirected neighbor graph
\begin{align*}
    G(t):=&\big(\mathcal V,\mathcal E(t)\big),
    \\
    \mathcal E(t):=&\Big\{\{i,j\}:\ i\in\mathcal V_a,\ j\in(\mathcal V_a\cup\mathcal V_b),\ j\neq i,
    \\
    & \|p_i(t)-p_j(t)\|\le r\Big\}.
\end{align*}
Choose an arbitrary orientation of the edges at each time and let $E(t)\in\mathbb R^{N\times m(t)}$ denote the oriented incidence matrix,
with $m(t):=|\mathcal E(t)|$.
For analytical purposes, between two supervision instances, we assume the edge set is constant on $[t_k,t_{k+1})$ and write
\[
E_k:=E(t)\ \ \text{for}\ \ t\in[t_k,t_{k+1}),\qquad m_k:=m(t).
\]
If the induced graph is disconnected, the analysis below applies on each connected component.

Define the node Laplacian on $[t_k,t_{k+1})$ as
\begin{equation}
L_{n,k}:=E_kE_k^\top. \label{eq:Ln_k_revised2}
\end{equation}
Let the drone-selection matrix $R_a := \begin{bmatrix}
    I_{N_a} & 0
\end{bmatrix} \in \mathbb{R}^{N_a \times N}$ and
\[
S_a:=\mathrm{diag}(\underbrace{1,\dots,1}_{N_a},\underbrace{0,\dots,0}_{N_b}) = R_a^\top R_a,
\]
and define the \emph{drone-row} incidence matrix
\[
E_{a,k}:=R_aE_k\in\mathbb R^{N_a\times m_k}.
\]

Similarly, let the target-selection matrix $R_b := \begin{bmatrix}
    0 & I_{N_b}
\end{bmatrix} \in \mathbb{R}^{N_b \times N}$ and define the \emph{target-row} incidence matrix
\[
E_{b,k}:=R_bE_k\in\mathbb R^{N_b\times m_k}.
\]

\subsection{Edge coordinates and error}
Stack positions as $p(t)=[p_1(t)^\top\ \cdots\ p_N(t)^\top]^\top\in\mathbb R^{dN}$.
Define the stacked edge-relative vector on $[t_k,t_{k+1})$ as
\begin{equation}
z(t):=(E_k^\top\otimes I_d)\,p(t)\in\mathbb R^{dm_k}. \label{eq:z_def_revised2}
\end{equation}

The enforced reference is defined for drones only; targets are not commanded. Accordingly, define the node reference
\[
p^r(t):=\begin{bmatrix}p_a^r(t)\\p_b(t)\end{bmatrix},
\]
and the induced edge-relative reference
\begin{equation}
z^r(t):=(E_k^\top\otimes I_d)\,p^r(t). \label{eq:z_ref_revised2}
\end{equation}
Define the edge-formation error
\begin{equation}
e(t):=z^r(t)-z(t). \label{eq:edge_error_revised2}
\end{equation}
Because $p_b^r(t)=p_b(t)$, the target components cancel and $e(t)$ depends only on drone position error:
\begin{equation}
\label{eq:e_range}
e(t)=(E_{a,k}^\top\otimes I_d)\,\big(p_a^r(t)-p_a(t)\big)\in \mathrm{range}(E_{a,k}^\top\otimes I_d). 
\end{equation}

\subsection{Closed-loop edge-error dynamics}\label{sec:error_dynamics_revised2}
Let drone and target dynamics be \eqref{eq:dynamics_drones_revised2}--\eqref{eq:dynamics_targets_revised2} and define the stacked exogenous input
\[
d(t):=\begin{bmatrix}d_a(t)\\v_b(t)\end{bmatrix}.
\]
The distributed control law \eqref{eq:controller_revised2} can be written in stacked form on $[t_k,t_{k+1})$ as
\begin{equation}
u_a(t)=(E_{a,k}\otimes I_d)\,e(t), \label{eq:stacked_controller_revised2}
\end{equation}
i.e., the node control is computed from incident edge errors and applied only to drones.

On any interval where $p_a^r$ is $C^1$, differentiating \eqref{eq:edge_error_revised2} and using the dynamics yields
\begin{equation}
\dot e(t)=-(L_{e,k}^a\otimes I_d)\,e(t)+w(t), \label{eq:error_dyn}
\end{equation}
where the \emph{actuated edge Laplacian} is
\begin{equation}
L_{e,k}^a:=E_{a,k}^\top E_{a,k}=E_k^\top S_a E_k \succeq 0, \label{eq:Lea_revised2}
\end{equation}
and the additive input is
\begin{equation}
\label{eq:w_def}
w(t):=\dot z^r(t)-(E_k^\top\otimes I_d)\,d(t). 
\end{equation}
Using $p^r=[p_a^r;p_b]$ and $d=[d_a;v_b]$, the $v_b$ terms cancel inside \eqref{eq:w_def}, giving the equivalent expression
\begin{equation}
w(t)=(E_{a,k}^\top\otimes I_d)\,\big(\dot p_a^r(t)-d_a(t)\big)\in \mathrm{range}(E_{a,k}^\top\otimes I_d). \label{eq:w_simplified}
\end{equation}

\paragraph{Jumps at supervision times.}
At $t_k$, $p(t)$ is continuous but $p_a^r$ may jump. If the neighbor graph does not change at $t_k$,
then $e(t_k^+)=e(t_k^-)+\Delta z_k^r$ with $\Delta z_k^r:=z^r(t_k^+)-z^r(t_k^-)$.
If the neighbor graph changes due to the radius rule, then $E_k$ changes and $e(t_k^+)$ is reinitialized.

\section{Theoretical Guarantee}
\label{sec:guarantee}
In this section, we prove the robustness within an arbitrary interval $[t_k,t_{k+1})$. We first introduce a few assumptions for the derivation of a theoretical guarantee. We point out that those assumptions are for the convenience of analysis and are not strictly required by the algorithm, as shown in \Cref{sec:simulation}.

\begin{assumption}
    \label{assumption:general}
    On some fixed $[t_k, t_{k+1})$,
    \begin{enumerate}
        \item[(A1)] $E_k$ is constant, and the enforced edge-relative reference is held constant, i.e. $\dot z^r(t)=0.$
        \item[(A2)] the target motion is \emph{formation-feasible} in the sense that \eqref{eq:compatibility}
        admits at least one solution $\dot p_a^r(\cdot)$ on $[t_k,t_{k+1})$, and that the reference generator
        selects a feasible solution with bounded speed
        $\|\dot p_a^r(t)\|\le \bar v_r$
        for some $\bar v_r<\infty$.
    
        \item[(A3)] The drone disturbance and target velocity are bounded as
        $\|d_a(t)\|\le \bar d_a,\  \|v_b(t)\|\le \bar v_b.$
    \end{enumerate}
\end{assumption}
We note that since $z^r(t)=(E_k^\top\otimes I_d)p^r(t)$ with $p^r(t)=[p_a^r(t);p_b(t)]$, \Cref{assumption:general} (A1) is equivalent to the
        \emph{compatibility condition}
        \begin{equation}
        \label{eq:compatibility}
        \begin{split}
            0=&\dot z^r(t)=(E_k^\top\otimes I_d)
            \begin{bmatrix}\dot p_a^r(t)\\ v_b(t)\end{bmatrix}
            \\
            =&(E_{a,k}^\top\otimes I_d)\dot p_a^r(t)+(E_{b,k}^\top\otimes I_d)v_b(t).
        \end{split}
        \end{equation}
Moreover, given \Cref{assumption:general}, we have that the stacked exogenous input $d(t)=[d_a(t);v_b(t)]$ satisfies
        \[
        \|d(t)\|\le \bar d:=\bar d_a+\bar v_b,
        \]
        and the node-level effective input satisfies
        \[
        \|\dot p_a^r(t)-d_a(t)\|\le \bar \nu := \bar v_r+\bar d_a.
        \]
In most time intervals, the above assumption is natural: when drones maintain their formations, their relative positions should remain relatively constant, so their neighbor relations do not change either. The above assumption would be broken if the movement of targets renders the formation impossible to maintain. As we show in \Cref{sec:sim_police_chase}, the proposed algorithm still handles the scenario with ease. Moreover, as the disturbances in real-world scenarios are usually caused by wind or current, which can be assumed to be upper bounded. For both stationary and tracking tasks, when formation is required, the reference edge-relative positions of the drones should stay constant. 

\begin{theorem}[Exponential ISS of edge-formation error]\label{thm:ISS}
Fix $k$ and consider an interval $[t_k,t_{k+1})$ where \Cref{assumption:general} holds.
Further assume the corresponding graph is connected (or restrict to a connected component).
Let $e(t)$ satisfy the closed-loop edge-error dynamics \eqref{eq:error_dyn} with additive input $w(t)$ given by
\eqref{eq:w_def}.

Let $\lambda_k>0$ denote its smallest strictly positive eigenvalue of the edge Laplacian:
\begin{equation}
\lambda_k \;:=\;\lambda_{\min}^+\!\left(L_{e,k}^a\right)
          \;=\;\lambda_{\min}^+\!\left(E_{a,k}E_{a,k}^\top\right).
\label{eq:lambda_k_def}
\end{equation}
Then, for any $t\in[t_k,t_{k+1})$,
\begin{equation}
\|e(t)\|\le e^{-\lambda_k(t-t_k)}\|e(t_k^+)\|+\int_{t_k}^{t}e^{-\lambda_k(t-s)}\|w(s)\|\,ds. \label{eq:ISS_bound}
\end{equation}
In particular, if $\|w(t)\|\le \bar w$ on $[t_k,t_{k+1})$ then
\begin{equation}
\|e(t)\|\;\le\;e^{-\lambda_k(t-t_k)}\|e(t_k^+)\|
+\frac{\bar w}{\lambda_k}\Big(1-e^{-\lambda_k(t-t_k)}\Big). \label{eq:ISS_uniform}
\end{equation}
\end{theorem}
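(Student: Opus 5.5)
The plan is a Lyapunov/variation-of-constants argument carried out on the invariant subspace $\mathcal R_k:=\mathrm{range}(E_{a,k}^\top\otimes I_d)$. The matrix $L_{e,k}^a\otimes I_d$ alone is only positive semidefinite, so the whole argument rests on confining $e$ to this subspace, where $L_{e,k}^a\otimes I_d$ is positive definite.

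First I would show that $e(t)\in\mathcal R_k$ for all $t\in[t_k,t_{k+1})$. At $t_k^+$ this holds by \eqref{eq:e_range}. By \eqref{eq:w_simplified}, the input satisfies $w(t)\in\mathcal R_k$. Moreover, $L_{e,k}^a=E_{a,k}^\top E_{a,k}$ maps into $\mathrm{range}(E_{a,k}^\top)$. Hence the vector field in \eqref{eq:error_dyn} is tangent to $\mathcal R_k$, and by uniqueness of solutions $e$ stays in $\mathcal R_k$. (Alternatively, \eqref{eq:e_range} holds pointwise for every $t$, which gives the same conclusion directly.)

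Next I would record the spectral fact. The nonzero eigenvalues of $E_{a,k}^\top E_{a,k}$ and $E_{a,k}E_{a,k}^\top$ coincide, which gives the second equality in \eqref{eq:lambda_k_def}. Also $\mathrm{range}(E_{a,k}^\top)=\ker(E_{a,k}^\top E_{a,k})^\perp$ is exactly the span of the eigenvectors with positive eigenvalues. It follows that for all $x\in\mathcal R_k$,
\[
x^\top(L_{e,k}^a\otimes I_d)x\ge\lambda_k\|x\|^2 .
\]
Connectivity together with the presence of at least one drone ensures $E_{a,k}\neq0$, so $\lambda_k$ is well defined and positive. The Kronecker factor $I_d$ only replicates eigenvalues.

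Then I would take $V=\tfrac12\|e\|^2$. Along \eqref{eq:error_dyn},
\[
\dot V=-e^\top(L_{e,k}^a\otimes I_d)e+e^\top w\le-\lambda_k\|e\|^2+\|e\|\|w\|.
\]
To avoid the nondifferentiability of $\|e\|$ at zero, I would set $\eta_\epsilon=\sqrt{\|e\|^2+\epsilon}$. This gives $\dot\eta_\epsilon\le-\lambda_k\|e\|^2/\eta_\epsilon+\|w\|$, and hence
\[
\dot\eta_\epsilon\le-\lambda_k\eta_\epsilon+\lambda_k\epsilon/\eta_\epsilon+\|w\|\le -\lambda_k\eta_\epsilon+\lambda_k\sqrt\epsilon+\|w\|.
\]
The comparison lemma (Grönwall) then yields a bound on $\eta_\epsilon(t)$ of the form \eqref{eq:ISS_bound} plus an $O(\sqrt\epsilon)$ term. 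Letting $\epsilon\to0$ gives \eqref{eq:ISS_bound}.

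An alternative route, which I would mention as a check, is the explicit solution
\[
e(t)=e^{-(L\otimes I)(t-t_k)}e(t_k^+)+\int_{t_k}^t e^{-(L\otimes I)(t-s)}w(s)\,ds ,
\]
using $\|e^{-(L\otimes I)\tau}x\|\le e^{-\lambda_k\tau}\|x\|$ for $x\in\mathcal R_k$, which holds by the eigen-decomposition. This requires only the invariance of $\mathcal R_k$ under the semigroup and that both $e(t_k^+)$ and $w(s)$ lie in $\mathcal R_k$, which was established in the first step.

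Finally, \eqref{eq:ISS_uniform} follows by bounding $\|w(s)\|\le\bar w$ and integrating:
\[
\int_{t_k}^t e^{-\lambda_k(t-s)}\,ds=\frac{1-e^{-\lambda_k(t-t_k)}}{\lambda_k}.
\]
The main obstacle is the subspace restriction. The inequality fails for components of $e$ in $\ker L_{e,k}^a$, so it is essential to use \eqref{eq:e_range} and \eqref{eq:w_simplified}, rather than the raw form \eqref{eq:w_def}, to guarantee that no kernel component is ever excited. Assumption (A1) enters only through the constancy of $E_k$, which makes the dynamics linear time-invariant on the interval.
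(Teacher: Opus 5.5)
Your proof is correct, and the route you list as an ``alternative'' is exactly the paper's proof. The paper confines $e$ to $\mathrm{range}(E_{a,k}^\top\otimes I_d)$ via \eqref{eq:e_range} and \eqref{eq:w_simplified}, and obtains the Rayleigh bound from $\mathrm{range}(E_{a,k}^\top)=\ker(L_{e,k}^a)^\perp$. It then proves the contraction $\|e^{-(L_{e,k}^a\otimes I_d)t}x\|\le e^{-\lambda_k t}\|x\|$ on that subspace and takes norms in the variation-of-constants formula.

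Your primary route replaces this last step with a differential inequality for $\eta_\epsilon=\sqrt{\|e\|^2+\epsilon}$ plus the comparison lemma, which is genuinely different. It checks out: the bound $\lambda_k\epsilon/\eta_\epsilon\le\lambda_k\sqrt{\epsilon}$ holds and the limit $\epsilon\to0$ is harmless.

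What the Lyapunov route buys is that it uses the Rayleigh bound only pointwise along the trajectory. The argument therefore invokes only $e(t)\in\mathcal R_k$, which \eqref{eq:e_range} gives directly. It never needs $w(s)\in\mathcal R_k$, because Cauchy--Schwarz bounds $e^\top w$ whatever kernel component $w$ might have. It also never uses an explicit solution formula, so it would extend to a time-varying $L(t)$ as long as $e(t)$ stays in a subspace where $e^\top L(t)e\ge\lambda\|e\|^2$ holds uniformly.

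The semigroup route needs both $e(t_k^+)$ and every $w(s)$ to lie in $\mathcal R_k$, since kernel components are not damped by $e^{-(L_{e,k}^a\otimes I_d)t}$, and it relies on the dynamics being LTI on the interval. In exchange, once the contraction estimate is in hand, it gives \eqref{eq:ISS_bound} in one line with no $\epsilon$-regularization or comparison lemma. Consequently, your closing remark that \eqref{eq:w_simplified} is essential holds for the semigroup route; for your main Lyapunov argument, \eqref{eq:e_range} alone suffices.
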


The proof of \Cref{thm:ISS} is deferred to \Cref{appendix:proof_main}. We point out that at an LLM supervision instant $t_k$, if the neighbor graph does not change (so $E_k$ is unchanged across $t_k$) and
$z^r$ jumps by $\Delta z^r_k:=z^r(t_k^+)-z^r(t_k^-)$, then
\begin{equation}
\label{eq:jump}
\begin{split}
    &e(t_k^+)=e(t_k^-)+\Delta z^r_k,\\
    &\|e(t_k^+)\|\le \|e(t_k^-)\|+\|\Delta z^r_k\|.
\end{split}
\end{equation}
If instead the radius-induced graph changes at $t_k$, then $e(t_k^+)$ is reinitialized in the new edge coordinates as stated in
\Cref{sec:error_dynamics_revised2} and serves as the initial condition for \eqref{eq:ISS_bound} on the next interval.

\subsection{Checking-period guideline}\label{sec:checking_period_guideline}

Define the drone reference-tracking error
\[
\tilde p_a(t):=p_a^r(t)-p_a(t)\in\mathbb R^{dN_a},\qquad t\in[t_k,t_{k+1}).
\]
Using the drone dynamics \eqref{eq:dynamics_drones_revised2}, the stacked controller \eqref{eq:stacked_controller_revised2},
and the identity $e(t)=(E_{a,k}^\top\otimes I_d)\tilde p_a(t)$ from \eqref{eq:e_range}, we obtain
\begin{equation}
\label{eq:node_error_dynamics}
    \begin{split}
        \dot{\tilde p}_a(t)
        &=\dot p_a^r(t)-\dot p_a(t)
        = \dot p_a^r(t)-u_a(t)-d_a(t) \nonumber\\
        &= -\big(E_{a,k}\otimes I_d\big)e(t) + \big(\dot p_a^r(t)-d_a(t)\big)
        \\
        &= -\big(E_{a,k}E_{a,k}^\top\otimes I_d\big)\tilde p_a(t) + \nu(t),
    \end{split}
\end{equation}
where $\nu(t):=\dot p_a^r(t)-d_a(t).$

If the connected component under consideration contains at least one target (equivalently, the restriction is \emph{anchored} so that
$E_{a,k}E_{a,k}^\top\succ 0$ on that component), then
the semigroup $e^{-(E_{a,k}E_{a,k}^\top\otimes I_d)t}$ contracts at rate at least $\lambda_k$ on $\mathbb R^{dN_a}$.
Applying variation-of-constants to \eqref{eq:node_error_dynamics} yields, for any $t\in[t_k,t_{k+1})$,
\begin{equation}
\begin{split}
    \|\tilde p_a(t)\|
    \le& e^{-\lambda_k(t-t_k)}\|\tilde p_a(t_k^+)\|
    +\int_{t_k}^{t}e^{-\lambda_k(t-s)}\|\nu(s)\|ds.
\end{split}
\label{eq:node_ISS_bound}
\end{equation}

In particular, if $\|\nu(t)\|\le \bar \nu$ on $[t_k,t_{k+1})$, then

\begin{equation}
\|\tilde p_a(t)\|
\le e^{-\lambda_k(t-t_k)}\|\tilde p_a(t_k^+)\|
+\frac{\bar \nu}{\lambda_k}\Big(1-e^{-\lambda_k(t-t_k)}\Big),
\label{eq:node_ISS_uniform}
\end{equation}

\begin{corollary}[LLM relative accuracy for edge-level tolerance]
\label{cor:LLM_accuracy}
Consider interval $[t_k,t_{k+1})$ satisfying \Cref{assumption:general}. Denote
$\bar w := \|E_k^\top\otimes I_d\|\,\bar d$ so that $\|w(t)\|\le \bar w$.
Further assume that the LLM grounding error in edge coordinates is uniformly bounded by $\varepsilon_z\ge 0$:
\[
\sup_{t\in[t_k,t_{k+1})}\|z^r(t)-z^\star(t)\|\le \varepsilon_z.
\]
Fix a desired edge tolerance $\delta_z>0$ at time $t_{k+1}^-$:
$\|z(t_{k+1}^-)-z^\star(t_{k+1}^-)\|\le \delta_z$.
If
\[
\eta^{(z)}_{\max}(\delta_z,\Delta_t,\varepsilon_z)
:= e^{\lambda_k\Delta_t}(\delta_z-\varepsilon_z) - \frac{\bar w}{\lambda_k}\bigl(e^{\lambda_k\Delta_t}-1\bigr)\ge 0,
\]
and the post-check edge error satisfies $\|e(t_k^+)\|\le \eta^{(z)}_{\max}(\delta_z,\Delta_t,\varepsilon_z)$,
then $\|z(t_{k+1}^-)-z^\star(t_{k+1}^-)\|\le \delta_z$.
\end{corollary}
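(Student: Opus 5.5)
The plan is to split the target error with the triangle inequality, control the tracking part with the uniform ISS bound of \Cref{thm:ISS}, and then invert that bound to obtain the admissible post-check error. At $t=t_{k+1}^-$, write
\[
z(t)-z^\star(t) = \bigl(z(t)-z^r(t)\bigr) + \bigl(z^r(t)-z^\star(t)\bigr) = -e(t) + \bigl(z^r(t)-z^\star(t)\bigr),
\]
which gives
\[
\|z(t_{k+1}^-)-z^\star(t_{k+1}^-)\|\le \|e(t_{k+1}^-)\|+\varepsilon_z .
\]
The second term is bounded directly by the assumed grounding-error bound, taken as a supremum over $[t_k,t_{k+1})$ and hence valid at the left limit. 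It therefore suffices to show $\|e(t_{k+1}^-)\|\le \delta_z-\varepsilon_z$.

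First I check that $\|w(t)\|\le \bar w$ on the interval. By \Cref{assumption:general} (A1), $\dot z^r=0$, so \eqref{eq:w_def} reduces to $w(t)=-(E_k^\top\otimes I_d)d(t)$. Hence $\|w(t)\|\le \|E_k^\top\otimes I_d\|\,\|d(t)\|\le \|E_k^\top\otimes I_d\|\,\bar d=\bar w$, using the bound $\|d(t)\|\le\bar d$ that follows from (A3).

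Next I apply \eqref{eq:ISS_uniform} of \Cref{thm:ISS} on $[t_k,t)$ and let $t\uparrow t_{k+1}$. Since $t_{k+1}-t_k=\Delta_t$ and the right-hand side is continuous in $t$, this gives
\[
\|e(t_{k+1}^-)\|\le e^{-\lambda_k\Delta_t}\|e(t_k^+)\|+\frac{\bar w}{\lambda_k}\bigl(1-e^{-\lambda_k\Delta_t}\bigr).
\]

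Finally I substitute the hypothesis $\|e(t_k^+)\|\le \eta^{(z)}_{\max}$. Multiplying the definition of $\eta^{(z)}_{\max}$ by $e^{-\lambda_k\Delta_t}$ gives
\[
e^{-\lambda_k\Delta_t}\eta^{(z)}_{\max}=(\delta_z-\varepsilon_z)-\frac{\bar w}{\lambda_k}\bigl(1-e^{-\lambda_k\Delta_t}\bigr).
\]
Adding the forcing term back, the bound on $\|e(t_{k+1}^-)\|$ collapses exactly to $\delta_z-\varepsilon_z$. The condition $\eta^{(z)}_{\max}\ge0$ is only needed to make the hypothesis on $\|e(t_k^+)\|$ satisfiable; since norms are nonnegative, it also forces $\delta_z\ge\varepsilon_z$ in effect. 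Combining with the triangle-inequality step yields $\|z(t_{k+1}^-)-z^\star(t_{k+1}^-)\|\le\delta_z$.

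The main obstacle is only bookkeeping rather than anything substantive. One point is the passage to the left limit $t_{k+1}^-$: this needs $e$ to be continuous on the open interval, which follows because $E_k$ is constant and no jumps occur inside $[t_k,t_{k+1})$. The other is that \Cref{thm:ISS} requires the graph to be connected, or the argument to be restricted to a connected component; I would simply inherit that standing hypothesis.
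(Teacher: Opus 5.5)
Your proposal is correct and takes essentially the same route as the paper. You bound $\|z(t_{k+1}^-)-z^\star(t_{k+1}^-)\|\le \|e(t_{k+1}^-)\|+\varepsilon_z$ by the triangle inequality, control $\|e(t_{k+1}^-)\|$ with the uniform ISS estimate of \Cref{thm:ISS} at $t-t_k=\Delta_t$, and invert that bound to recover exactly $\eta^{(z)}_{\max}$. Your extra checks, namely that (A1) and (A3) give $\|w(t)\|\le\bar w$ and the passage to the left limit at $t_{k+1}^-$, are details the paper leaves implicit.
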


\begin{remark}[Feasibility and interpretation]\label{rem:node_feasibility}
The threshold $\eta_k^{\max}(\delta_z,\Delta_t,\varepsilon_z)$ can be negative if $\delta$ is too small relative to
(i) the LLM grounding error $\varepsilon_z$, (ii) the effective input level $\bar w$, and (iii) the check interval $\Delta_t$.
In particular, even if the controller could instantaneously enforce $\tilde p_a(t_k^+)=0$, the bound
\eqref{eq:node_ISS_uniform} implies the unavoidable contribution
$\frac{\bar w}{\lambda_k}(1-e^{-\lambda_k\Delta_t})$, so a necessary condition for feasibility is
\[
\varepsilon_z+\frac{\bar w}{\lambda_k}\big(1-e^{-\lambda_k\Delta_t}\big)\le \delta_z.
\]
\end{remark}

The above results extend to a longer time horizon as long as the user intent stays consistent. We defer the detailed theorem and discussions to \Cref{sec:horizon_bounds}.

\section{Simulation}
\label{sec:simulation}
In this section, we evaluate the proposed framework in two representative case studies. The results demonstrate how high-level natural-language commands are translated into coordinated multi-drone behaviors, and how LLM-based supervision enables robust adaptation to dynamic environments, as detailed below. 
\subsection{Police--chasing scenario}
\label{sec:sim_police_chase}
We simulate 3 police-assisted pursuit scenarios, each with a different map, user command, number of drones, or configuration of suspect cars. The scenarios are represented in a three-dimensional environment ($d=3$), where a human operator issues
high-level natural-language commands and the proposed three-layer architecture coordinates a team of drones to track three
suspect vehicles (targets). Drone positions satisfy $p_i(t)\in\mathbb R^3$ and are initialized randomly, as in \Cref{fig:case1a}; the suspect cars evolve on the ground
plane, while drones maintain a commanded operating altitude band. 

\begin{figure*}[!ht]
\begin{subfigure}{.32\linewidth}
    \includegraphics[width=\linewidth]{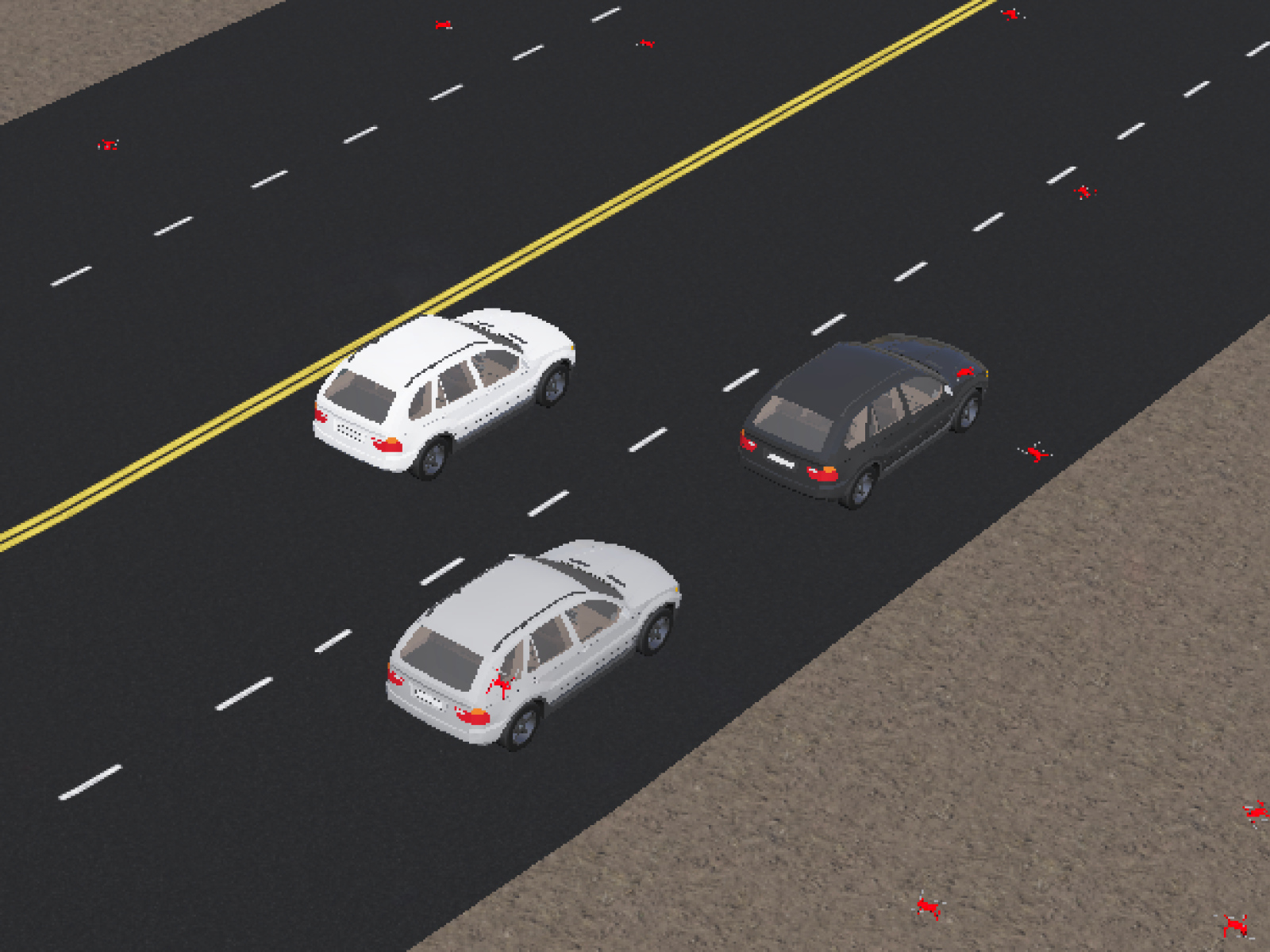}
\caption{Scatter initialization}
\label{fig:case1a}
\end{subfigure}
\hfill
\begin{subfigure}{.32\linewidth}
    \includegraphics[width=\linewidth]{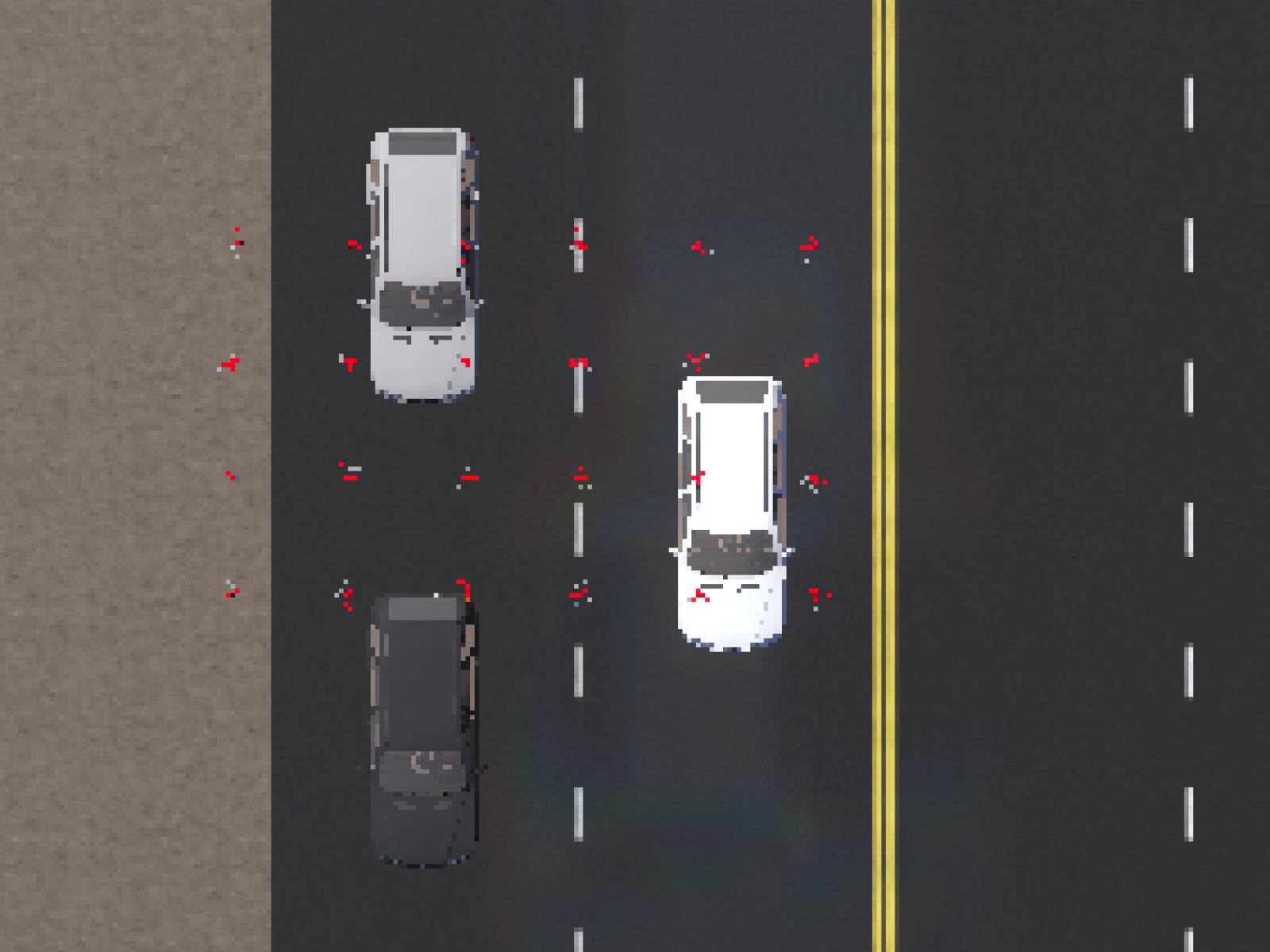}
\caption{Group tracking formation}
\label{fig:case1b}
\end{subfigure}
\hfill
\begin{subfigure}{.32\linewidth}
    \includegraphics[width=\linewidth]{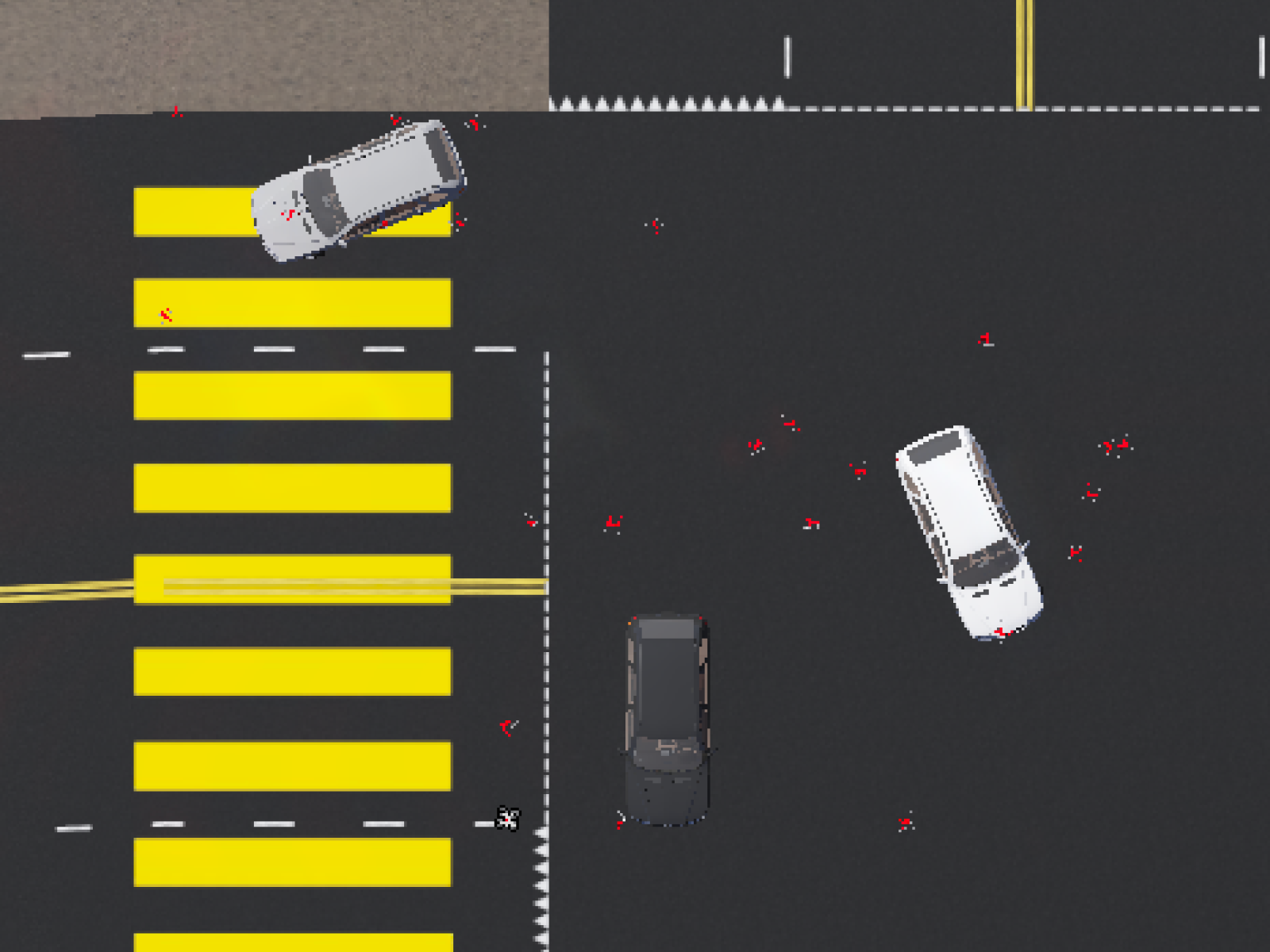}
    \caption{Drone formation after targets split}
\label{fig:case1c}
\end{subfigure}

\begin{subfigure}{.32\linewidth}
\includegraphics[width=\linewidth]{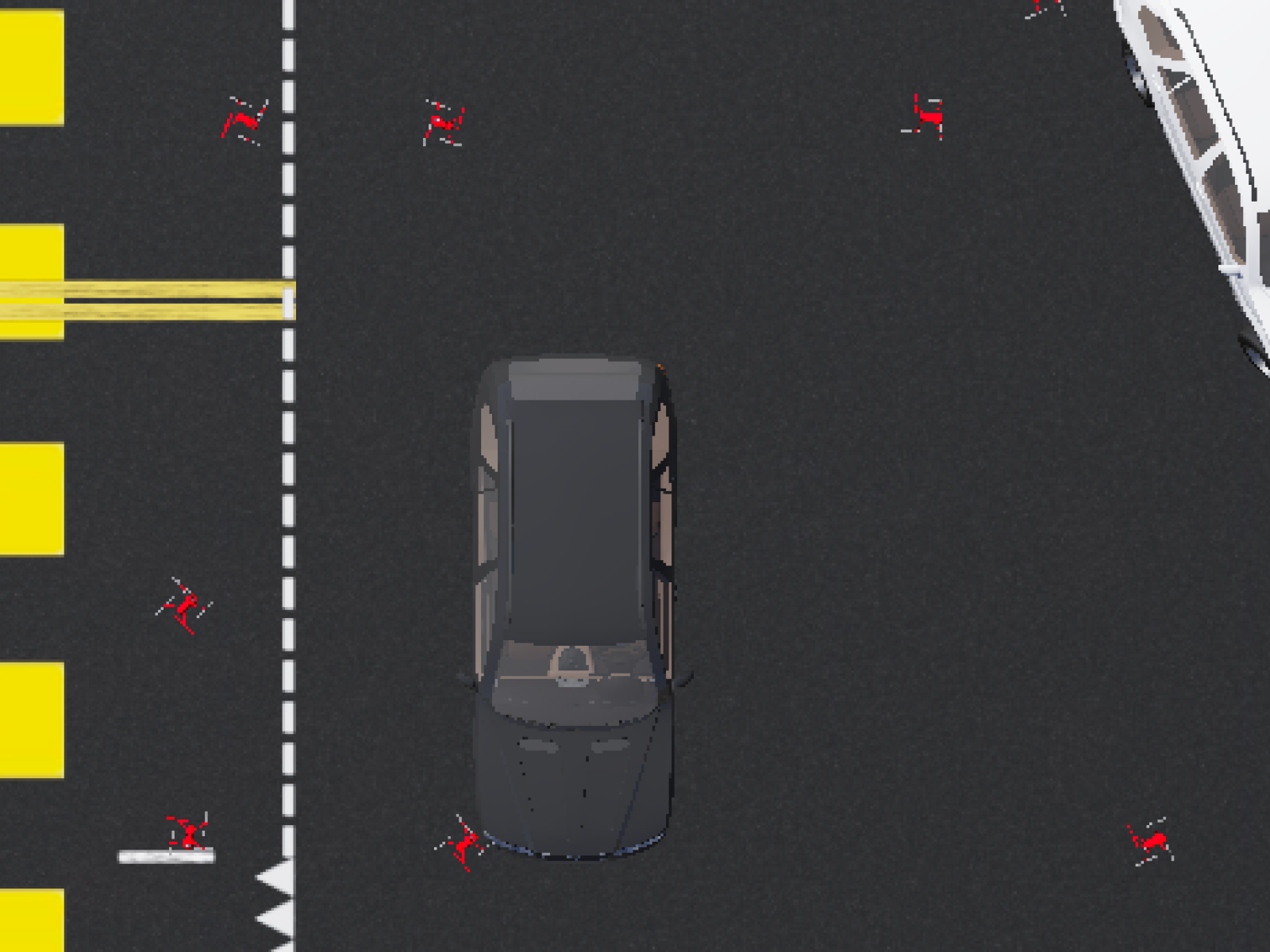}
\caption{Car 1 after targets split}
\label{fig:case1d}
\end{subfigure}
\hfill
\begin{subfigure}{.32\linewidth}
   \includegraphics[width=\linewidth]{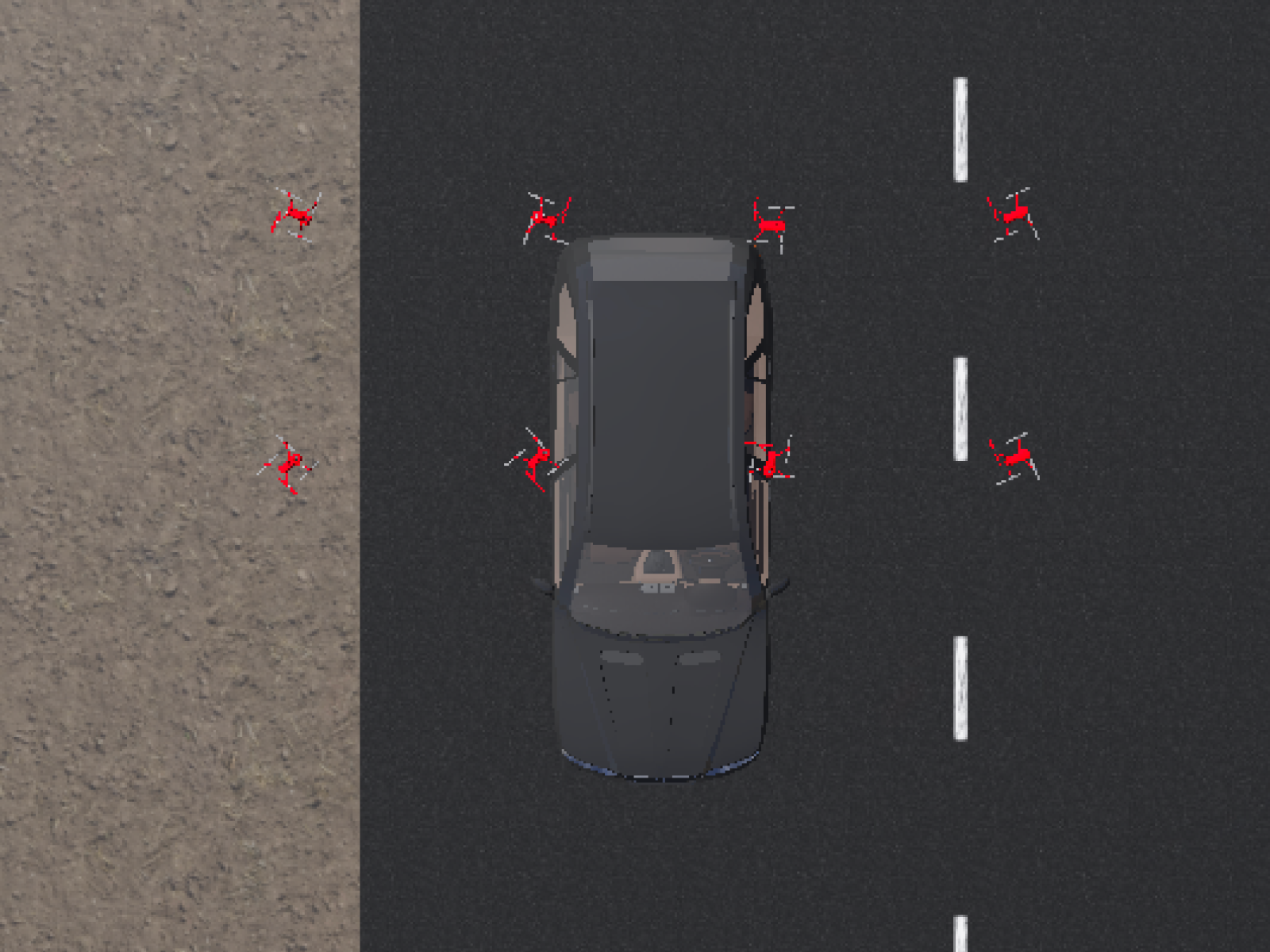}
\caption{Corrected formation on car 1}
\label{fig:case1g}
\end{subfigure}
 \hfill
 \begin{subfigure}{.32\linewidth}
    \includegraphics[width=\linewidth]{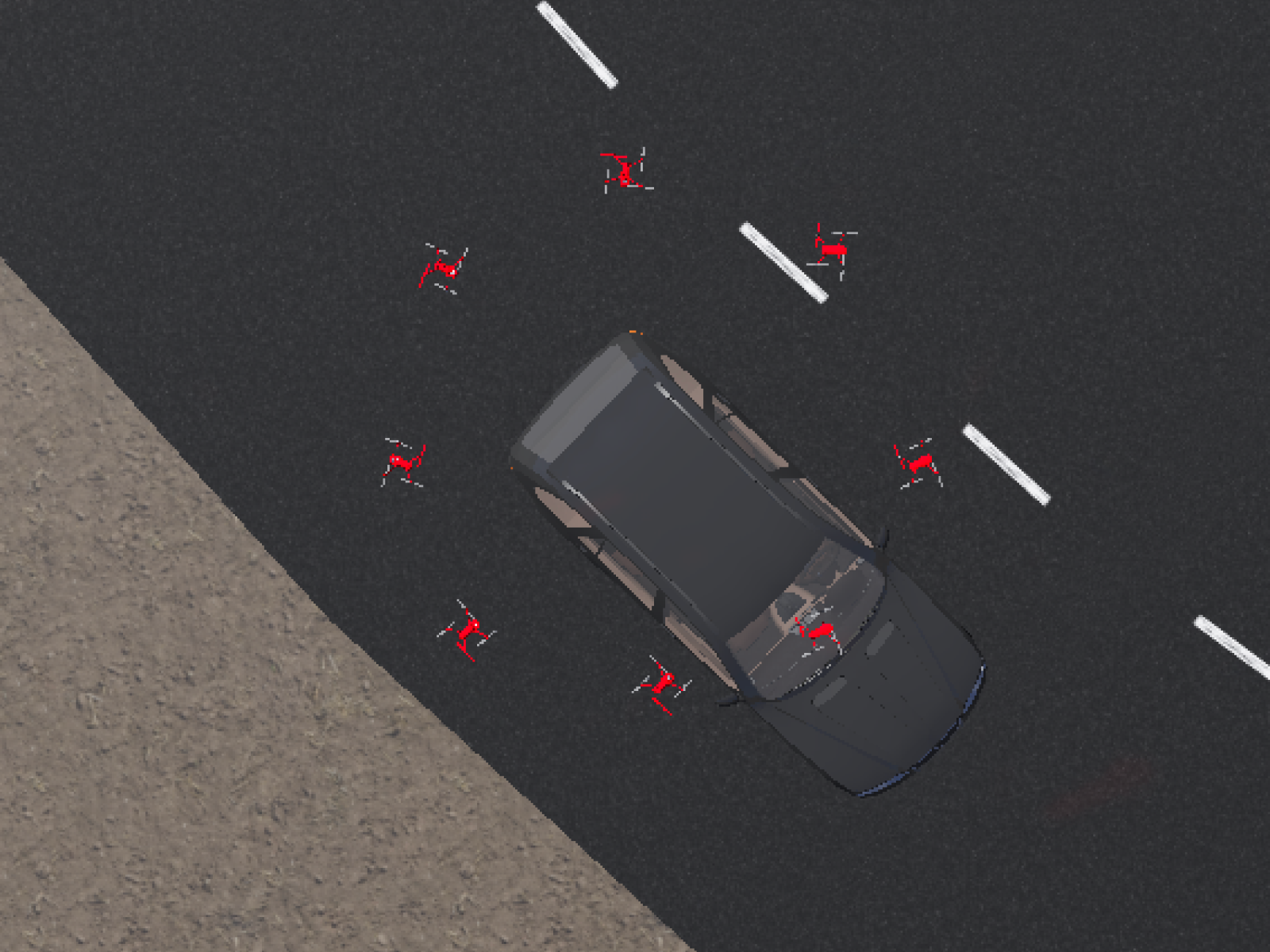}
\caption{Reformation on car 1}
 \label{fig:case1j}
 \end{subfigure}


\caption{In~\Cref{fig:case1a}, 24 drones are initially scattered around the target vehicles. The user commands the drones to form a grid and track the vehicles (\Cref{fig:case1b}). As the vehicles diverge, the inner-layer controller splits the swarm to track each target, temporarily degrading the formation (\Cref{fig:case1c}). The LLM supervisor then recenters and rebalances the sub-swarms (\Cref{fig:case1g}). Finally, the user requests reformation into a circle, square, and cross (\Cref{fig:case1j}). Additional details are shown in~\Cref{fig:app_figure}.}
\label{fig:figure}
\end{figure*}

\paragraph{Phase I: convoy tracking in grid formation.}
Initially, the officer commands the swarm to \emph{follow the group of cars in a [grid] formation}. The system instantiates the specified formation (grid) and tracks the suspects as a convoy while the cars remain sufficiently close, see \Cref{fig:case1b}.

\paragraph{Phase II: suspects split and local reassignment.}
After the suspects detect surveillance, the three cars split and move in different directions. Once the inter-target
separation exceeds a preset threshold, the inner layer performs a fast, local reassignment based on target-proximity, producing three pursuit sub-swarms. During this transient, the formation quality degrades:
agents migrate between sub-swarms, the formation geometry (grid) is temporarily distorted, and the three groups can become imbalanced in size, as in \Cref{fig:case1d} and \Cref{fig:app_figure}.

\paragraph{Phase III: mid-layer verification and regrouping.}
At the subsequent verification instances, the mid-layer LLM supervisor detects distorted grids and uneven group sizes, then applies automatic correction: it re-grounds the command using the \emph{current} target estimates, rebalances the drone--target assignments to
yield roughly equal subgroup sizes, and regenerates clean per-target formation references (grid). The inner layer then recenters each
sub-swarm to an organized grid while continuing to track the assigned car as in \Cref{fig:case1g} and \Cref{fig:app_figure}.

\paragraph{Phase IV: human formation switching for target disambiguation.}
Finally, the officer issues a command to visually distinguish suspects: \emph{one group forms a [circle],
one forms a [square], and one forms a [cross]}. The supervisor updates the per-target formation templates accordingly, and the
inner layer drives each subgroup to its new formation while preserving target tracking, as shown in \Cref{fig:case1j} and \Cref{fig:app_figure}.

\paragraph{Results and Discussions}
In the police-chasing scenarios, we focus on testing the target tracking ability of the proposed framework and how the LLM supervision helps in correcting formation-command mismatch caused by the change of system dynamics when the suspects split. We see that when the targets initially split, the inner layer distributed control helps to roughly distribute drones to each target (\Cref{fig:case1b}), while the mid-layer LLM auto-verification gives a relative position assignment to each drone more aligned with human-intent (\Cref{fig:case1g} and \Cref{fig:app_figure}). In the entire process, there is no involvement of human user giving command for adaptation to environmental changes, and the command they give demonstrates new user intentions, which are then incorporated into LLM supervision in the future. Overall, the LLM supervision performed as intended and greatly reduced the human involvement in the process. 

\begin{figure*}[!ht]
\begin{subfigure}{.32\linewidth}
    \includegraphics[width=\linewidth]{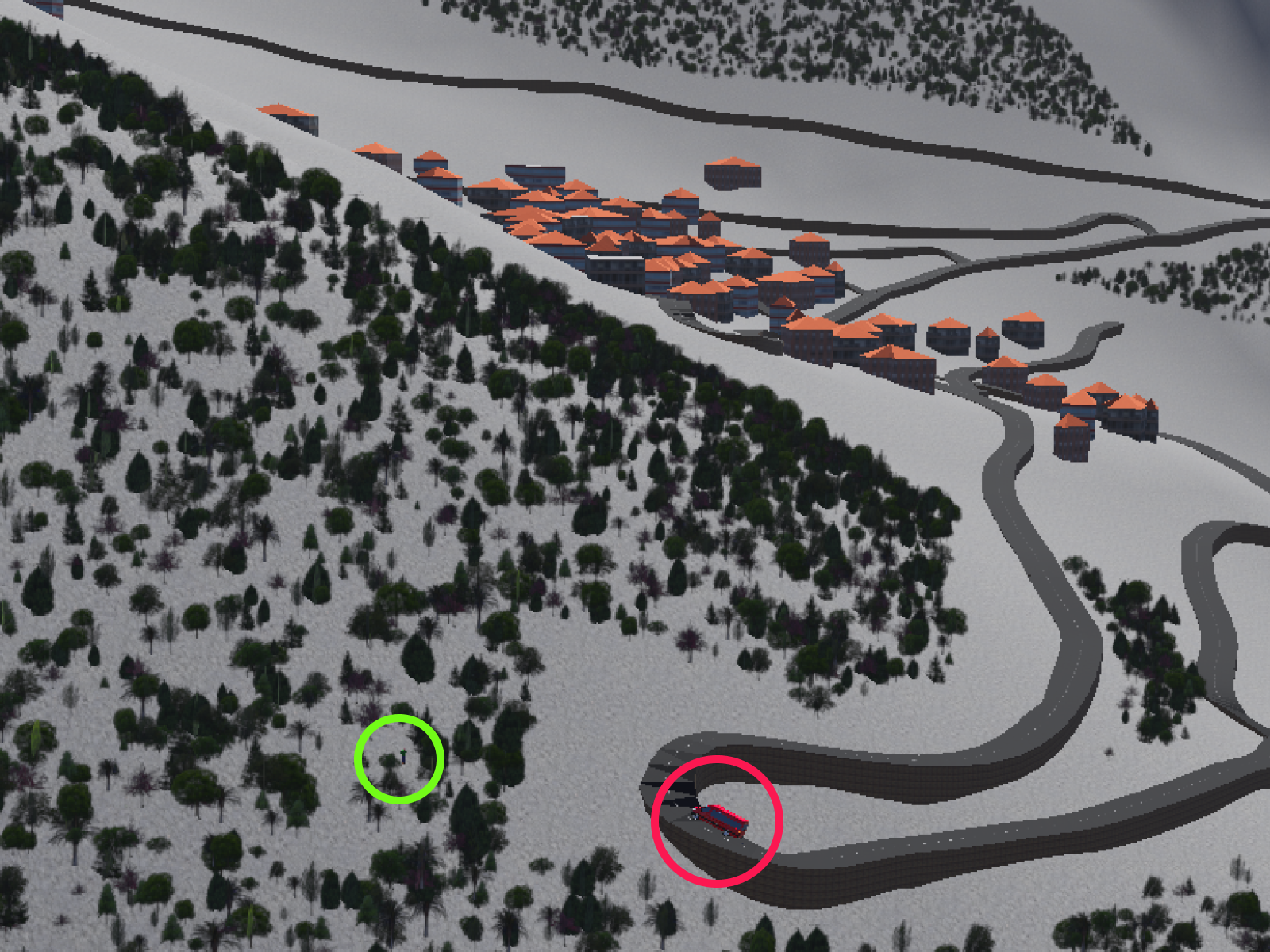}
\caption{Scenario illustration}
\label{fig:case2a}
\end{subfigure}
\hfill
\begin{subfigure}{.32\linewidth}
    \includegraphics[width=\linewidth]{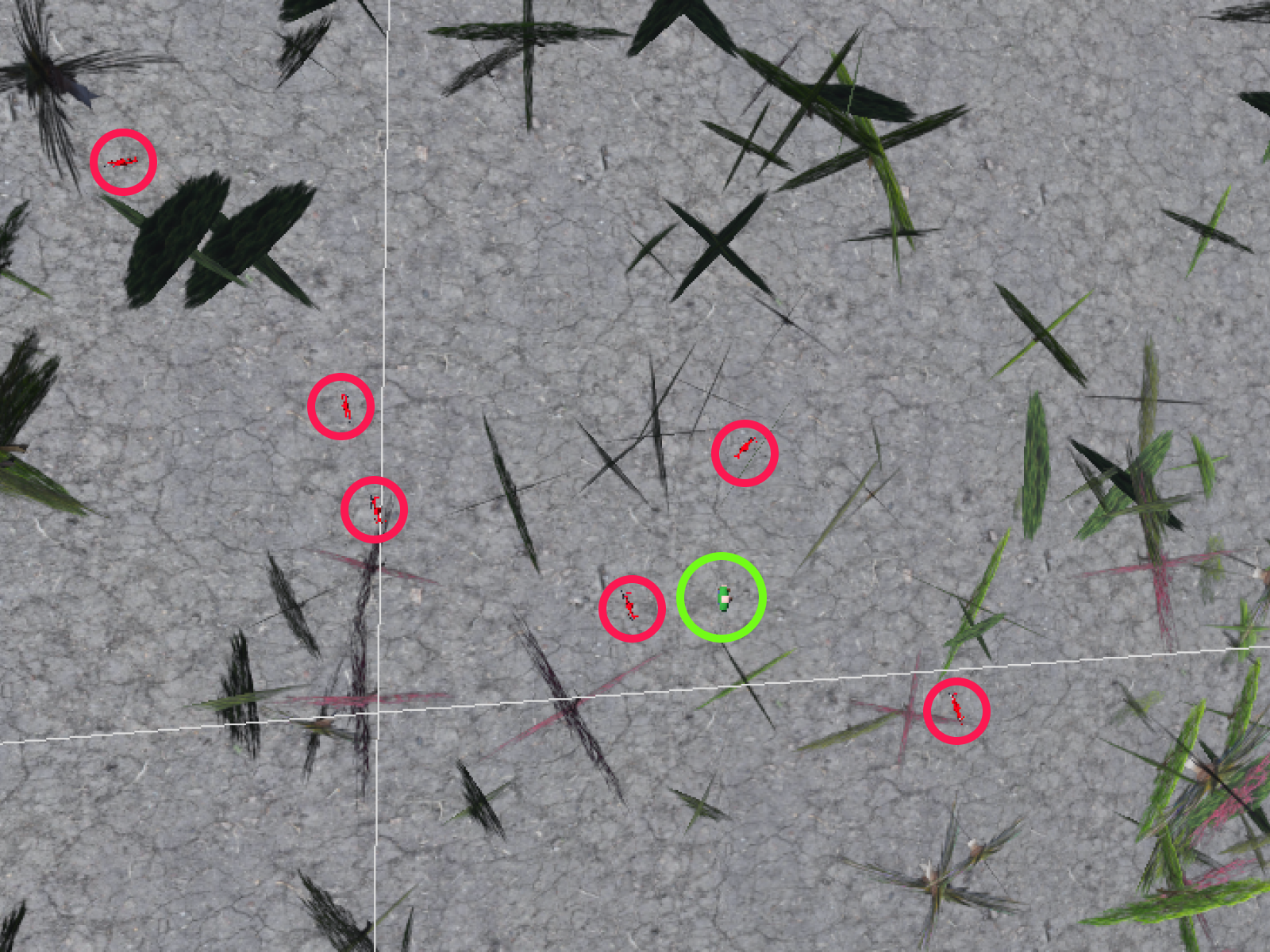}
\caption{Exploration of the region}
\label{fig:case2b}
\end{subfigure}
\hfill
\begin{subfigure}{.32\linewidth}
    \includegraphics[width=\linewidth]{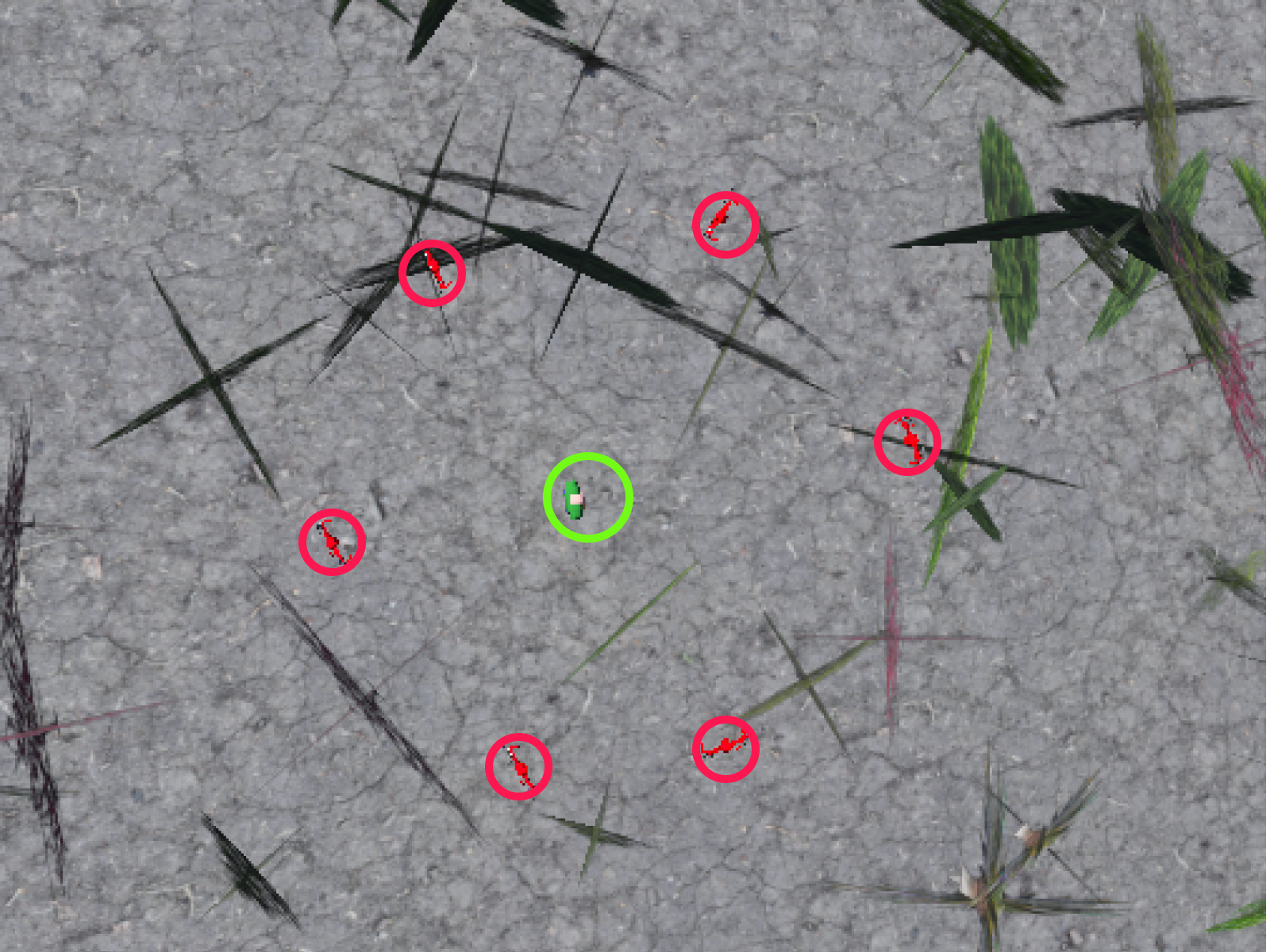}
    \caption{Encirclement pattern}
\label{fig:case2c}
\end{subfigure}
\caption{In~\Cref{fig:case2a}, we show a mountain search-and-rescue scenario with a rescue vehicle (red circle) and a missing person (green circle). After the user commands the drones to explore, they move to assigned waypoints while searching the region (\Cref{fig:case2b}). Upon detection, the drones form an encirclement around the target (\Cref{fig:case2c}).}
\label{fig:case2}
\end{figure*}

\subsection{LLM-guided search and rescue in a forest environment}
\label{sec:sim_sar}
We consider 3 search-and-rescue scenarios in which a single missing person is believed to be within a bounded forest region under severe environmental conditions.
An officer issues high-level instructions through the LLM interface, and the drones are assigned random coordinates in the forest to search for the missing person.

\paragraph{Phase I: randomized waypoint assignment for area search.}
At the start of the mission, all drones are located at the start point, as in \Cref{fig:case2a}. The LLM supervisor generates candidate search waypoints within the user-specified search region. Each drone is assigned a waypoint to inspect,
yielding a spatially distributed search pattern without requiring the officer to micromanage coverage.

\paragraph{Phase II: event-driven re-tasking upon negative observations.}
When a drone reaches its assigned waypoint without detecting the missing person anywhere along its path, it reports a
\emph{clear} status to the supervisor. The LLM then issues a new waypoint assignment to that drone,
thereby maintaining continuous exploration of the region. This process repeats until a detection event occurs. In this phase,
the swarm behavior is dominated by successive waypoint updates, while the inner-loop controller ensures stable motion between
updates and smooth transitions when new waypoints are issued, as shown in \Cref{fig:case2b}.

\paragraph{Phase III: detection-triggered rally and encirclement.}
If any drone detects the missing person, it immediately reports the detection and location.
The LLM supervisor switches the mission objective from \emph{search} to \emph{assist} by directing all drones to converge to
the detected target location and adopt an encirclement pattern. Specifically, the supervisor issues a circle formation
reference centered at the detected position to mark the location for
ground responders and maintain line-of-sight coverage. The resulting
circle formation provides a persistent visual and sensing cue to guide the rescue effort, as shown in \Cref{fig:case2c}.

\paragraph{Reported outputs.}
We summarize this simulation by reporting (i) time-to-detection, (ii) number of waypoint reassignments prior to detection,
(iii) search efficiency indicators (e.g., fraction of candidate waypoints cleared versus elapsed time), and (iv) formation
quality during the encirclement phase (e.g., circle-radius error and inter-drone spacing regularity).
Representative 3D snapshots illustrate the transition from randomized search to coordinated target encirclement.

\clearpage
\section{Limitations}
This work demonstrates a framework of using LLM to assist with real-life UAV distributed control. It did not use the newly developed visual-language-action model or multimodal policies, some of which have been introduced in \Cref{sec:related_work}. The authors anticipate that using those tools would further enhance the framework's functionality and bring it one step closer to real-world deployment. Furthermore, we introduce only three human intentions, but the proposed framework is flexible enough to incorporate additional ones as the user determines necessary.

\bibliography{custom}
\clearpage
\appendix

\section{Additional Figures and Simulations}

In this section, we provide some additional figures for the existing simulation, together with additional scenarios under the same general settings.

\textbf{Police–chasing scenario 1.}
In~\Cref{fig:app_figure}, we provide a detailed visualization of the drone layouts over the target vehicles throughout the scenario. The suspect vehicles initially travel together along the road and then diverge in different directions at the intersection. The drone layouts are automatically updated by the mid-layer LLM supervisor in response to these changes. After the operator issues additional commands for the drones to adopt different formations (circle, square, and cross), the outer layer processes the new instructions and the middle layer reformulates the corresponding formation references accordingly. The inner layer continues to operate at each time step to ensure that the desired formation is achieved. 
\begin{figure*}
\begin{subfigure}{.32\linewidth}
    \includegraphics[width=\linewidth]{figures/fig1_init.png}
\caption{Scatter initialization}
\label{fig:app_case1_1a}
\end{subfigure}
\hfill
\begin{subfigure}{.32\linewidth}
    \includegraphics[width=\linewidth]{figures/fig2_merge.png}
\caption{Group tracking formation}
\label{fig:app_case1_1b}
\end{subfigure}
\hfill
\begin{subfigure}{.32\linewidth}
    \includegraphics[width=\linewidth]{figures/fig3_chaos.png}
    \caption{Drone formation after target split}
\label{fig:app_case1_1c}
\end{subfigure}

\begin{subfigure}{.32\linewidth}
    \includegraphics[width=\linewidth]{figures/fig7_chaos1.png}
\caption{Car 1 after target split}
\label{fig:app_case1_1d}
\end{subfigure}
\hfill
\begin{subfigure}{.32\linewidth}
    \includegraphics[width=\linewidth]{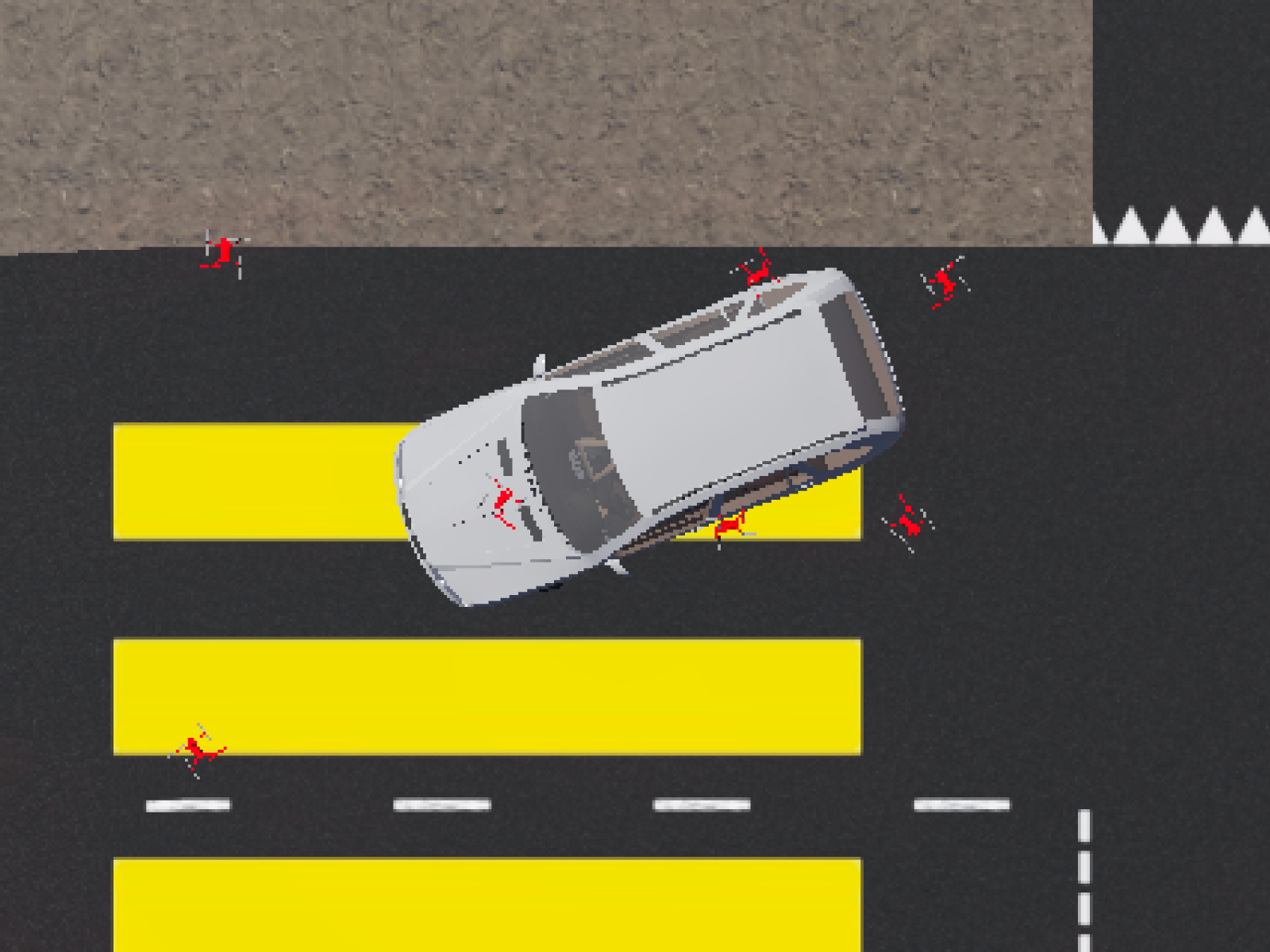}
\caption{Car 2 after target split}
\label{fig:app_case1_1e}
\end{subfigure}
\hfill
\begin{subfigure}{.32\linewidth}
    \includegraphics[width=\linewidth]{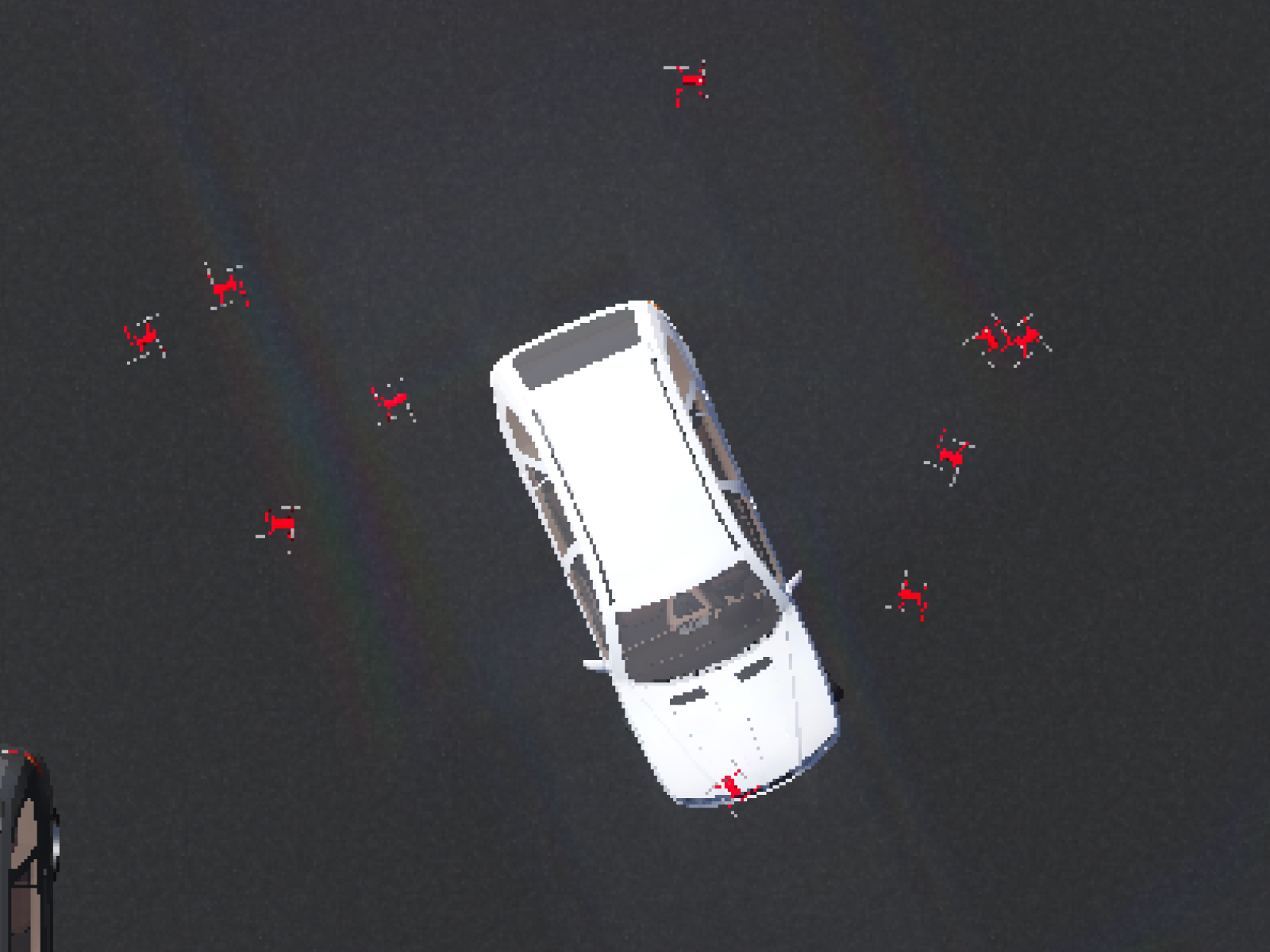}
    \caption{Car 3 after target split}
\label{fig:app_case1_1f}
\end{subfigure}

\begin{subfigure}{.32\linewidth}
    \includegraphics[width=\linewidth]{figures/fig4_car1.png}
\caption{Corrected formation on car 1}
\label{fig:app_case1_1g}
\end{subfigure}
\hfill
\begin{subfigure}{.32\linewidth}
    \includegraphics[width=\linewidth]{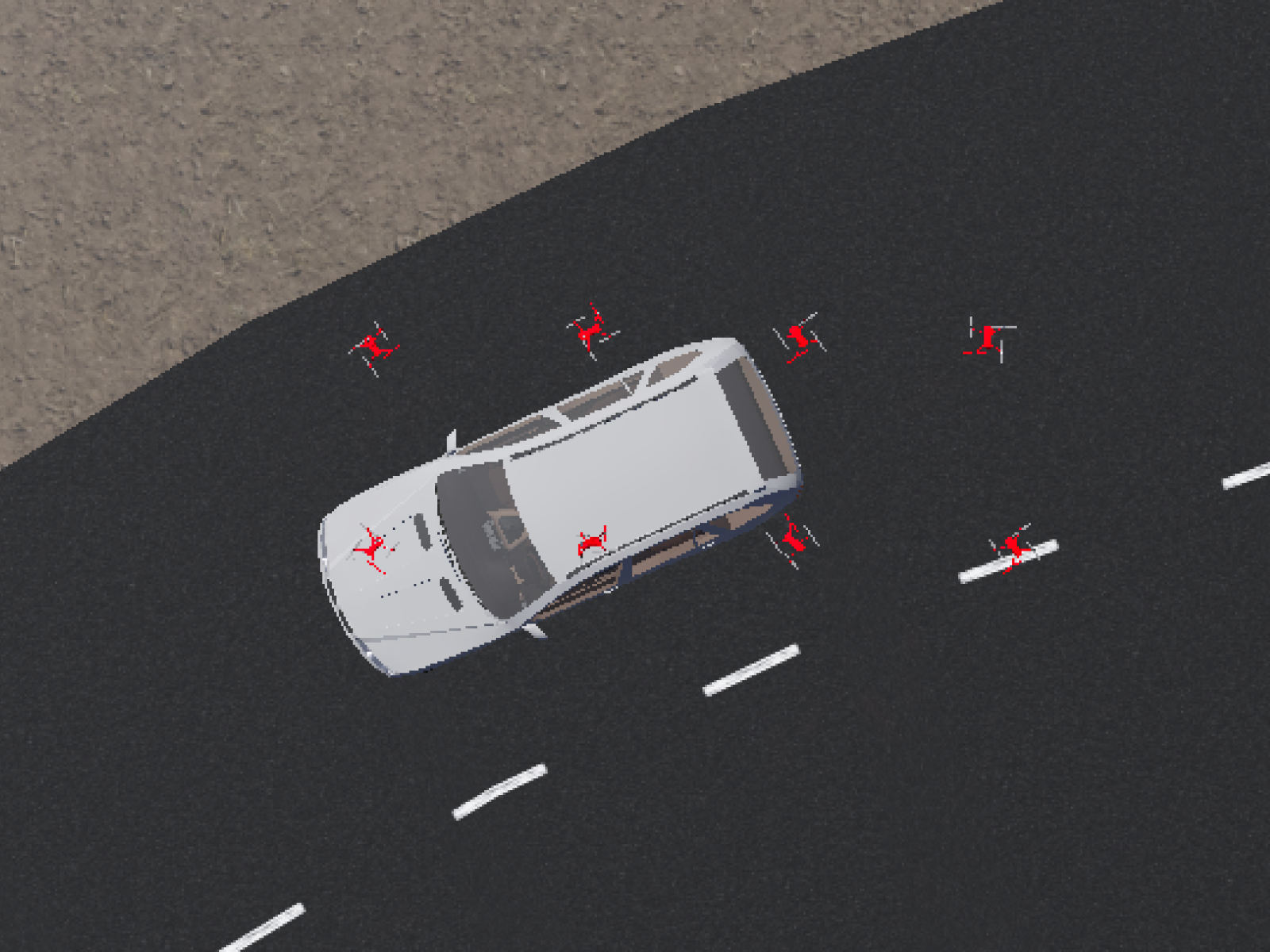}
\caption{Corrected formation on car 2}
\label{fig:app_case1_1h}
\end{subfigure}
\hfill
\begin{subfigure}{.32\linewidth}

\includegraphics[width=\linewidth]{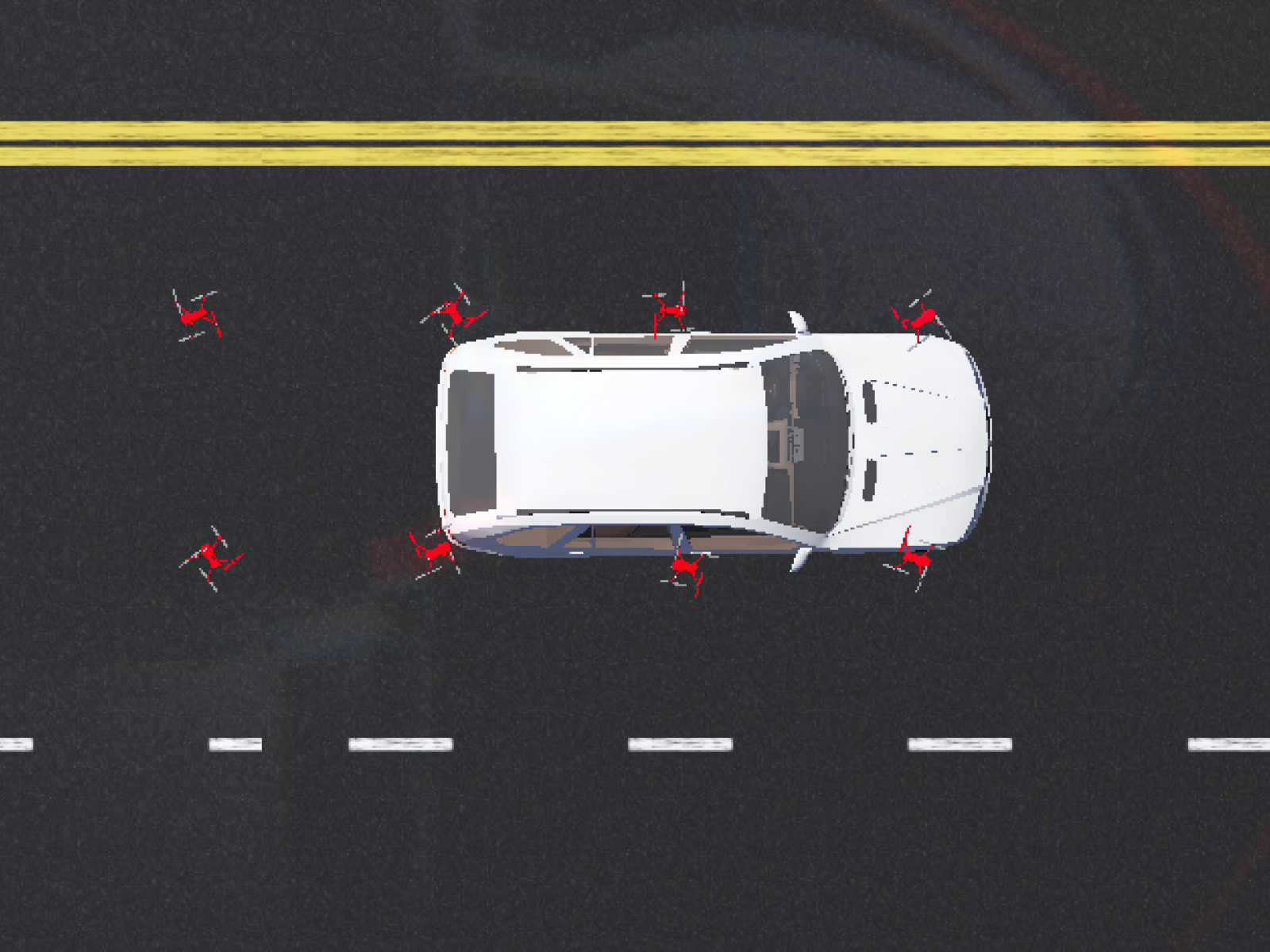}
\caption{Corrected formation on car 3}
\label{fig:app_case1_1i}
\end{subfigure}

\begin{subfigure}{.32\linewidth}
    \includegraphics[width=\linewidth]{figures/fig10_reform1.png}
\caption{Reformation on car 1}
\label{fig:app_case1_1j}
\end{subfigure}
\hfill
\begin{subfigure}{.32\linewidth}
    \includegraphics[width=\linewidth]{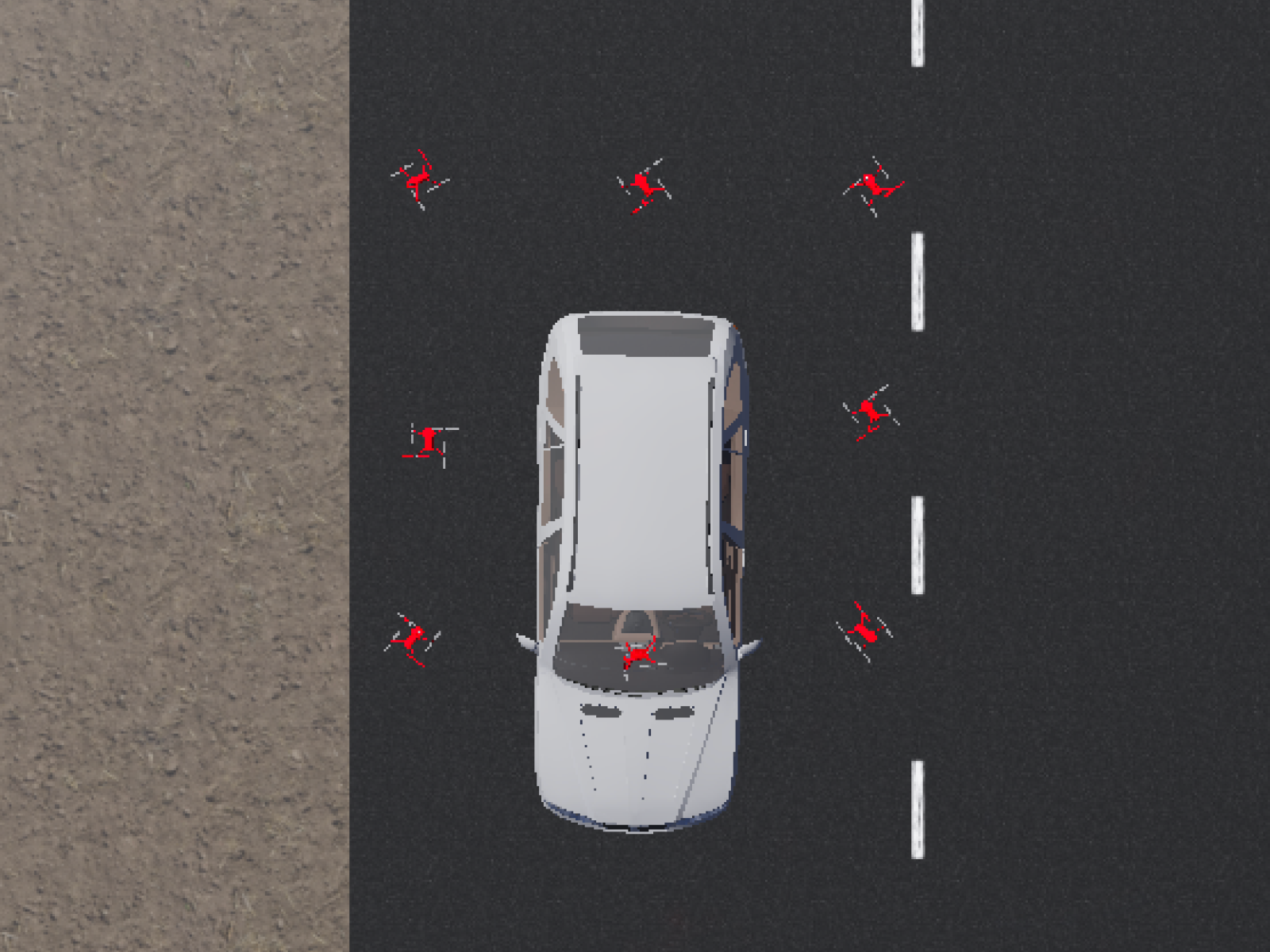}
\caption{Reformation on car 2}
\label{fig:app_case1_1k}
\end{subfigure}
\hfill
\begin{subfigure}{.32\linewidth}

\includegraphics[width=\linewidth]{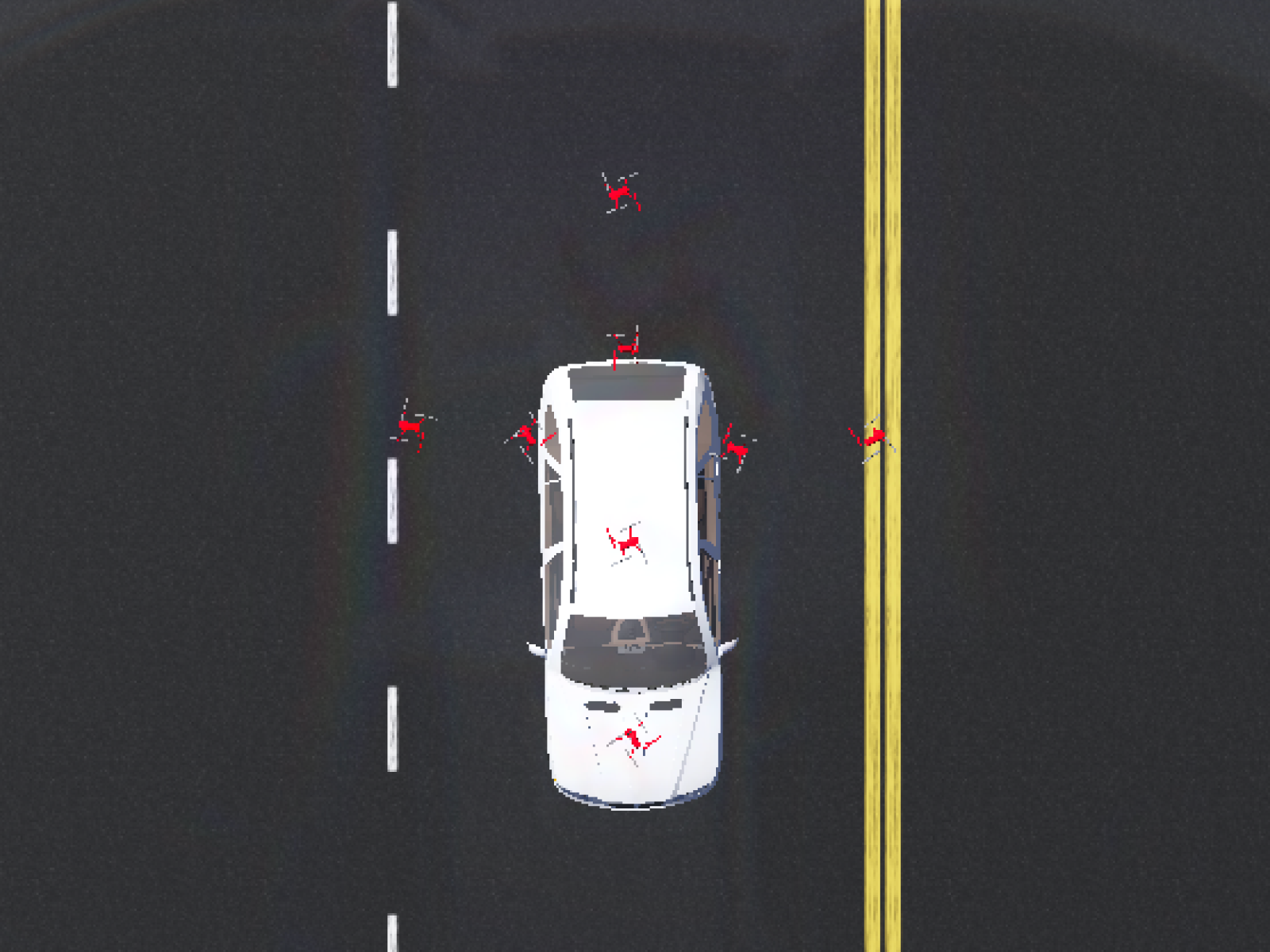}
\caption{Reformation on car 3}
\label{fig:app_case1_1l}
\end{subfigure}
\caption{In \Cref{fig:app_case1_1a}, we have 24 drones scattered around the target cars. The user then commands the drones to form a grid around the cars with LLM, as shown in \Cref{fig:app_case1_1b}. As the cars move toward the road intersection, they diverge in different directions. The inner-layer control algorithm has the group of drones diverted to track different cars, as shown in \Cref{fig:app_case1_1c}. However, the low-level control heuristic does not ensure the drones maintain the formation exactly during splitting. Lastly, the LLM auto-verification recenters each drone on the target and rebalances the group for each target, as shown in \Cref{fig:app_case1_1g}, \Cref{fig:app_case1_1h}, and \Cref{fig:app_case1_1i}. The user then issues additional commands to reform the groups into a circle, square, and cross, as shown in \Cref{fig:app_case1_1j}, \Cref{fig:app_case1_1k}, and \Cref{fig:app_case1_1l}.}
\label{fig:app_figure}
\end{figure*}

\textbf{Police–chasing scenario 2.}
In~\Cref{fig:app_figure2}, the operator issues a command as \emph{"Three suspects are attempting to escape. Form a circle like a coordinated police dragnet and track all three targets."} 
The suspect vehicles initially travel together along the road and then diverge in different directions at the intersection. As the targets separate, the system automatically adapts the swarm configuration to maintain coordinated tracking of all three vehicles. Compared to grid-based convoy tracking, enforcing a circular encirclement with a larger swarm (24 drones in this example) introduces additional geometric and assignment complexity. Nevertheless, the proposed architecture handles this case in the same unified manner by grounding the user instruction into formation references, dynamically reassigning drones to targets, and continuously updating the control inputs through the inner layer.
After the drones form three encirclement groups around the three vehicles, the operator updates the instruction \emph{"Stop tracking all the targets."}. The outer and middle layers process this new command, update the swarm configuration, and deactivate the tracking objective. This scenario demonstrates that the proposed framework can robustly handle both the splitting and reconvergence of targets, enabling the swarm to dynamically merge and reform joint behaviors as the task structure evolves.
\begin{figure*}
\begin{subfigure}{.32\linewidth}
    \includegraphics[width=\linewidth]{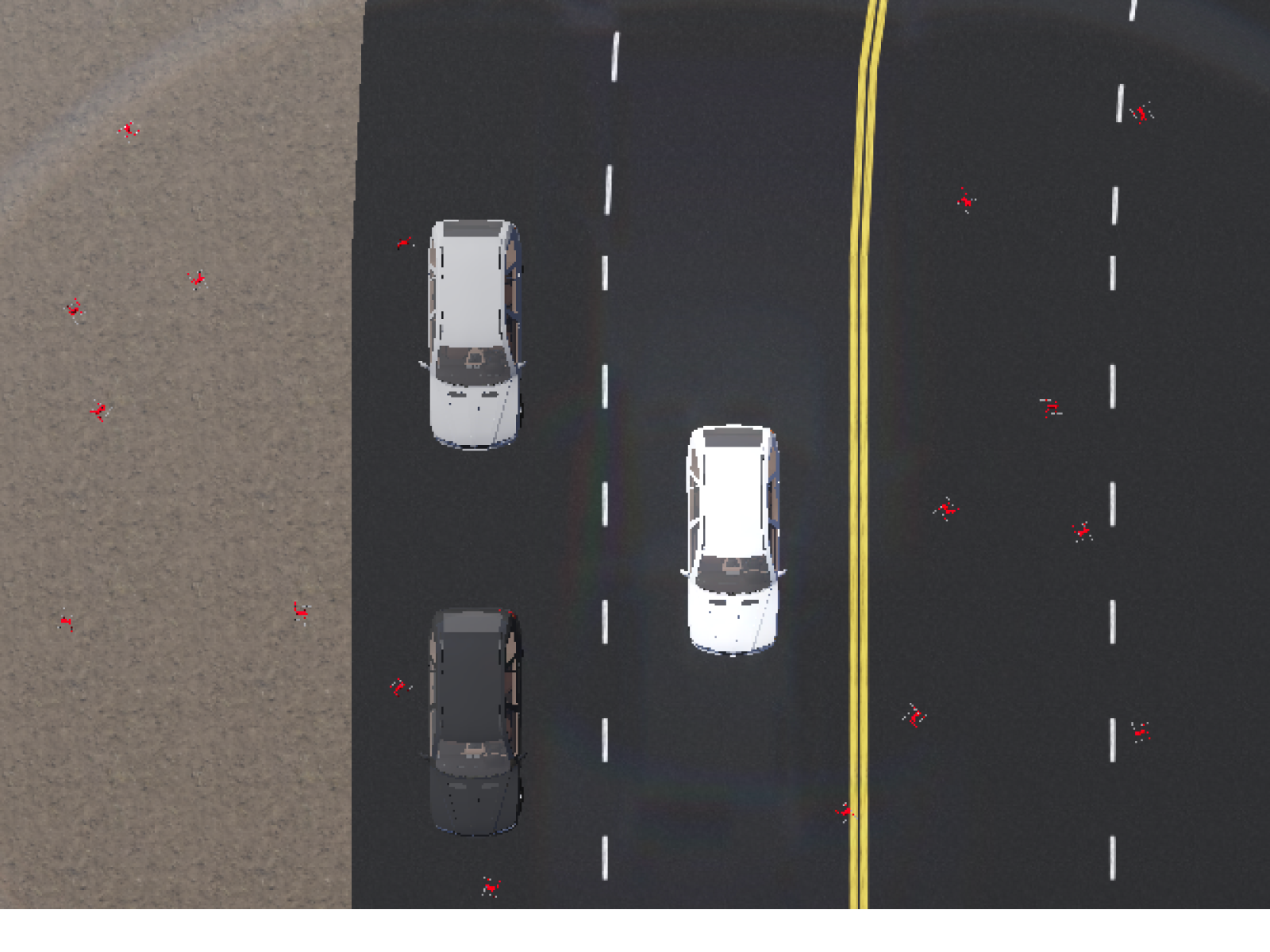}
\caption{Scatter initialization}
\label{fig:app_case1_2a}
\end{subfigure}
\hfill
\begin{subfigure}{.32\linewidth}
    \includegraphics[width=\linewidth]{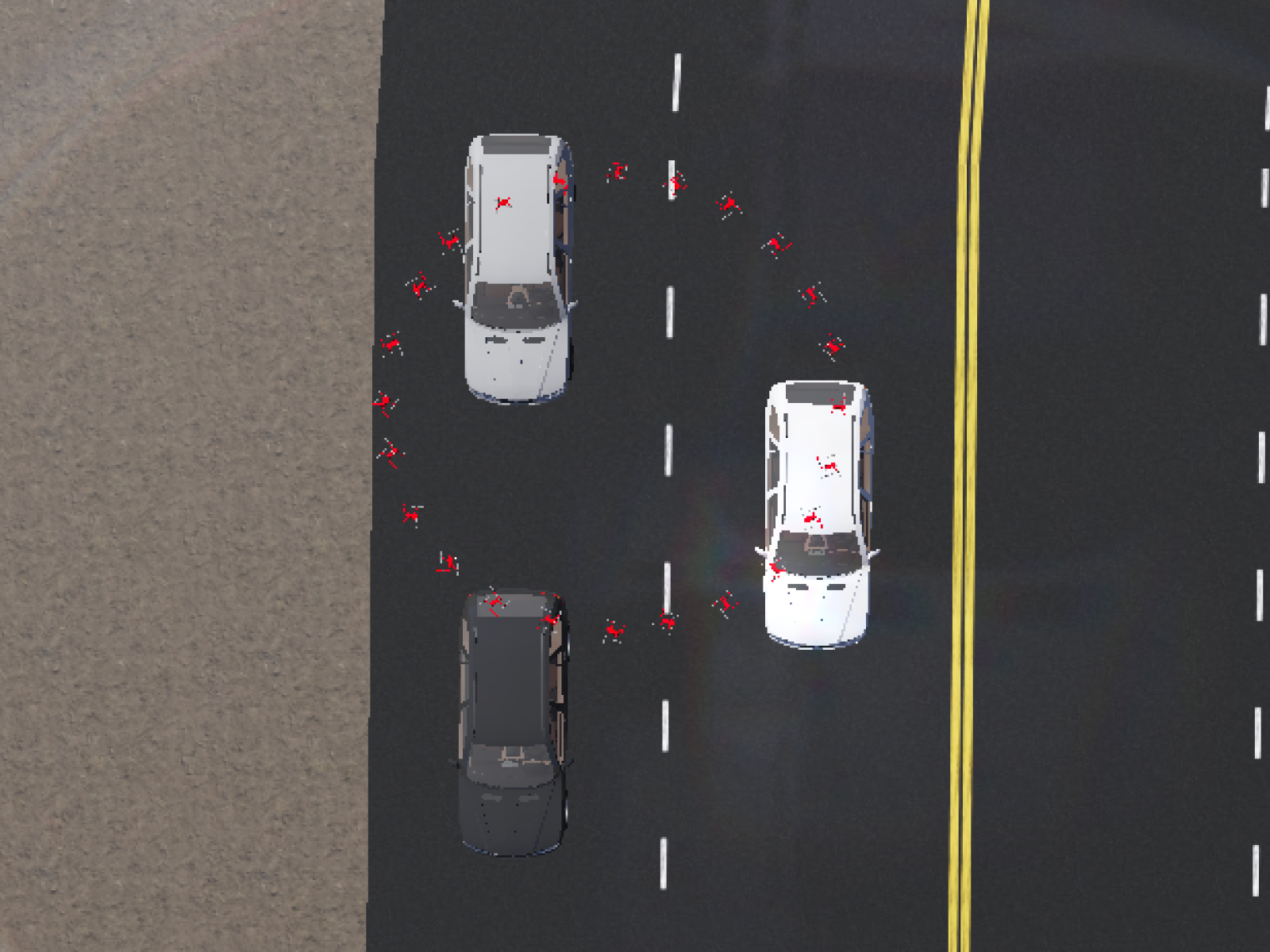}
\caption{Group tracking formation}
\label{fig:app_case1_2b}
\end{subfigure}
\hfill
\begin{subfigure}{.32\linewidth}
    \includegraphics[width=\linewidth]{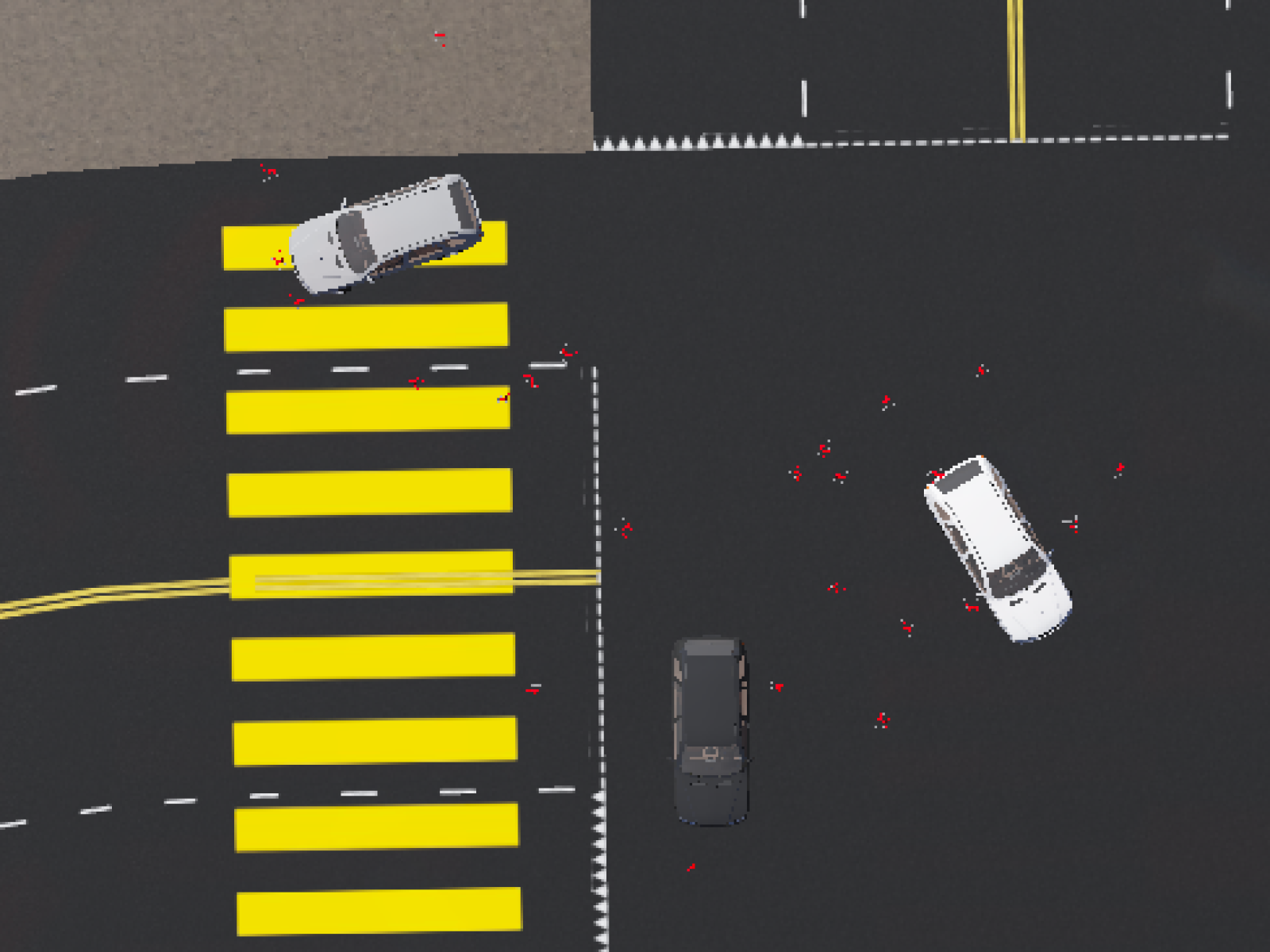}
    \caption{Drone formation after target split}
\label{fig:app_case1_2c}
\end{subfigure}

\begin{subfigure}{.32\linewidth}
    \includegraphics[width=\linewidth]{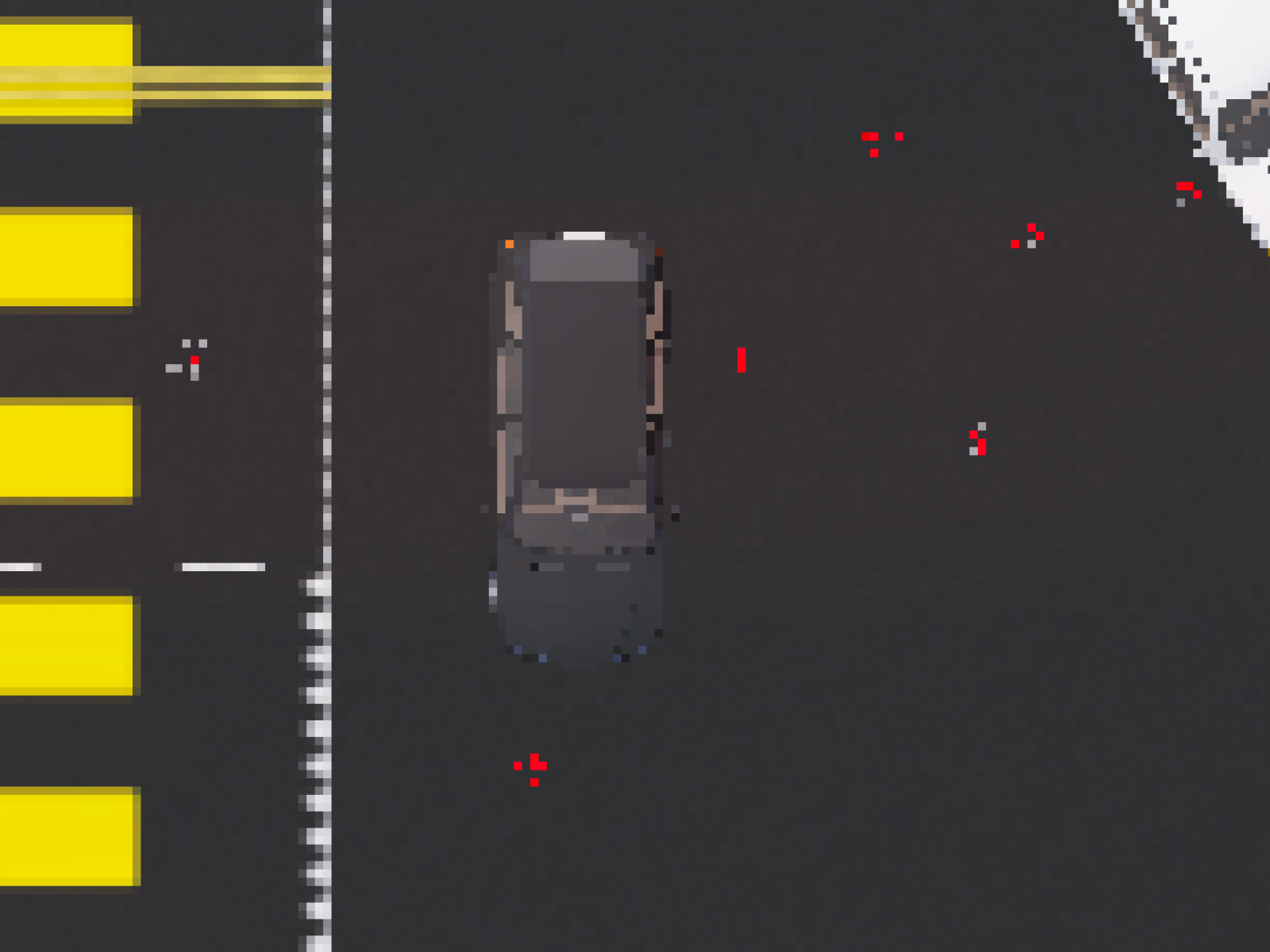}
\caption{Car 1 after target split}
\label{fig:app_case1_2d}
\end{subfigure}
\hfill
\begin{subfigure}{.32\linewidth}
    \includegraphics[width=\linewidth]{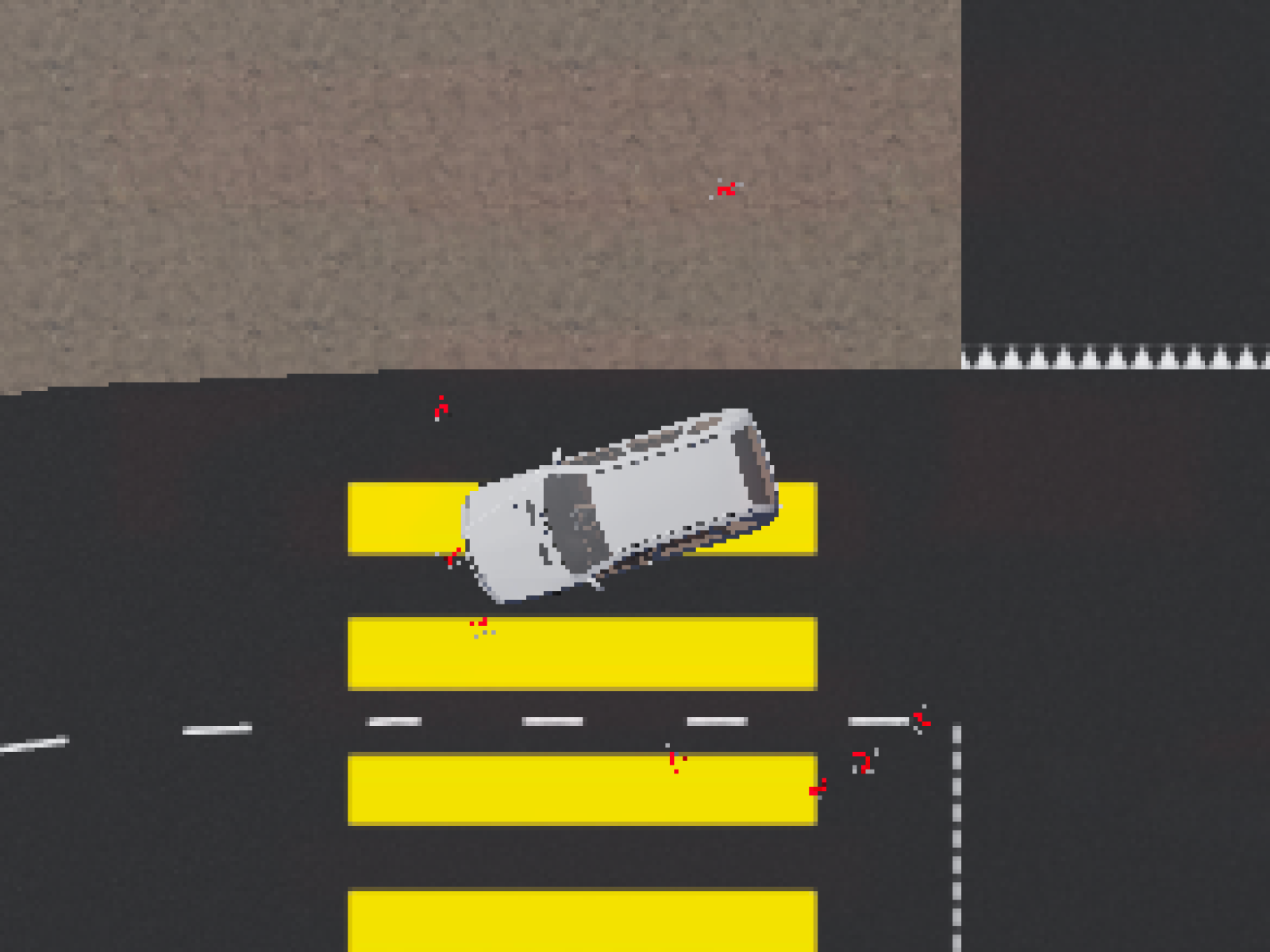}
\caption{Car 2 after target split}
\label{fig:app_case1_2e}
\end{subfigure}
\hfill
\begin{subfigure}{.32\linewidth}
    \includegraphics[width=\linewidth]{figures/c1s2_1.png}
    \caption{Car 3 after target split}
\label{fig:app_case1_2f}
\end{subfigure}

\begin{subfigure}{.32\linewidth}
    \includegraphics[width=\linewidth]{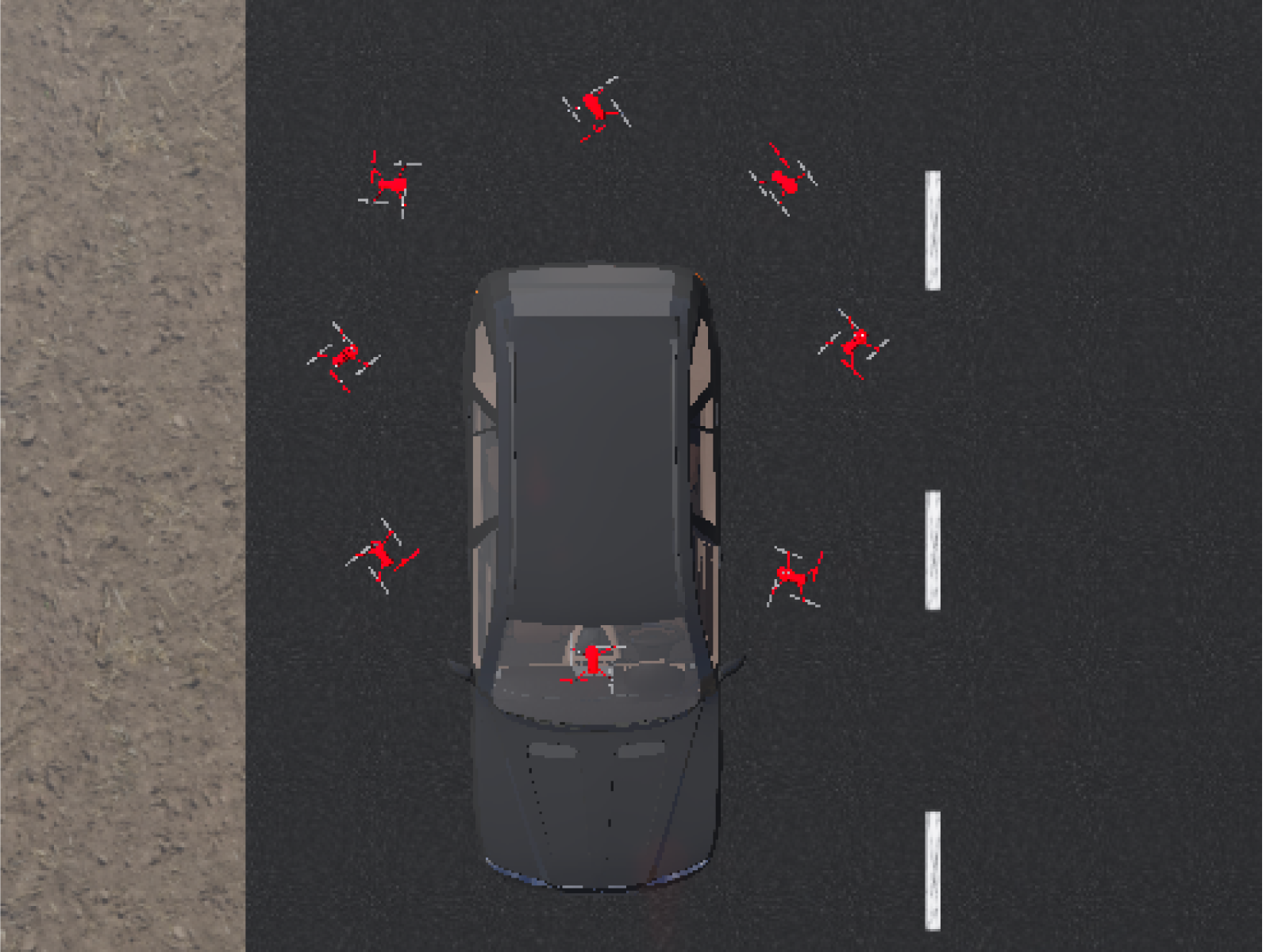}
\caption{Corrected formation on car 1}
\label{fig:app_case1_2g}
\end{subfigure}
\hfill
\begin{subfigure}{.32\linewidth}
    \includegraphics[width=\linewidth]{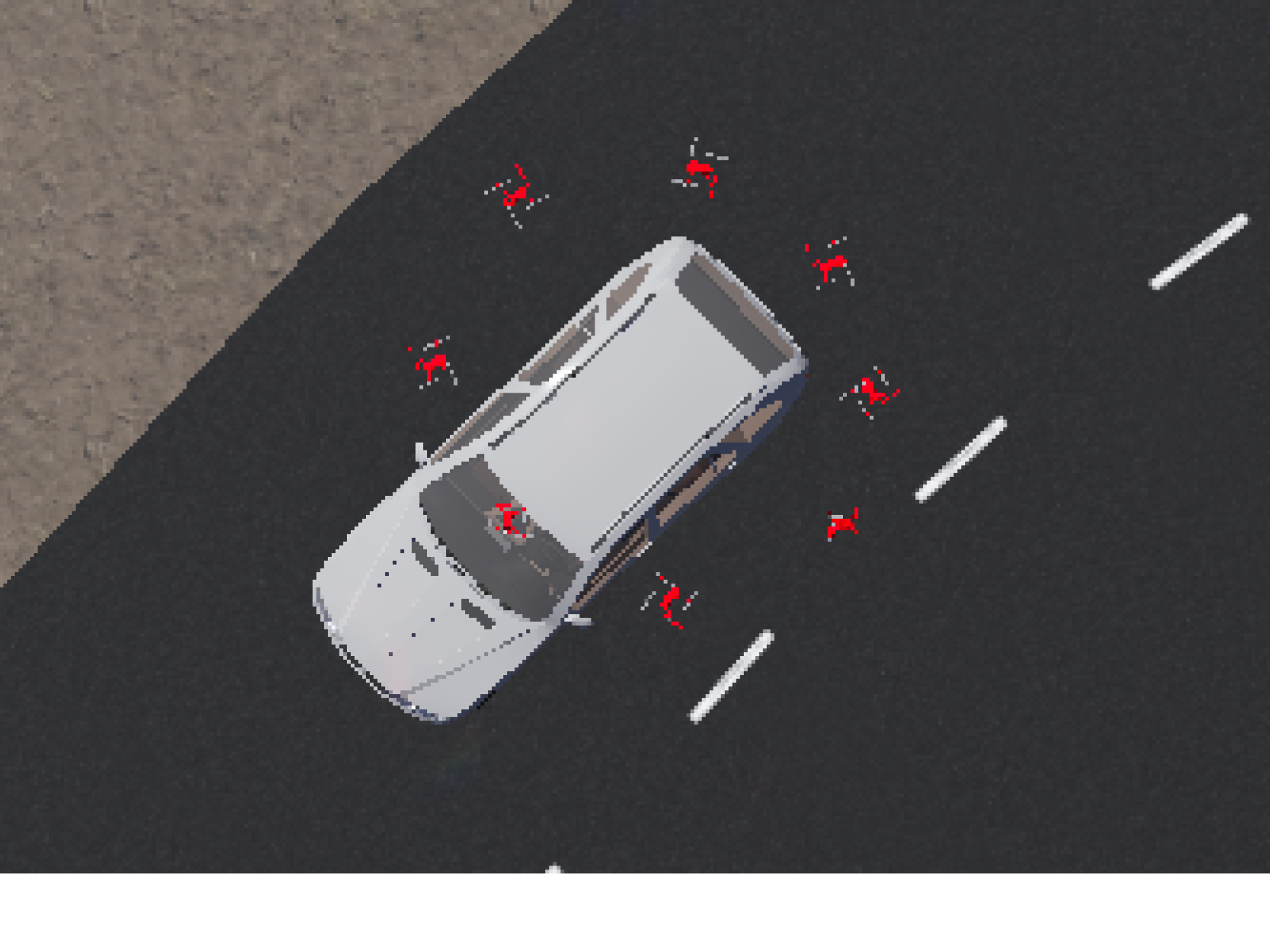}
\caption{Corrected formation on car 2}
\label{fig:app_case1_2h}
\end{subfigure}
\hfill
\begin{subfigure}{.32\linewidth}

\includegraphics[width=\linewidth]{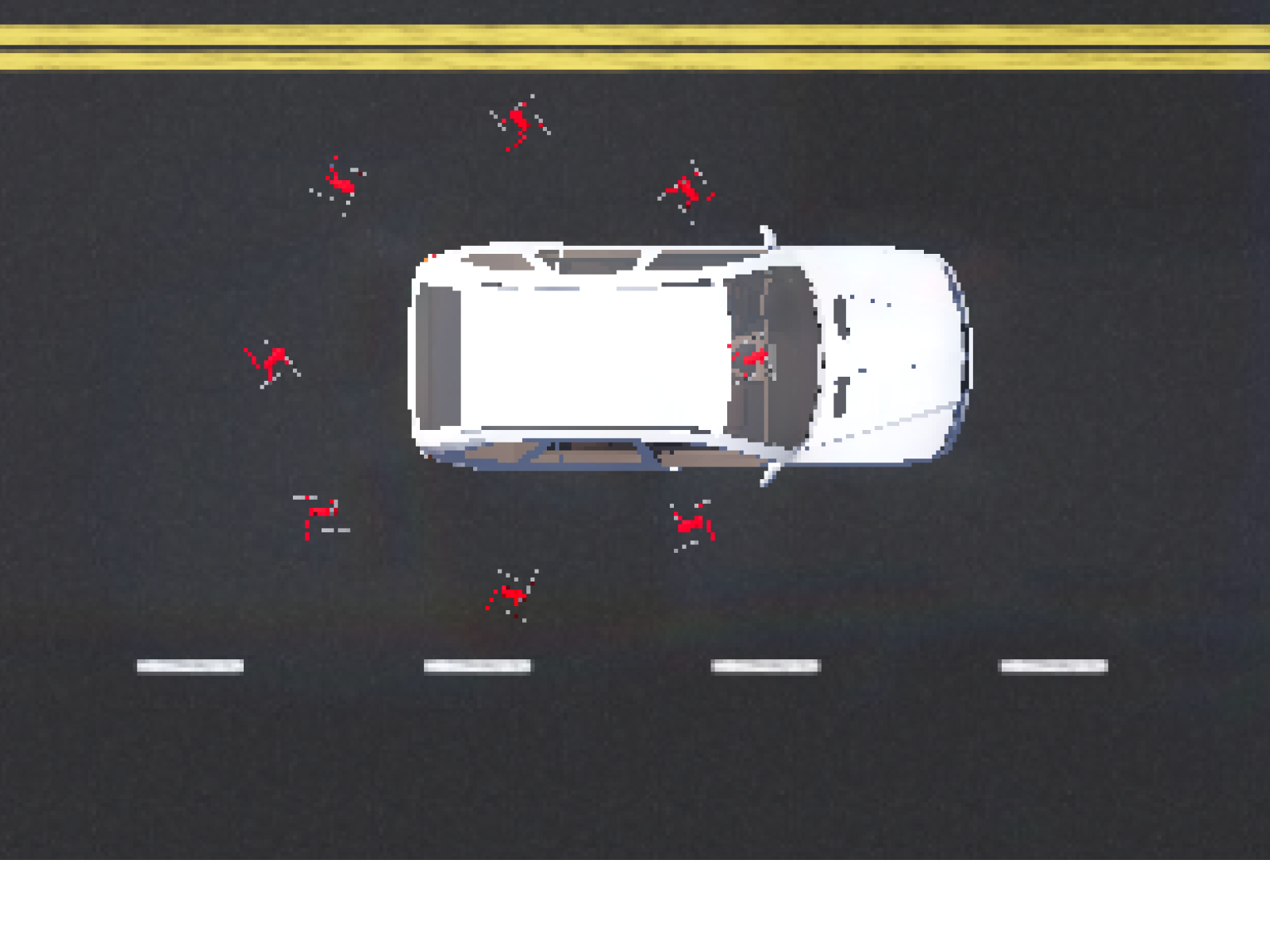}
\caption{Corrected formation on car 3}
\label{fig:app_case1_2i}
\end{subfigure}

\begin{subfigure}{.32\linewidth}
    \includegraphics[width=\linewidth]{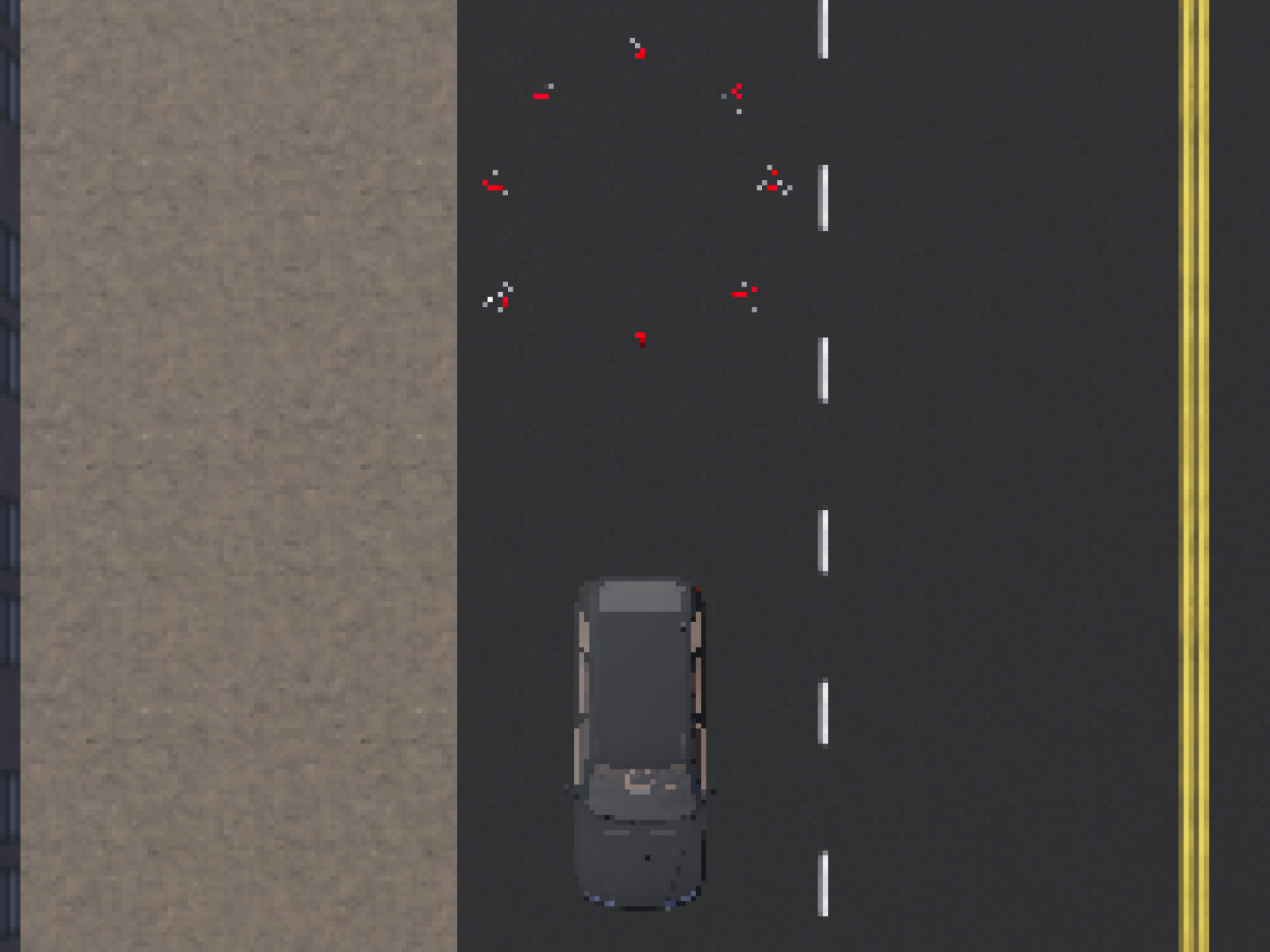}
\caption{Reformation on car 1}
\label{fig:app_case1_2j}
\end{subfigure}
\hfill
\begin{subfigure}{.32\linewidth}
    \includegraphics[width=\linewidth]{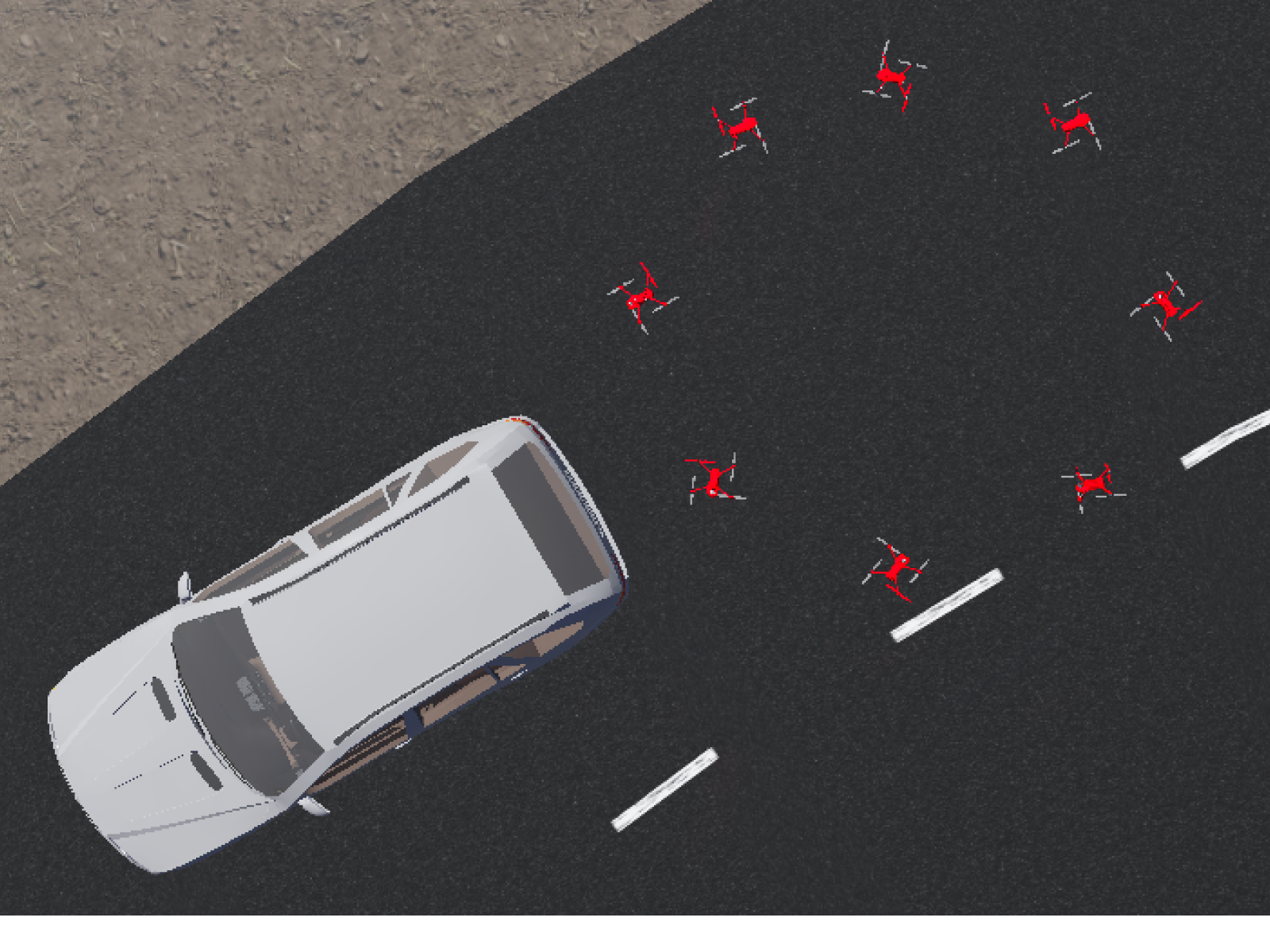}
\caption{Reformation on car 2}
\label{fig:app_case1_2k}
\end{subfigure}
\hfill
\begin{subfigure}{.32\linewidth}

\includegraphics[width=\linewidth]{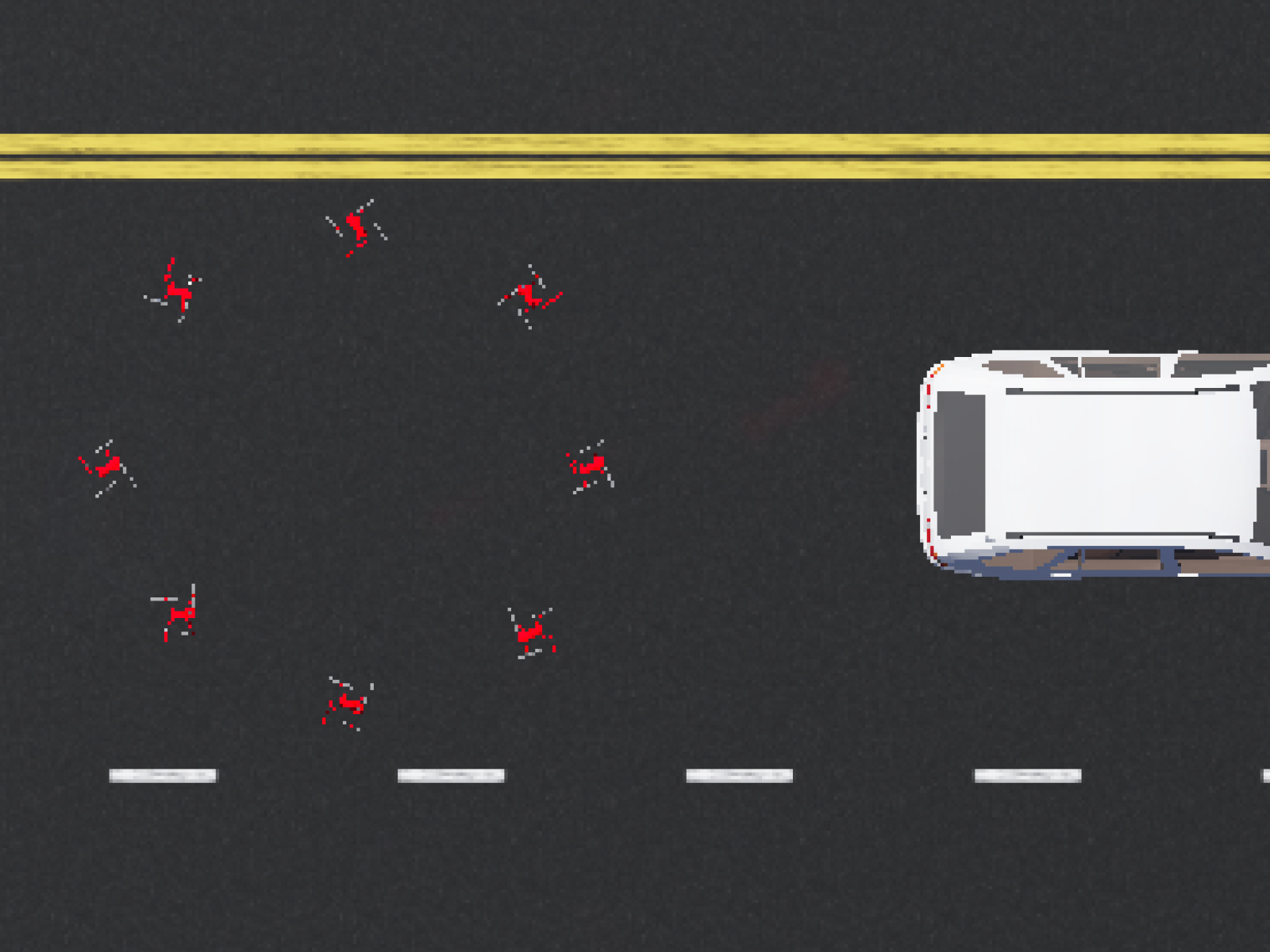}
\caption{Reformation on car 3}
\label{fig:app_case1_2l}
\end{subfigure}
\caption{In \Cref{fig:app_case1_2a}, we have 24 drones scattered around the target cars. The user then commands the drones to form a circle around the cars with LLM, as shown in \Cref{fig:app_case1_2b}. As the cars move toward the road intersection, they diverge to different directions. The inner-layer control algorithm has the group of drones diverted to track different cars, as shown in \Cref{fig:app_case1_2c}. However, the low-level control heuristic does not organize the car to maintain the formation exactly during the splitting process. Lastly, the LLM auto-verification recenters each drone on the target and rebalances the group for each target, as shown in \Cref{fig:app_case1_2d}, \Cref{fig:app_case1_2e}, and \Cref{fig:app_case1_2f}. The user then issues additional commands to not tracking, as shown in \Cref{fig:app_case1_2j}, \Cref{fig:app_case1_2k}, and \Cref{fig:app_case1_2l}.}
\label{fig:app_figure2}
\end{figure*}

\textbf{Police–chasing scenario 3.}
In~\Cref{fig:app_figure3}, the operator issues a command such as \emph{``Three suspects are attempting to escape. Form a circle like a coordinated police dragnet and track all three targets.''} 
The suspect vehicles initially travel together along the road and then diverge in different directions at the intersection. As the targets separate, the system automatically adapts the swarm configuration to maintain coordinated tracking of all three vehicles. At a later time, the vehicles execute U-turns, reverse direction, and return toward the intersection, where they eventually merge again onto a common path. The 24 drones subsequently regroup, and under the LLM supervision of the mid-layer, re-form a single encirclement above the three vehicles.

\begin{figure*}
\begin{subfigure}{.32\linewidth}
    \includegraphics[width=\linewidth]{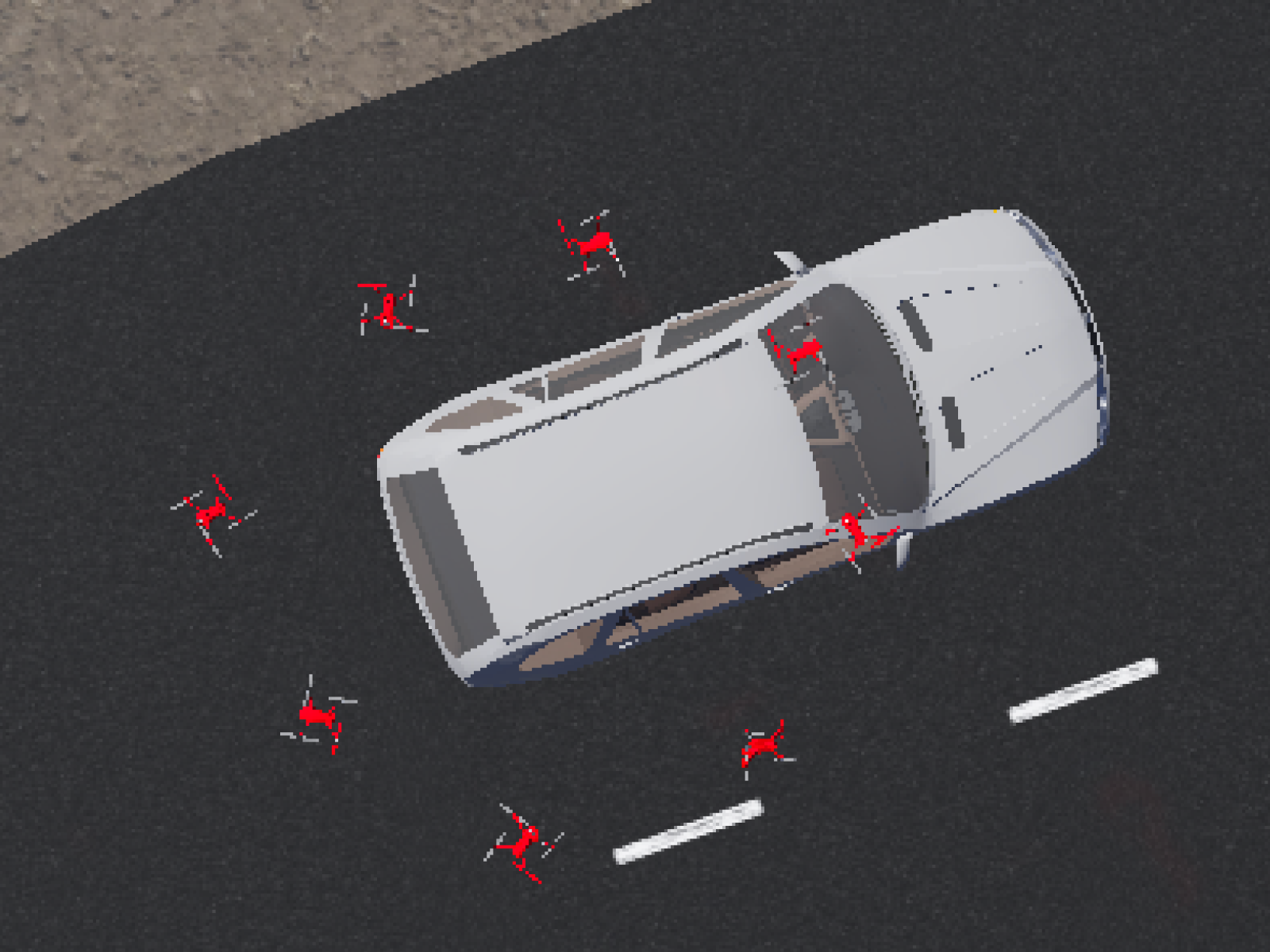}
\caption{Car 2 after U-turn}
\label{fig:app_case1_3a}
\end{subfigure}
\hfill
\begin{subfigure}{.32\linewidth}
    \includegraphics[width=\linewidth]{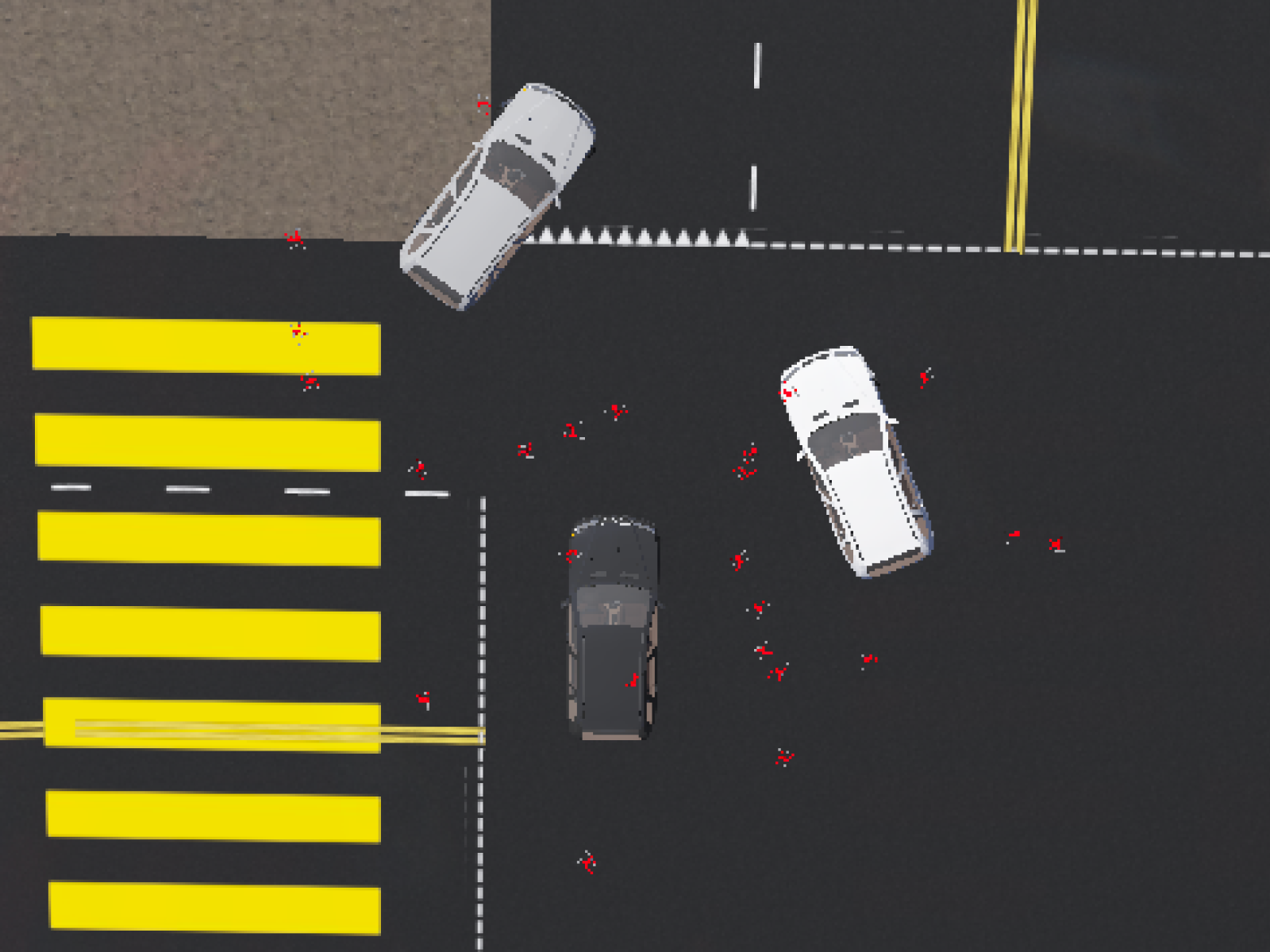}
\caption{Cars merge at the intersection}
\label{fig:app_case1_3b}
\end{subfigure}
\hfill
\begin{subfigure}{.32\linewidth}
    \includegraphics[width=\linewidth]{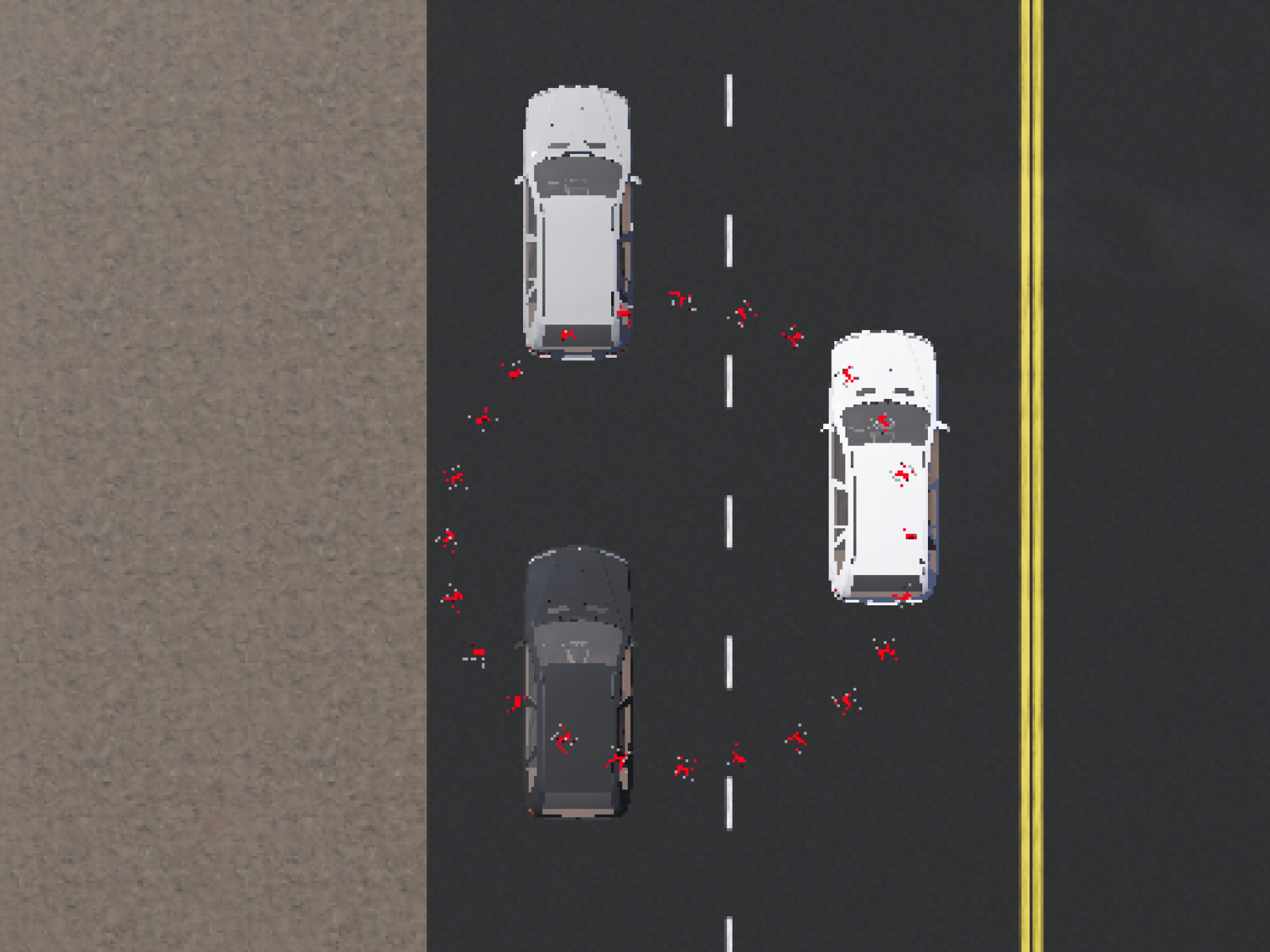}
    \caption{Drone merge after the intersection}
\label{fig:app_case1_3c}
\end{subfigure}
\caption{In~\Cref{fig:app_case1_3a}, the drones maintain a circular formation while vehicle~2 executes a U-turn. In~\Cref{fig:app_case1_3b}, all vehicles reconverge at the intersection. In~\Cref{fig:app_case1_3c}, the vehicles merge onto a common path, and the drones correspondingly regroup into a single encirclement.
}
\label{fig:app_figure3}
\end{figure*}

\textbf{Search-and-rescue scenario 1.} 
This scenario has been introduced and discussed in~\Cref{sec:sim_sar} with~\Cref{fig:case2}.

\textbf{Search-and-rescue scenario 2.} 
This scenario follows the same procedure as Scenario~1 (see \Cref{sec:sim_sar}), but places the target at a different location in the environment (see ~\Cref{fig:app_figure4}).
\Cref{fig:app_case2_2a} shows the initial deployment of the drones,
while \Cref{fig:app_case2_2b} illustrates the ongoing search process with distributed location exploration.
Upon detection, the drones transition to a three-dimensional cube formation centered at the target, providing volumetric coverage and better visibility, as shown in \Cref{fig:app_case2_2c}.

\begin{figure*}
\begin{subfigure}{.32\linewidth}
    \includegraphics[width=\linewidth]{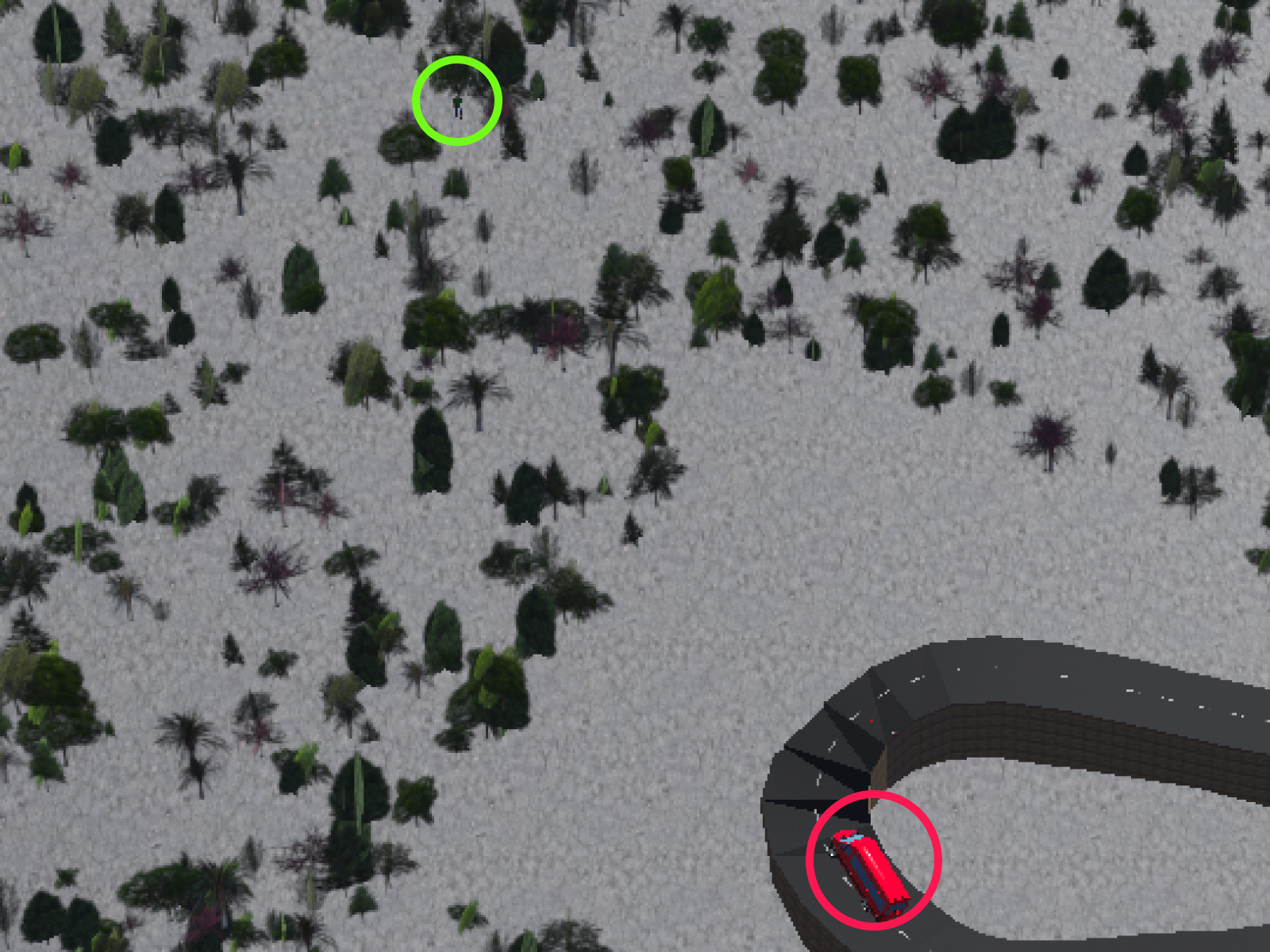}
\caption{Scenario illustration}
\label{fig:app_case2_2a}
\end{subfigure}
\hfill
\begin{subfigure}{.32\linewidth}
    \includegraphics[width=\linewidth]{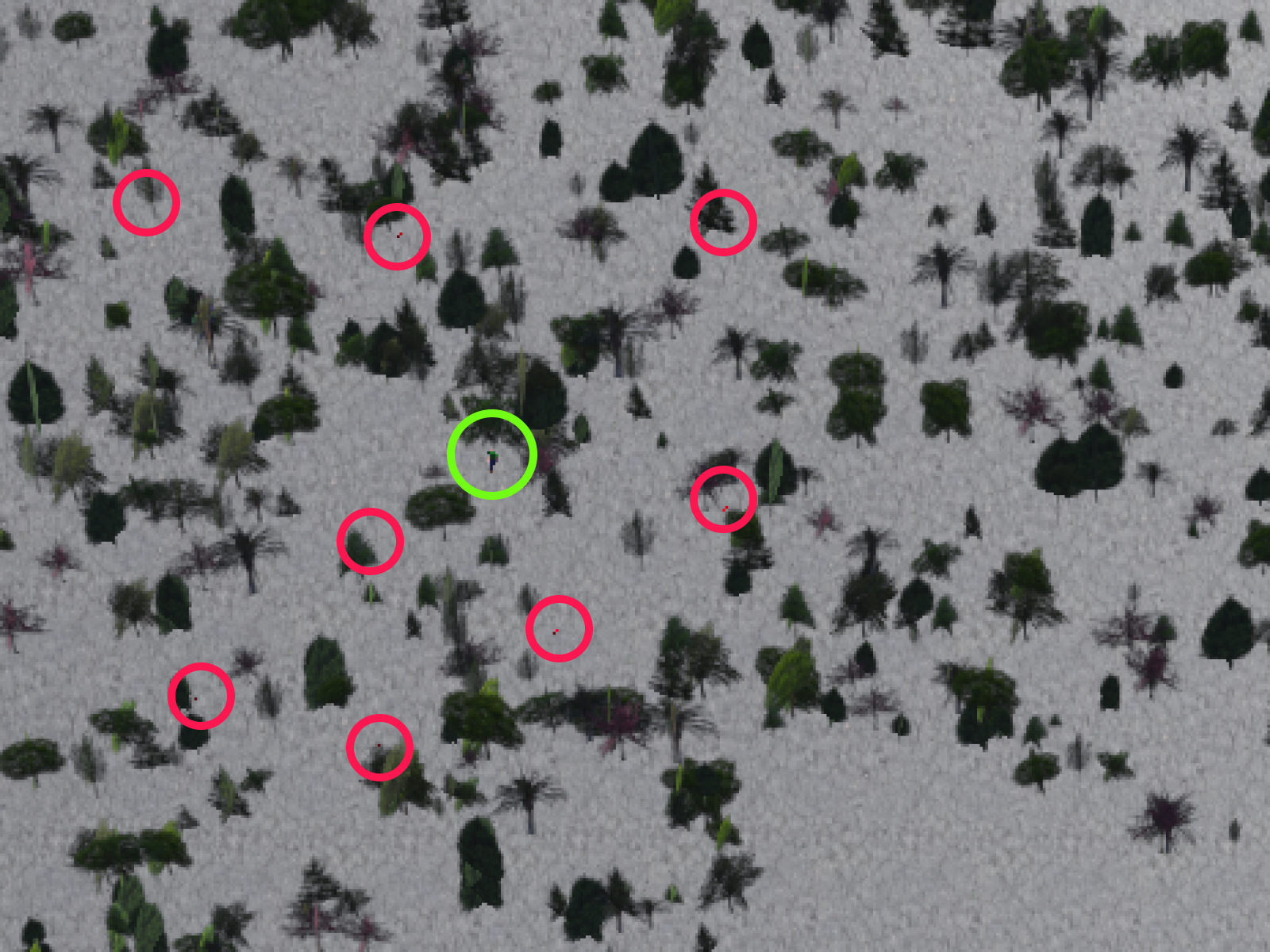}
\caption{Ongoing search process}
\label{fig:app_case2_2b}
\end{subfigure}
\hfill
\begin{subfigure}{.32\linewidth}
    \includegraphics[width=\linewidth]{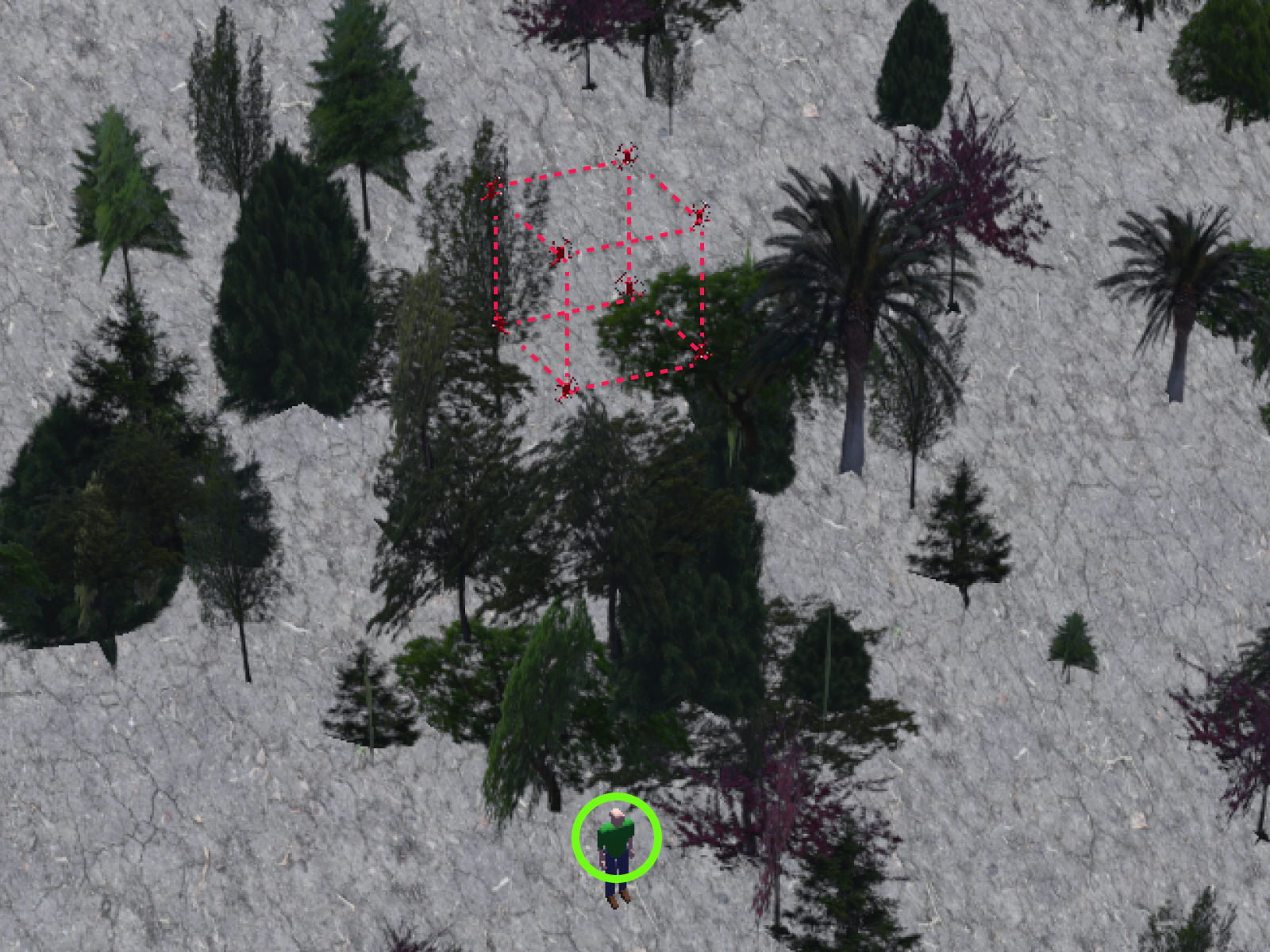}
    \caption{Three-dimensional enclosure}
\label{fig:app_case2_2c}
\end{subfigure}
\caption{Search-and-rescue Scenario~2 with the target located at a different position. In this scenario, eight drones are deployed.
(a) Initial deployment of the rescue vehicle and the target.
(b) Ongoing search process with distributed waypoint exploration.
(c) Upon detection, the drones form a three-dimensional cube formation around the target, providing volumetric coverage.
}
\label{fig:app_figure4}
\end{figure*}

\textbf{Search-and-rescue scenario 3.}
This scenario follows the same overall procedure, but incorporates automatic correction by the LLM supervisor and considers an enlarged search region (see~\Cref{fig:app_figure5}).
\Cref{fig:app_case2_3a} shows the initial deployment of the drones,
while \Cref{fig:app_case2_3b} illustrates the search process within the initial search area.
As the search region is expanded through automatic correction, the drones adapt their exploration accordingly, as shown in \Cref{fig:app_case2_3c}.

\begin{figure*}
\begin{subfigure}{.32\linewidth}
    \includegraphics[width=\linewidth]{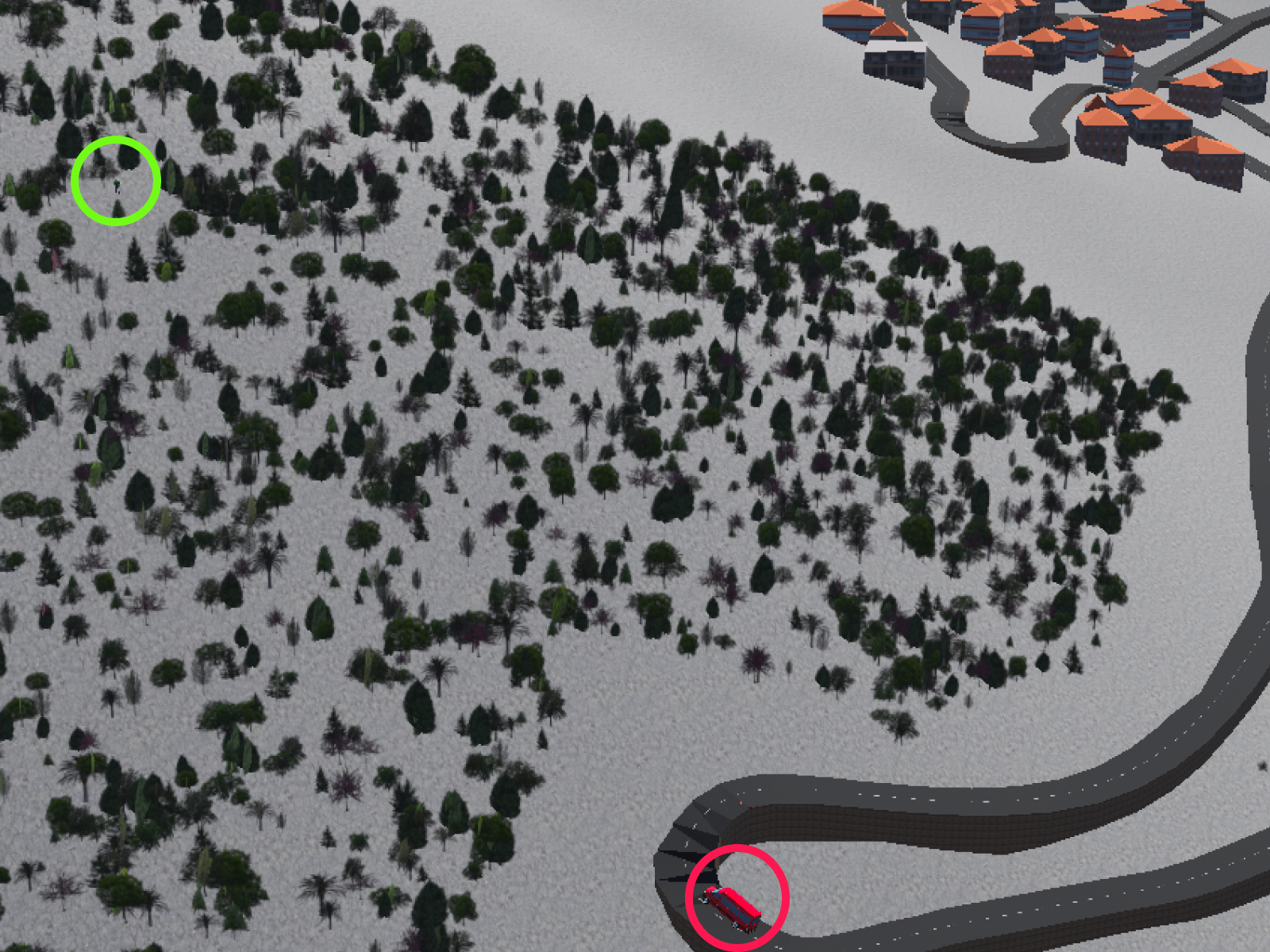}
\caption{Scenario illustration}
\label{fig:app_case2_3a}
\end{subfigure}
\hfill
\begin{subfigure}{.32\linewidth}
    \includegraphics[width=\linewidth]{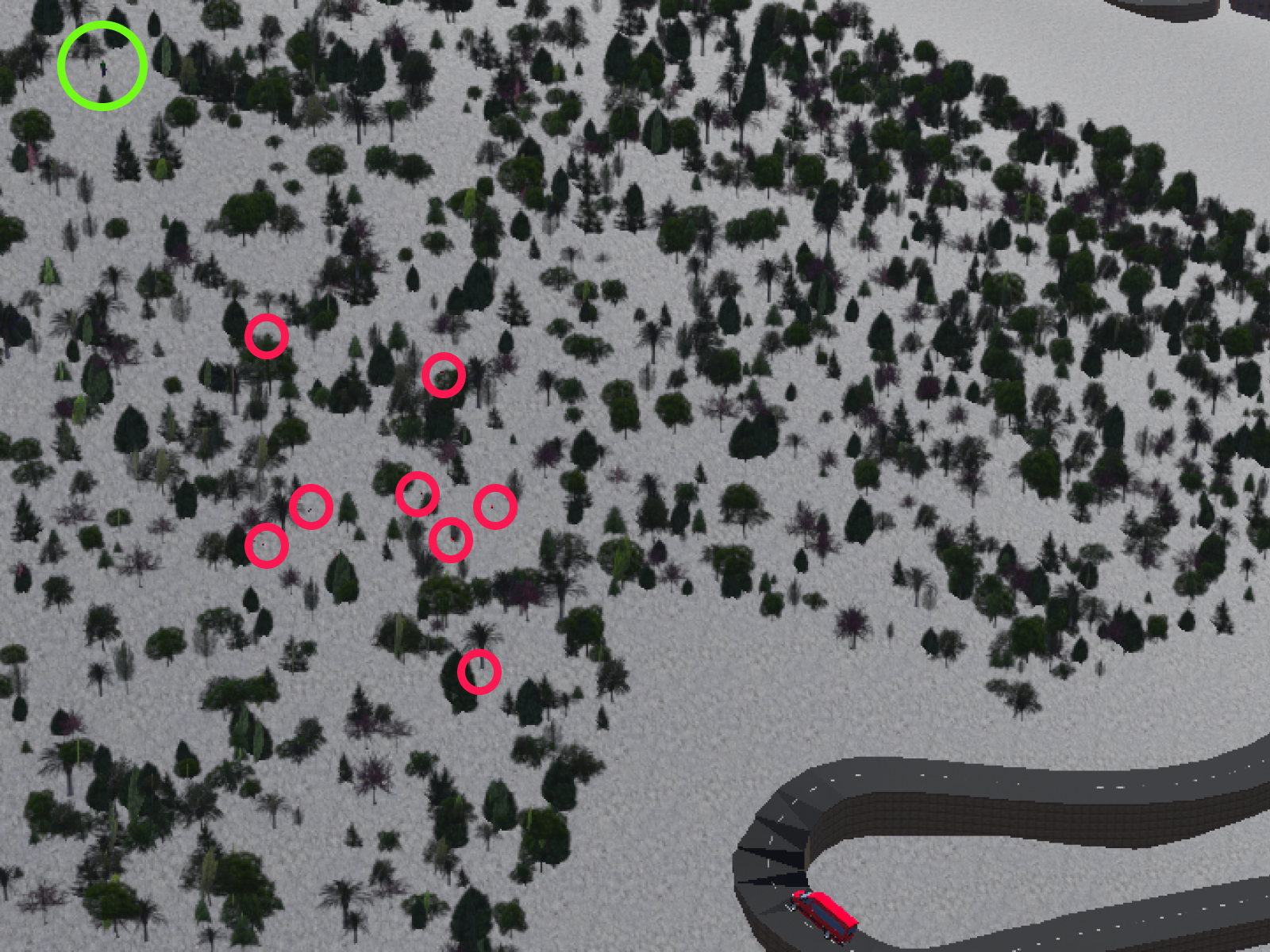}
\caption{Search with distributed waypoints}
\label{fig:app_case2_3b}
\end{subfigure}
\hfill
\begin{subfigure}{.32\linewidth}
    \includegraphics[width=\linewidth]{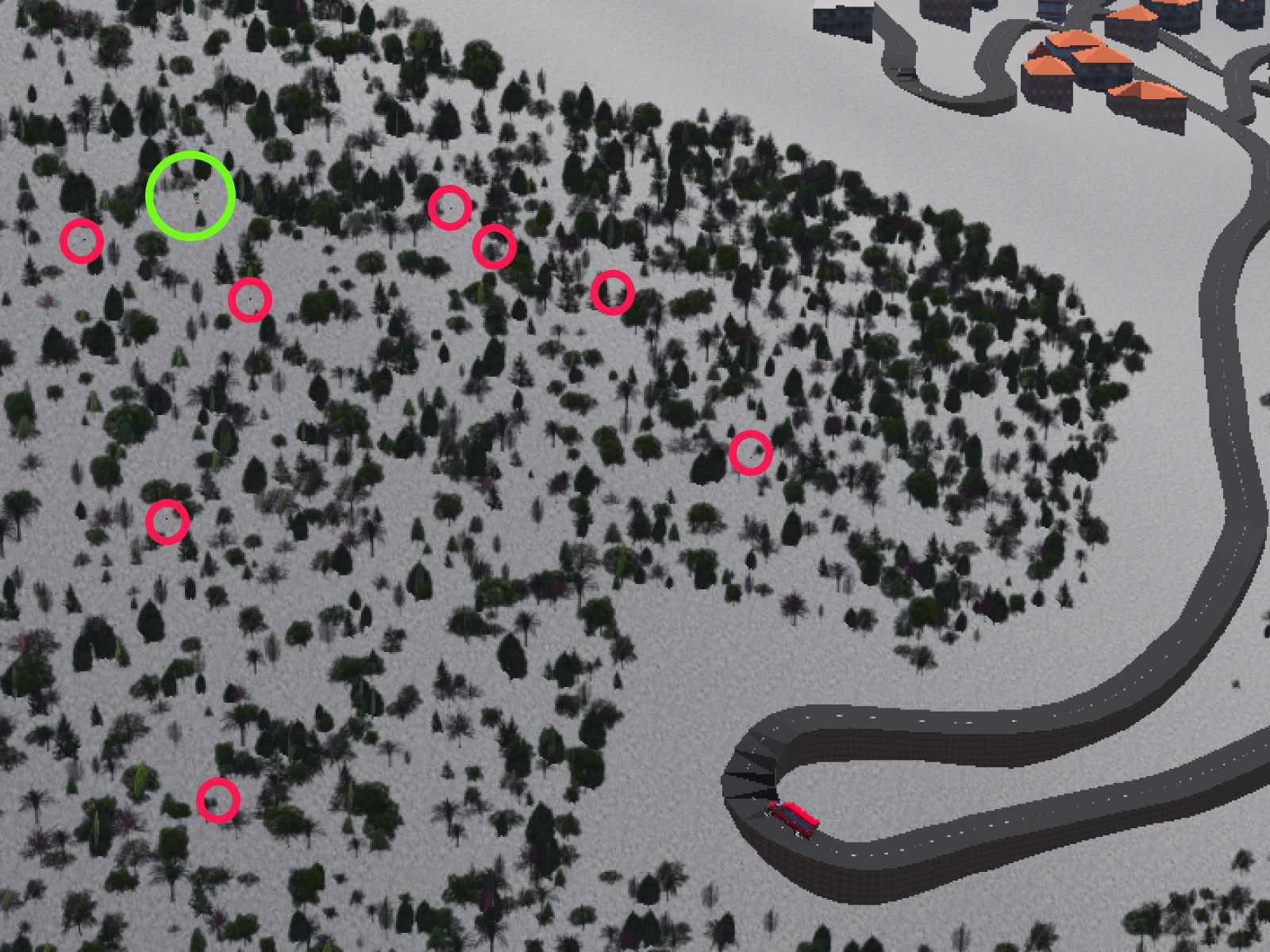}
    \caption{Expanded search region}
\label{fig:app_case2_3c}
\end{subfigure}
\caption{Search-and-rescue Scenario~3 with the target located at a farther distance.
In this scenario, eight drones are deployed.
(a) Initial deployment of the rescue vehicle and the target.
(b) Ongoing search process with distributed waypoint exploration within the initial search region.
(c) Search region expansion through automatic correction, with the drones adapting their exploration accordingly.
}
\label{fig:app_figure5}
\end{figure*}

\section{Prompts for Users and LLM-supervisor}
This section summarizes the prompts used in the scenarios described above.

\begin{tcolorbox}[
    breakable, 
    enhanced,  
    colback=gray!15!white,      
    colframe=gray!50!black,    
    title={Motion Descriptor},        
    center title,              
    rounded corners,           
]

You are an instruction parser for a multi-agent control system.

Rules:

1. "mode":
\begin{itemize}[leftmargin=0pt, itemsep=0pt, topsep=0pt, partopsep=0pt, parsep=0pt]
\item[-] "track" if the user mentions tracking, following, escorting, or maintaining formation around moving targets.
\item[-] "search" if the user mentions searching, scanning, exploring, or patrolling.
\item[-] "stationary" otherwise.
\end{itemize}

2. "tracking":
\begin{itemize}[leftmargin=0pt, itemsep=0pt, topsep=0pt, partopsep=0pt, parsep=0pt]
\item[-] true if mode is "track".
\item[-] false otherwise.
\end{itemize}

3. Each "group" corresponds to one target mentioned by the user (e.g. car1, car2).

4. "formation":
\begin{itemize}[leftmargin=0pt, itemsep=0pt, topsep=0pt, partopsep=0pt, parsep=0pt]
\item[-] Use the shape mentioned by the user.
\item[-] Use "grid" if not specified.
\end{itemize}

5. "even\_split":
\begin{itemize}[leftmargin=0pt, itemsep=0pt, topsep=0pt, partopsep=0pt, parsep=0pt]
\item[-] true only if the user explicitly says "evenly", "equally", or "balanced".
\item[-] false otherwise.
\end{itemize}

6. "spacing":
\begin{itemize}[leftmargin=0pt, itemsep=0pt, topsep=0pt, partopsep=0pt, parsep=0pt]
\item[-] Use the number if provided.
\item[-] Default to 2 if not specified.
\end{itemize}

Output JSON only. No extra text.
\end{tcolorbox}

\begin{tcolorbox}[
    breakable, 
    enhanced,  
    colback=gray!15!white,      
    colframe=gray!50!black,    
    title={Formation Instruction},        
    center title,              
    rounded corners,           
]

You are a formation geometry generator.

Your task is to output 3D formation offsets (x,y,z) for N drones,
relative to the formation center at (0,0) and at a height offset z above the ground.

Input:
\begin{itemize}[leftmargin=0pt, itemsep=0pt, topsep=0pt, partopsep=0pt, parsep=0pt]
  \item[-] Formation shape (e.g., circle, square, grid, cross, spiral).
  \item[-] Number of drones N.
  \item[-] Optional spacing (meters), default is 1.0 meter.
  \item[-] Optional height offset z (meters above ground), default is 0.
\end{itemize}

Rules:
\begin{itemize}[leftmargin=0pt, itemsep=0pt, topsep=0pt, partopsep=0pt, parsep=0pt]
\item[-] Output exactly N points.
\item[-] Points are relative offsets from the formation center (0,0).
\item[-] z represents the vertical offset above the ground.
\item[-] Use simple geometry consistent with the requested shape.
\item[-] If spacing is given, neighboring drones should be approximately spacing meters apart.
\item[-] Do not place multiple drones at the same location.
\end{itemize}

Output format (CSV only, no extra text):

id,x,y,z\\
0,x0,y0,z0\\
1,x1,y1,z1\\
...

Only output the table.
\end{tcolorbox}

\begin{tcolorbox}[
    breakable, 
    enhanced,  
    colback=gray!15!white,      
    colframe=gray!50!black,    
    title={Auto-correction},        
    center title,              
    rounded corners,           
]

You are a multi-agent formation checker.

Goal:
Decide if the current drone formation meets the user’s intent or if it should be revised.

INPUT:\\
1) USER INSTRUCTION: {USER\_TEXT}
\begin{itemize}[leftmargin=0pt, itemsep=0pt, topsep=0pt, partopsep=0pt, parsep=0pt]
\item[-] Describes desired formation/behavior.
\end{itemize}

2) CURRENT FORMATION (CSV): {FEEDBACK\_CSV}

\begin{itemize}[leftmargin=0pt, itemsep=0pt, topsep=0pt, partopsep=0pt, parsep=0pt]
\item[-] One or multiple groups.
\item[-] Groups separated by lines starting with "\# group" or "---".
\item[-] Coordinates are 3D relative positions: id,x,y,z
\end{itemize}

CHECK:
Compare the user instruction with the current coordinates.
Consider:
\begin{itemize}[leftmargin=0pt, itemsep=0pt, topsep=0pt, partopsep=0pt, parsep=0pt]
\item[-] Formation shape (grid, circle, line, cluster)
\item[-] Group separation (if multiple targets implied)
\item[-] Rough symmetry/alignment
\end{itemize}
Small numeric deviations should NOT trigger revision.

OUTPUT (STRICT JSON ONLY):\\
\{\\
  "feedback": true/false,\\
  "reason": "Short explanation (1-2 sentences)"\\
\}

Examples:\\
{"feedback": false, "reason": "Grid matches user request."}\\
{"feedback": true, "reason": "One cluster but multiple groups requested."}
\end{tcolorbox}

\onecolumn
\section{Proof of \Cref{thm:ISS} and \Cref{cor:LLM_accuracy}}
\label{appendix:proof_main}
In this section, we offer the proof of \Cref{thm:ISS} and \Cref{cor:LLM_accuracy}.
\begin{proof}[Proof of \Cref{thm:ISS}]
\item
\paragraph{Step 1: Closed-loop error dynamics with exogenous targets.}
On $[t_k,t_{k+1})$, the incidence matrix $E_k$ is constant.
Differentiate the edge-error definition \eqref{eq:edge_error_revised2}:
\[
\dot e(t)=\dot z^r(t)-(E_k^\top\otimes I_d)\dot p(t).
\]
Using the stacked dynamics from \Cref{sec:inner_layer_revised2}, namely
$\dot p_a=u_a+d_a$ and $\dot p_b=v_b$, and the stacked neighbor-feedback controller
\eqref{eq:stacked_controller_revised2} (i.e., $u_a=(E_{a,k}\otimes I_d)e$), we obtain
\begin{align*}
\dot e(t)
&=\dot z^r(t)-(E_k^\top\otimes I_d)
\begin{bmatrix}
\dot p_a(t)\\ \dot p_b(t)
\end{bmatrix}
\\
&=\dot z^r(t)-(E_k^\top\otimes I_d)
\begin{bmatrix}
(E_{a,k}\otimes I_d)e(t)+d_a(t)\\ v_b(t)
\end{bmatrix}
\\
&=\dot z^r(t)-\big(E_k^\top S_a E_k\otimes I_d\big)e(t)-(E_k^\top\otimes I_d)
\begin{bmatrix}d_a(t)\\ v_b(t)\end{bmatrix}.
\end{align*}

To emphasize how exogenous target motion enters, note that by construction
$z^r(t)=(E_k^\top\otimes I_d)p^r(t)$ with $p^r(t)=[p_a^r(t);p_b(t)]$, hence
\[
\dot z^r(t)=(E_k^\top\otimes I_d)\begin{bmatrix}\dot p_a^r(t)\\ v_b(t)\end{bmatrix}.
\]
Substituting into $w(t)$ cancels the target-velocity term $v_b(t)$, yielding the simplified input
expression \eqref{eq:w_simplified}:
\[
w(t)=(E_{a,k}^\top\otimes I_d)\big(\dot p_a^r(t)-d_a(t)\big).
\]
Under \Cref{assumption:general}(A1), $\dot z^r(t)=0$ on $[t_k,t_{k+1})$, so \eqref{eq:compatibility} holds and
\[
w(t)=\dot z^r(t)-(E_k^\top\otimes I_d)d(t)=-(E_k^\top\otimes I_d)d(t).
\]
By \eqref{eq:compatibility}, this is equivalent to the simplified expression above, and in particular
$w(t)\in \mathrm{range}(E_{a,k}^\top\otimes I_d)$.

Thus, targets affect the error dynamics only through how the enforced drone reference $p_a^r$ evolves.

\item
\paragraph{Step 2: Invariance of the admissible subspace.}
From \eqref{eq:e_range}, we have
$e(t)\in \mathrm{range}(E_{a,k}^\top\otimes I_d)$ for all $t\in[t_k,t_{k+1})$.
From \eqref{eq:w_simplified}, we also have
$w(t)\in \mathrm{range}(E_{a,k}^\top\otimes I_d)$ for all $t$.
Therefore, the dynamics \eqref{eq:error_dyn} evolve on the invariant subspace
$\mathrm{range}(E_{a,k}^\top\otimes I_d)$.

\item
\paragraph{Step 3: Strict positivity of $L_{e,k}^a$ on $\mathrm{range}(E_{a,k}^\top)$.}
The matrix $L_{e,k}^a=E_{a,k}^\top E_{a,k}$ is symmetric positive semidefinite, with
\[
\ker(L_{e,k}^a)=\ker(E_{a,k}).
\]
Moreover, for any matrix $M$, $\mathrm{range}(M^\top)=(\ker(M))^\perp$.
Applying this with $M=E_{a,k}$ gives
\[
\mathrm{range}(E_{a,k}^\top)=(\ker(E_{a,k}))^\perp=(\ker(L_{e,k}^a))^\perp.
\]
Hence, the restriction of $L_{e,k}^a$ to $\mathrm{range}(E_{a,k}^\top)$ is positive definite.

Let the eigenvalues of $L_{e,k}^a$ be
\[
0=\mu_1=\cdots=\mu_q < \mu_{q+1}\le\cdots\le \mu_{m_k},
\]
where $q=\dim\ker(L_{e,k}^a)$.
Then, by definition, the smallest strictly positive eigenvalue is $\mu_{q+1}=\lambda_{\min}^+(L_{e,k}^a)=:\lambda_k$.
Consequently, for all $\xi\in \mathrm{range}(E_{a,k}^\top)$,
\begin{equation}
\xi^\top L_{e,k}^a\,\xi \;\ge\; \lambda_k \|\xi\|^2. \label{eq:rayleigh_k}
\end{equation}
Finally, since $E_{a,k}^\top E_{a,k}$ and $E_{a,k}E_{a,k}^\top$ share identical nonzero eigenvalues (squares of the nonzero
singular values of $E_{a,k}$), \eqref{eq:lambda_k_def} holds.

\item
\paragraph{Step 4: Semigroup contraction and variation-of-constants.}
Because $L_{e,k}^a\otimes I_d$ is symmetric, it is diagonalizable with nonnegative real eigenvalues.
On the invariant subspace $\mathrm{range}(E_{a,k}^\top\otimes I_d)$, the Rayleigh bound \eqref{eq:rayleigh_k} implies that
the semigroup $e^{-(L_{e,k}^a\otimes I_d)t}$ contracts at rate at least $\lambda_k$:
\begin{equation}
\|e^{-(L_{e,k}^a\otimes I_d)t}x\|\le e^{-\lambda_k t}\|x\|,
\qquad \forall x\in\mathrm{range}(E_{a,k}^\top\otimes I_d),\ \forall t\ge 0. \label{eq:semigroup_bound_k}
\end{equation}
Applying the variation-of-constants formula to \eqref{eq:error_dyn} over $[t_k,t]$ yields
\[
e(t)=e^{-(L_{e,k}^a\otimes I_d)(t-t_k)}e(t_k^+)
+\int_{t_k}^{t} e^{-(L_{e,k}^a\otimes I_d)(t-s)}w(s)\,ds.
\]
Taking norms and using \eqref{eq:semigroup_bound_k} gives \eqref{eq:ISS_bound}.
If $\|w(s)\|\le \bar w$ on $[t_k,t_{k+1})$, then
\[
\int_{t_k}^{t}e^{-\lambda_k(t-s)}\|w(s)\|\,ds
\le \bar w\int_{t_k}^{t}e^{-\lambda_k(t-s)}ds
=\frac{\bar w}{\lambda_k}\big(1-e^{-\lambda_k(t-t_k)}\big),
\]
which implies \eqref{eq:ISS_uniform}.

\item
\paragraph{Step 5: Jump at supervision times.}
If the radius-induced graph does not change across $t_k$, then $E_k$ is unchanged and $p(t)$ is continuous while $z^r$ may jump.
From \eqref{eq:edge_error_revised2},
\[
e(t_k^+)-e(t_k^-)=\big(z^r(t_k^+)-z(t_k)\big)-\big(z^r(t_k^-)-z(t_k)\big)=z^r(t_k^+)-z^r(t_k^-)=\Delta z_k^r,
\]
which yields \eqref{eq:jump}. If the radius-induced graph changes at $t_k$, then $E_k$ changes and the post-update error
is reinitialized in the new edge coordinates as described earlier; this concludes the proof.
\end{proof}

We then offer the proof for \Cref{cor:LLM_accuracy}.

\begin{proof}
For any $t\in[t_k,t_{k+1})$, by triangle inequality,
\[
\|z(t)-z^\star(t)\|\le \|z(t)-z^r(t)\|+\|z^r(t)-z^\star(t)\|=\|e(t)\|+\|z^r(t)-z^\star(t)\|.
\]
By the assumed grounding bound, $\|z^r(t)-z^\star(t)\|\le \varepsilon_z$.
By \eqref{eq:ISS_uniform} with $\|w(t)\|\le \bar w$,
\[
\|e(t_{k+1}^-)\|\le e^{-\lambda_k\Delta}\|e(t_k^+)\|+\frac{\bar w}{\lambda_k}(1-e^{-\lambda_k\Delta}).
\]
Thus
\[
\|z(t_{k+1}^-)-z^\star(t_{k+1}^-)\|
\le e^{-\lambda_k\Delta}\|e(t_k^+)\|+\frac{\bar w}{\lambda_k}(1-e^{-\lambda_k\Delta})+\varepsilon_z.
\]
Requiring the right-hand side to be $\le\delta_z$ and rearranging yields exactly
$\|e(t_k^+)\|\le \eta^{(z)}_{\max}(\delta_z,\Delta,\varepsilon_z)$.
\end{proof}
\section{Horizon-wide error bounds under stochastic LLM verification}
\label{sec:horizon_bounds}

Recall that $z^\star(t)$ denotes the \emph{ground-truth} edge-relative reference consistent with the human intent.
Define the ground-truth edge error
\[
e_{\mathrm{gt}}(t):=z(t)-z^\star(t).
\]
Then, for all $t$,
\begin{equation}
\label{eq:tri_decomp_edge}
\|e_{\mathrm{gt}}(t)\|
\le \|e(t)\|+\|z^r(t)-z^\star(t)\|.
\end{equation}

We extend \Cref{assumption:general} to a long horizon. 
\begin{assumption}
    \label{assumption:long_horizon}
    \begin{enumerate}
    \item[(B1)] \textbf{Relative reference held between checks.}
    \Cref{assumption:general}(A1) holds on all intervals $[t_k,t_{k+1})$.
    \item[(B2)] \textbf{Uniform contraction.} The neighbor graph does not change across any $t_k$ and
    \[
    \|z^r(t_k^+)-z^r(t_k^-)\|\le \bar J_z,\qquad \forall k,
    \]
    for some $\bar J_z<\infty$. Since the incidence matrix $E_k$ and edge Laplacian $L_{e,k}^a$ will stay constant, we can denote $\underline\lambda = \lambda_k$ for all $k$.
    \item[(B3)] \textbf{Bounded specification gap in relative coordinates.}
    There exist $0\le \epsilon_{\mathsf C}\le \epsilon_{\mathsf W}<\infty$ such that, for all $k$,
    \[
    \sup_{t\in[t_k,t_{k+1})}\|z^r(t)-z^\star(t)\|
    \le
    \begin{cases}
    \epsilon_{\mathsf C}, & S_k=\mathsf C,\\
    \epsilon_{\mathsf W}, & S_k=\mathsf W.
    \end{cases}
    \]
\end{enumerate}
\end{assumption}
The above assumption holds when the user's intention does not change in the time horizon, and the environment does not change rapidly. In this case, the formation of the drones will stay relatively consistent throughout the entire time horizon, and the neighbor relationship will also remain constant. When the user's intention changes or the environment changes rapidly, the formation of the drones will be significantly impacted, making the total error difficult to bound. For example, if the original user intention is to form a grid around a target, and then the user decides to form a circle around another target very far away, the error incurred in this process will significantly impact the final regret bound. \Cref{assumption:long_horizon} is to prevent this from happening. 

\begin{theorem}[Horizon-wide \emph{relative} tracking bound under stochastic LLM verification]
\label{thm:horizon_wide_bound_relative}
Under \Cref{assumption:long_horizon}, define $\eta_k:=\|e(t_k^+)\|$ and $\alpha:=e^{-\underline\lambda\Delta}$.
Let $\bar w := \sup_{t}\| (E_k^\top\otimes I_d)d(t)\|$ on each interval; under (B1)--(B2) one may take
$\bar w \le \|E_k^\top\otimes I_d\|\bar d$ to be a constant.
Then:

\smallskip
\noindent\textbf{(i) Uniform bound on edge tracking to the enforced relative reference.}
For all $k\ge 0$,
\begin{equation}
\label{eq:eta_recursion_edge}
\eta_{k+1}
\le
\alpha\,\eta_k
+\frac{\bar w}{\underline\lambda}(1-\alpha)
+\bar J_z.
\end{equation}
Hence
\begin{equation}
\label{eq:eta_closed_form_edge}
\eta_k
\le
\alpha^k\eta_0
+\Bigl(1-\alpha^k\Bigr)\Bigl(\frac{\bar w}{\underline\lambda}+\frac{\bar J_z}{1-\alpha}\Bigr),
\end{equation}
and for any $t\in[t_k,t_{k+1})$,
\begin{equation}
\label{eq:e_interval_bound}
\|e(t)\|
\le
e^{-\underline\lambda(t-t_k)}\eta_k
+\frac{\bar w}{\underline\lambda}\Bigl(1-e^{-\underline\lambda(t-t_k)}\Bigr).
\end{equation}

\smallskip
\noindent\textbf{(ii) Expected ground-truth relative error over a finite horizon.}
For $K\ge 1$ and horizon $T_f=K\Delta_t$,
\begin{equation}
\label{eq:finite_horizon_bound_relative}
\frac{1}{T_f}\int_0^{T_f}\mathbb E\|e_{\mathrm{gt}}(t)\|\,dt
\;\le\;
\underbrace{\frac{\bar w}{\underline\lambda}+\frac{\bar J_z}{\underline\lambda\Delta_t}}_{\text{tracking-to-LLM term}}
\;+\;
\underbrace{\epsilon_{\mathsf C}+(\epsilon_{\mathsf W}-\epsilon_{\mathsf C})\cdot
\frac{1}{K}\sum_{k=0}^{K-1}p_k}_{\text{specification-gap term}}
\;+\;
\frac{\max\{0,\eta_0-\eta_\infty\}}{K\,\underline\lambda\,\Delta_t},
\end{equation}
where $\eta_\infty:=\frac{\bar w}{\underline\lambda}+\frac{\bar J_z}{1-e^{-\underline\lambda\Delta_t}}$ and
$p_k=\mathbb P(S_k=\mathsf W)$.

Moreover, for the two-state chain in this section,
\[
p_k=\pi_{\mathsf W}+(p_0-\pi_{\mathsf W})(1-a-b)^k,
\qquad
\frac{1}{K}\sum_{k=0}^{K-1}p_k
=
\pi_{\mathsf W}+(p_0-\pi_{\mathsf W})\frac{1-(1-a-b)^K}{K(a+b)}.
\]

\smallskip
\noindent\textbf{(iii) Long-run average bound.}
As $T_f\to\infty$,
\begin{equation}
\label{eq:asymptotic_bound_relative}
\limsup_{T_f\to\infty}\frac{1}{T_f}\int_0^{T_f}\mathbb E\|e_{\mathrm{gt}}(t)\|\,dt
\;\le\;
\frac{\bar w}{\underline\lambda}+\frac{\bar J_z}{\underline\lambda\Delta_t}
+\epsilon_{\mathsf C}
+\pi_{\mathsf W}\,(\epsilon_{\mathsf W}-\epsilon_{\mathsf C}).
\end{equation}
\end{theorem}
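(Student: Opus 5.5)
Part (i) follows by chaining \Cref{thm:ISS} with the jump relation \eqref{eq:jump}. Under (B1)--(B2), every interval $[t_k,t_{k+1})$ satisfies \Cref{assumption:general} with the same $E_k$ and rate $\underline\lambda$. Applying \eqref{eq:ISS_uniform} with initial condition $\eta_k=\|e(t_k^+)\|$ gives \eqref{eq:e_interval_bound} directly. Evaluating at $t\to t_{k+1}^-$ yields $\|e(t_{k+1}^-)\|\le \alpha\eta_k+\frac{\bar w}{\underline\lambda}(1-\alpha)$. Since the graph does not change across $t_{k+1}$, \eqref{eq:jump} adds at most $\bar J_z$, which is \eqref{eq:eta_recursion_edge}. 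Unrolling the affine recursion $\eta_{k+1}\le\alpha\eta_k+c$ with $c=\frac{\bar w}{\underline\lambda}(1-\alpha)+\bar J_z$ gives $\eta_k\le\alpha^k\eta_0+c\frac{1-\alpha^k}{1-\alpha}$, which is exactly \eqref{eq:eta_closed_form_edge}, with fixed point $\eta_\infty=c/(1-\alpha)$.

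For part (ii), I would take expectations in \eqref{eq:tri_decomp_edge} and split the time-average into a tracking part and a specification part. The specification part is $\le \frac{1}{K}\sum_k\bigl(\epsilon_{\mathsf C}+(\epsilon_{\mathsf W}-\epsilon_{\mathsf C})p_k\bigr)$ by (B3), conditioning on $S_k$ interval by interval. The tracking part is handled deterministically. Writing $\eta_k\le \eta_\infty+\alpha^k(\eta_0-\eta_\infty)^+$, we have
\[
\int_{t_k}^{t_{k+1}}\|e\|\,dt\le \eta_k\frac{1-\alpha}{\underline\lambda}+\frac{\bar w}{\underline\lambda}\Bigl(\Delta_t-\frac{1-\alpha}{\underline\lambda}\Bigr).
\]
Substituting $\eta_\infty$ gives per-interval average at most $\frac{\bar w}{\underline\lambda}+\frac{\bar J_z}{\underline\lambda\Delta_t}$ plus the transient $\alpha^k(\eta_0-\eta_\infty)^+\frac{1-\alpha}{\underline\lambda\Delta_t}$. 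This uses $\frac{\bar w}{\underline\lambda}\frac{1-\alpha}{\underline\lambda\Delta_t}$ canceling, and $\frac{\bar J_z}{1-\alpha}\cdot\frac{1-\alpha}{\underline\lambda\Delta_t}=\frac{\bar J_z}{\underline\lambda\Delta_t}$. Summing the geometric transient over $k$ gives $\sum_k\alpha^k(1-\alpha)\le 1$, hence the term $\frac{(\eta_0-\eta_\infty)^+}{K\underline\lambda\Delta_t}$. Dividing by $T_f=K\Delta_t$ gives \eqref{eq:finite_horizon_bound_relative}.

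The two-state chain formulas are standard. With transition probabilities $a=\mathbb P(\mathsf C\to\mathsf W)$ and $b=\mathbb P(\mathsf W\to\mathsf C)$, we get $p_{k+1}=(1-a-b)p_k+a$, stationary value $\pi_{\mathsf W}=a/(a+b)$, and the geometric sum of $(1-a-b)^k$ gives the average. Part (iii) follows by letting $K\to\infty$: the transient term and the chain transient vanish when $|1-a-b|<1$, i.e. for a nondegenerate chain.

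The main obstacle is the bookkeeping in (ii). The integral of the ISS bound must reproduce exactly the coefficient $\frac{\bar J_z}{\underline\lambda\Delta_t}$ rather than $\frac{\bar J_z}{1-\alpha}$, so the $\eta_\infty$ contribution has to be multiplied by $\frac{1-\alpha}{\underline\lambda\Delta_t}$ before averaging. The positive part $(\eta_0-\eta_\infty)^+$ is needed to discard negative transients safely.
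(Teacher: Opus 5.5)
Your proposal is correct and follows the paper's proof. Part (i) chains the uniform ISS bound with the jump relation and unrolls the affine recursion exactly as the paper does. In part (ii) you likewise split via the triangle inequality, bound the specification-gap term through (B3), and integrate the interval bound to handle the tracking term.

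The only difference is bookkeeping for the transient. You substitute the closed form $\eta_k\le\eta_\infty+\alpha^k(\eta_0-\eta_\infty)_+$ from (i) directly. The paper instead introduces $\xi_k=(\eta_k-\bar w/\underline\lambda)_+$, derives $\xi_{k+1}\le\alpha\xi_k+\bar J_z$, and then converts $(\xi_0-\bar J_z/(1-\alpha))_+$ into $(\eta_0-\eta_\infty)_+$. Both routes give identical constants, and yours takes one fewer step.
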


\begin{proof}
\item
\paragraph{Step 1: Bounding $w(t)$ under a relative hold.}
Under \Cref{assumption:long_horizon} (B1), $\dot z^r(t)=0$ on $[t_k,t_{k+1})$, hence by \eqref{eq:w_def}, $w(t)=-(E_k^\top\otimes I_d)d(t)$ and thus $\|w(t)\|\le \bar w$ on the interval.

\item
\paragraph{Step 2: One-interval ISS bound and the discrete recursion.}
Given \eqref{eq:ISS_uniform} and \Cref{assumption:long_horizon} (B2), we have
\[
\|e(t)\|
\le
e^{-\underline\lambda(t-t_k)}\|e(t_k^+)\|
+\frac{\bar w}{\underline\lambda}\bigl(1-e^{-\underline\lambda(t-t_k)}\bigr),
\]
which is \eqref{eq:e_interval_bound}. Evaluating at $t=t_{k+1}^-$ gives
\[
\|e(t_{k+1}^-)\|
\le
\alpha\,\eta_k + \frac{\bar w}{\underline\lambda}(1-\alpha).
\]
\Cref{assumption:long_horizon} (B2) and \eqref{eq:jump} implies
$e(t_{k+1}^+)=e(t_{k+1}^-)+\Delta z^r_{k+1}$ with $\Delta z^r_{k+1}:=z^r(t_{k+1}^+)-z^r(t_{k+1}^-)$, hence
\[
\eta_{k+1}=\|e(t_{k+1}^+)\|
\le
\|e(t_{k+1}^-)\|+\|\Delta z^r_{k+1}\|
\le
\alpha\,\eta_k+\frac{\bar w}{\underline\lambda}(1-\alpha)+\bar J_z,
\]
which is \eqref{eq:eta_recursion_edge}. Unrolling \eqref{eq:eta_recursion_edge} yields
\eqref{eq:eta_closed_form_edge} exactly as for a scalar affine contraction.

\item
\paragraph{Step 3: Finite-horizon average bound.}
From \eqref{eq:tri_decomp_edge} and Tonelli's theorem,
\[
\frac{1}{T_f}\int_0^{T_f}\mathbb E\|e_{\mathrm{gt}}(t)\|\,dt
\le
\underbrace{\frac{1}{T_f}\int_0^{T_f}\mathbb E\|e(t)\|\,dt}_{(\mathrm I)}
+
\underbrace{\frac{1}{T_f}\int_0^{T_f}\mathbb E\|z^r(t)-z^\star(t)\|\,dt}_{(\mathrm{II})}.
\]

\emph{Term (II).} By \Cref{assumption:long_horizon} (B3), for all $t\in[t_k,t_{k+1})$,
$\|z^r(t)-z^\star(t)\|\le \epsilon_{\mathsf C}\mathbf 1_{\{S_k=\mathsf C\}}+\epsilon_{\mathsf W}\mathbf 1_{\{S_k=\mathsf W\}}$.
Taking expectation gives
$\mathbb E\|z^r(t)-z^\star(t)\|\le \epsilon_{\mathsf C}+(\epsilon_{\mathsf W}-\epsilon_{\mathsf C})p_k$,
and integrating over $[0,T_f]$ yields the specification-gap term in \eqref{eq:finite_horizon_bound_relative}.

\emph{Term (I).} The bound \eqref{eq:e_interval_bound} implies, for $t\in[t_k,t_{k+1})$,
\[
\|e(t)\|\le \frac{\bar w}{\underline\lambda}
+ e^{-\underline\lambda(t-t_k)}\Bigl(\eta_k-\frac{\bar w}{\underline\lambda}\Bigr)_+.
\]
\emph{Term (I).}
Fix $k\in\{0,\dots,K-1\}$ and consider $t\in[t_k,t_{k+1})$.
Starting from \eqref{eq:e_interval_bound} and rearranging,
\begin{align}
\|e(t)\|
&\le e^{-\underline\lambda(t-t_k)}\eta_k
+\frac{\bar w}{\underline\lambda}\Bigl(1-e^{-\underline\lambda(t-t_k)}\Bigr)\nonumber\\
&= \frac{\bar w}{\underline\lambda}
+e^{-\underline\lambda(t-t_k)}\Bigl(\eta_k-\frac{\bar w}{\underline\lambda}\Bigr)\nonumber\\
&\le \frac{\bar w}{\underline\lambda}
+e^{-\underline\lambda(t-t_k)}\Bigl(\eta_k-\frac{\bar w}{\underline\lambda}\Bigr)_+.
\label{eq:e_bound_pospart}
\end{align}
Integrating \eqref{eq:e_bound_pospart} over $[t_k,t_{k+1})$ and using
$\int_{t_k}^{t_{k+1}} e^{-\underline\lambda(t-t_k)}dt=\int_0^\Delta e^{-\underline\lambda s}ds
=\frac{1-e^{-\underline\lambda\Delta_t}}{\underline\lambda}$ yields the \emph{pathwise} bound
\begin{equation}
\label{eq:interval_integral_bound}
\int_{t_k}^{t_{k+1}}\|e(t)\|dt \le
\Delta\frac{\bar w}{\underline\lambda}
+\frac{1-\alpha}{\underline\lambda}\Bigl(\eta_k-\frac{\bar w}{\underline\lambda}\Bigr)_+,
\qquad \alpha:=e^{-\underline\lambda\Delta}.
\end{equation}
Summing \eqref{eq:interval_integral_bound} over $k=0,\dots,K-1$, dividing by $T_f=K\Delta$,
and taking expectations gives
\begin{equation}
\label{eq:I_preavg}
(\mathrm I)
=\frac{1}{T_f}\int_0^{T_f}\mathbb E\|e(t)\|\,dt
\le
\frac{\bar w}{\underline\lambda}
+\frac{1-\alpha}{\underline\lambda\Delta}\cdot
\frac{1}{K}\sum_{k=0}^{K-1}\mathbb E\Bigl[\Bigl(\eta_k-\frac{\bar w}{\underline\lambda}\Bigr)_+\Bigr].
\end{equation}

It remains to upper bound the average of the positive parts.
Define the shifted nonnegative sequence
\[
\xi_k := \Bigl(\eta_k-\frac{\bar w}{\underline\lambda}\Bigr)_+.
\]
From the recursion \eqref{eq:eta_recursion_edge},
\[
\eta_{k+1}
\le
\alpha\eta_k+\frac{\bar w}{\underline\lambda}(1-\alpha)+\bar J_z,
\]
subtracting $\bar w/\underline\lambda$ from both sides gives
\begin{equation}
\label{eq:shifted_eta}
\eta_{k+1}-\frac{\bar w}{\underline\lambda}
\le
\alpha\Bigl(\eta_k-\frac{\bar w}{\underline\lambda}\Bigr)+\bar J_z.
\end{equation}
Taking positive parts on both sides and using $(\alpha x+\bar J_z)_+\le \alpha x_+ +\bar J_z$
for $\alpha\in[0,1]$ and $\bar J_z\ge 0$ yields the \emph{nonnegative affine recursion}
\begin{equation}
\label{eq:xi_recursion}
\xi_{k+1}\le \alpha \xi_k+\bar J_z.
\end{equation}
Unrolling \eqref{eq:xi_recursion} gives, for all $k\ge 0$,
\begin{equation}
\label{eq:xi_closed}
\xi_k
\le
\alpha^k\xi_0+\bar J_z\sum_{i=0}^{k-1}\alpha^i
=
\alpha^k\xi_0+\frac{\bar J_z}{1-\alpha}(1-\alpha^k).
\end{equation}
Averaging \eqref{eq:xi_closed} over $k=0,\dots,K-1$ and using
$\frac{1}{K}\sum_{k=0}^{K-1}\alpha^k=\frac{1-\alpha^K}{K(1-\alpha)}$ yields
\begin{align}
\frac{1}{K}\sum_{k=0}^{K-1}\xi_k
&\le
\frac{\bar J_z}{1-\alpha}
+\Bigl(\xi_0-\frac{\bar J_z}{1-\alpha}\Bigr)\frac{1-\alpha^K}{K(1-\alpha)}\nonumber\\
&\le
\frac{\bar J_z}{1-\alpha}
+\Bigl(\xi_0-\frac{\bar J_z}{1-\alpha}\Bigr)_+\frac{1-\alpha^K}{K(1-\alpha)}
\;\le\;
\frac{\bar J_z}{1-\alpha}
+\frac{1}{K(1-\alpha)}\Bigl(\xi_0-\frac{\bar J_z}{1-\alpha}\Bigr)_+.
\label{eq:xi_avg_bound}
\end{align}
Taking expectations in \eqref{eq:xi_avg_bound} preserves the inequality. Substituting
\eqref{eq:xi_avg_bound} into \eqref{eq:I_preavg} gives
\begin{equation}
\label{eq:I_final}
(\mathrm I)
\le
\frac{\bar w}{\underline\lambda}
+\frac{1-\alpha}{\underline\lambda\Delta_t}
\left(
\frac{\bar J_z}{1-\alpha}
+\frac{1}{K(1-\alpha)}\Bigl(\xi_0-\frac{\bar J_z}{1-\alpha}\Bigr)_+
\right)
=
\frac{\bar w}{\underline\lambda}
+\frac{\bar J_z}{\underline\lambda\Delta_t}
+\frac{1}{K\underline\lambda\Delta_t}\Bigl(\xi_0-\frac{\bar J_z}{1-\alpha}\Bigr)_+.
\end{equation}
Finally, since $\xi_0=(\eta_0-\bar w/\underline\lambda)_+$ and $\bar J_z/(1-\alpha)\ge 0$,
one has the identity
\[
\Bigl(\xi_0-\frac{\bar J_z}{1-\alpha}\Bigr)_+
=
\Bigl(\eta_0-\frac{\bar w}{\underline\lambda}-\frac{\bar J_z}{1-\alpha}\Bigr)_+.
\]
With $\eta_\infty:=\frac{\bar w}{\underline\lambda}+\frac{\bar J_z}{1-\alpha}
=\frac{\bar w}{\underline\lambda}+\frac{\bar J_z}{1-e^{-\underline\lambda\Delta}}$,
this becomes $(\eta_0-\eta_\infty)_+$, and \eqref{eq:I_final} matches the tracking term used in
\eqref{eq:finite_horizon_bound_relative}.

Combining (I) and (II) yields \eqref{eq:finite_horizon_bound_relative}.

Finally, the closed form for $p_k$ and $\frac1K\sum_{k=0}^{K-1}p_k$ is follows from the scalar recursion $p_{k+1}=a+(1-a-b)p_k$.

\item
\paragraph{Step 4: Long-run bound.}
Since $|1-a-b|<1$, we have $\frac1K\sum_{k=0}^{K-1}p_k\to \pi_{\mathsf W}$ and
$\frac{(\eta_0-\eta_\infty)_+}{K\underline\lambda\Delta_t}\to 0$ as $K\to\infty$.
Taking $\limsup$ in \eqref{eq:finite_horizon_bound_relative} yields \eqref{eq:asymptotic_bound_relative}.
\end{proof}

\end{document}